\pdfoutput=1
\documentclass[11pt]{article}

\usepackage[letterpaper,margin=1in]{geometry}

\usepackage[utf8]{inputenc} \usepackage[T1]{fontenc}    \usepackage{url}            \usepackage{booktabs}       \usepackage{amsfonts}       \usepackage{nicefrac}       \usepackage{microtype}      \usepackage{xcolor}         \usepackage{lipsum}
\usepackage{subfig}
\usepackage[shortlabels]{enumitem}
\usepackage{xspace}

\usepackage{tikz}
\usepackage{tabularx}
\usepackage{booktabs}  %
\usepackage{array}     %

\usepackage{amsthm,amsmath,amssymb,amsfonts,amssymb,mathtools}
\usepackage{xcolor}
\usepackage{empheq}
\usepackage{dsfont}
\usepackage{mathrsfs}
\usepackage{graphicx}
\usepackage{enumitem}
\usepackage{subfig}
\usepackage{comment}
\usepackage{framed}
\usepackage{float}
\usepackage{algorithm2e}
\usepackage{needspace}
\usepackage[noend]{algpseudocode}
\usepackage{thm-restate}
\usetikzlibrary{backgrounds}
\pgfdeclarelayer{background}
\pgfdeclarelayer{foreground}
\pgfsetlayers{background,main,foreground}

\SetAlCapFnt{\normalfont\small}\SetAlCapNameFnt{\unskip\itshape\small}

\definecolor[named]{ACMBlue}{cmyk}{1,0.1,0,0.1}
\definecolor[named]{ACMYellow}{cmyk}{0,0.16,1,0}
\definecolor[named]{ACMOrange}{cmyk}{0,0.42,1,0.01}
\definecolor[named]{ACMRed}{cmyk}{0,0.90,0.86,0}
\definecolor[named]{ACMLightBlue}{cmyk}{0.49,0.01,0,0}
\definecolor[named]{ACMGreen}{cmyk}{0.20,0,1,0.19}
\definecolor[named]{ACMPurple}{cmyk}{0.55,1,0,0.15}
\definecolor[named]{ACMDarkBlue}{cmyk}{1,0.58,0,0.21}

\usepackage[colorlinks,citecolor=blue,linkcolor=magenta,bookmarks=true]{hyperref}
\usepackage[nameinlink]{cleveref}
\crefname{ineq}{Inequality}{Inequality}
\creflabelformat{ineq}{#2{\upshape(#1)}#3}
\crefname{sub}{Subsection}{Subsection}
\creflabelformat{Subsection}{#2{\upshape(#1)}#3}
\crefname{sdp}{SDP}{SDP}
\creflabelformat{sdp}{#2{\upshape(#1)}#3}
\crefname{lp}{LP}{LP}
\creflabelformat{lp}{#2{\upshape(#1)}#3}

\newtheorem{theorem}{Theorem}[section]
\newtheorem{lemma}[theorem]{Lemma}

\newtheorem{corollary}[theorem]{Corollary}
\newtheorem{claim}[theorem]{Claim}

\newtheorem{definition}[theorem]{Definition}

\newtheorem{fact}[theorem]{Fact}

\theoremstyle{definition}

\newtheorem{remark}[theorem]{Remark}

\crefname{claim}{claim}{claims}

\newcommand{\lp}{\left}
\newcommand{\rp}{\right}
\newcommand\norm[1]{\left\| #1 \right\|}

\renewcommand\vec[1]{\mathbf{#1}}
\DeclareMathOperator*{\pr}{\mathbf{Pr}}
\renewcommand{\Pr}{\mathbf{Pr}}
\DeclareMathOperator*{\E}{\mathbf{E}}

\def\d{\mathrm{d}}

\newcommand{\dist}{\mathrm{dist}}
\newcommand{\sinc}{\operatorname{sinc}}

\newcommand{\Var}{\operatorname{Var}}

\newcommand{\R}{\mathbb{R}}

\newcommand{\Z}{\mathbb{Z}}

\newcommand{\C}{\mathbb{C}}
\newcommand{\eps}{\epsilon}
\newcommand{\dtv}{d_{\mathrm TV}}
\newcommand{\poly}{\mathrm{poly}}

\newcommand{\Ind}{\mathds{1}}

\newcommand{\x}{\vec x}

\newcommand{\hide}[1]{}

\newcommand{\cN}{\mathcal{N}}

\newcommand{\eqdef}{\coloneqq}

\newcommand{\abs}[1]{\lvert#1\rvert}

\usepackage{multirow}

\def\colorful{1}

\ifnum\colorful=1

\else

\fi

\allowdisplaybreaks

\title{Sample Complexity Bounds for Robust Mean Estimation with 
Mean-Shift Contamination}

\author{
Ilias Diakonikolas\thanks{Supported by NSF Medium Award CCF-2107079, 
ONR award number N00014-25-1-2268, 
and an H.I. Romnes Faculty Fellowship.}\\
UW Madison\\
{\tt ilias@cs.wisc.edu}\\
\and
Giannis Iakovidis\thanks{Supported by ONR award number N00014-25-1-2268.}\\
UW Madison\\
{\tt iakovidis@wisc.edu}\\
\and
Daniel M. Kane\thanks{Supported by NSF Medium Award CCF-2107547.}\\
UC San Diego\\
{\tt dakane@ucsd.edu }
 \and
Sihan Liu\\
UC San Diego\\
{\tt sil046@ucsd.edu}\\
}

\date{}

\begin{document}

\maketitle

\begin{abstract}

We study the basic task of mean estimation in the presence of mean-shift contamination. In the 
mean-shift contamination model, an adversary is allowed to replace a small constant 
fraction of the clean samples by samples drawn 
from arbitrarily shifted versions of the base 
distribution. Prior work characterized the sample 
complexity of this task for the special cases of the 
Gaussian and Laplace distributions. Specifically, it 
was shown that consistent estimation is possible in 
these cases, a property that is provably impossible in 
Huber's contamination model. An open question 
posed in earlier work was to determine the sample 
complexity of mean estimation in the mean-shift 
contamination model for 
general base distributions. In this work, we study and essentially resolve this open question. Specifically, we show that, under mild spectral conditions on the characteristic function of the (potentially multivariate) base distribution, 
there exists a sample-efficient algorithm that estimates the target mean to any desired accuracy. 
We complement our upper bound with a qualitatively  matching sample complexity lower bound. 
Our techniques make critical use of Fourier analysis, and in particular introduce the notion of a Fourier witness as an essential ingredient of our upper and lower bounds.

\end{abstract}

\setcounter{page}{0}

\thispagestyle{empty}

\newpage

\section{Introduction} \label{sec:intro} 

Robust statistics~\cite{Huber09} is the field that studies the design of accurate estimators in the presence of data contamination.
This study is motivated by a range of practical applications, 
where the standard i.i.d.\ assumption holds only approximately. 
For example, the input data may be systematically manipulated 
by various sources or include out-of-distribution points; 
see, e.g.,~\cite{barreno2010security,BiggioNL12,SKL17-nips,TranLM18,DKK+19-sever,du2024does}.
Originating in the pioneering works of Tukey and Huber \cite{tukey1960survey, Huber64}, the field developed minimax-optimal robust estimators for classical problems.
A recent line of work in computer science has lead to a revival of 
robust statistics from an algorithmic viewpoint by providing polynomial-time estimators in high dimensions \cite{DKKLMS16, LaiRV16, diakonikolas2023algorithmic}.

%\vspace{-0.1cm}

The classical contamination model in the field of Robust Statistics, 
known as Huber's contamination model~\cite{Huber64},
is defined as follows. For an inlier distribution 
$D$ over $\R^d$ and a contamination proportion $\alpha \in (0,1/2)$,
each sample $x$ is drawn from $D$ with probability $1-\alpha$, 
and from an unknown (potentially arbitrary) distribution $Q$
with probability $\alpha$. Huber's model has been the main 
focus of prior work in the field, both from the information-theoretic
and the algorithmic standpoints. As 
the model allows for an arbitrary outlier distribution $Q$, 
it suffers from inherent information-theoretic limitations. 
Specifically, even for the basic task of estimating the mean of 
the simplest inlier distributions (e.g., univariate Gaussians),  
no estimator can achieve error better than $\Omega(\alpha)$---independent of the sample size.
That is, Huber's model does not allow for consistent estimators, 
i.e., estimators whose error becomes arbitrarily small as the sample 
size increases.

%
%
%
%
%
%
%

%\vspace{-0.1cm}

To achieve consistency, some assumptions on the structure of the 
outlier distribution $Q$ are necessary. In this work, we focus on
the basic task of robust mean estimation in 
a prominent data contamination model, known as the \emph{mean-shift} contamination, in which consistency may be possible. 
In the mean-shift contamination model, the outlier distribution is 
assumed to consist of arbitrary mean-shifts of the inlier distribution.
Formally, we have the following definition.

\begin{definition}[Mean-Shift Contamination Model]\label{def:cont}
Let $\alpha \in (0,1/2)$, let $D,Q$ be  distributions over $\R^d$ with $\E_{x\sim D}[x]=0$. 
An $\alpha$-mean-shift contaminated sample from $D_\mu$ is generated as follows:
\begin{enumerate}
\item With probability $1-\alpha$, output a \emph{clean} sample $x=\mu+y$ where $y\sim D$.
\item With probability $\alpha$, an adversary draws a \emph{shift} vector $z$ from $Q$, and outputs an \emph{outlier} sample $x=z+y$ where $y\sim D$.
\end{enumerate}
We denote by $D^{(\alpha)}_\mu$ the resulting observed distribution.

\end{definition}

The mean-shift contamination model 
has been intensively studied in recent years, 
in both the statistics~\cite{cai2010optimal,collier2019multidimensional,carpentier2021estimating,kotekal2025optimal} and computer science~\cite{li2023robust, diakonikolas2025efficient} literature. 
Beyond mean estimation, variants of the model have also been 
considered in the context of regression tasks~\cite{sardy2001robust,gannaz2007robust,mccann2007robust,she2011outlier}.
Historically, this model traces back to the multiple-hypothesis-testing literature, which treats the null parameters as unknown and thus estimates them prior to testing \cite{efron2004large,efron2007correlation,efron2008microarrays}.

Prior work has studied the sample and computational complexity of 
mean-shift contamination in two benchmark settings---Gaussian and Laplace base distributions---providing both information-theoretic~\cite{carpentier2021estimating,kotekal2025optimal} and algorithmic~\cite{li2023robust,diakonikolas2025efficient} results.
An immediate corollary of these works is that for 
these two prototypical base distributions, 
consistent estimation is indeed possible. Specifically, for the Gaussian case,~\cite{kotekal2025optimal} determined 
the minimax error rates and~\cite{diakonikolas2025efficient} gave the first polynomial-time estimator with near-minimax rates in high dimensions. 

Despite this recent progress, several fundamental questions regarding estimation in this model remain poorly understood 
when we depart from the Gaussian/Laplace regimes. {\em Under what conditions on the distribution $D$ is consistent mean estimation 
with mean-shift contamination possible? Can we qualitatively 
characterize the sample complexity of estimating the mean of a general 
distribution $D$ in this model?} Understanding these questions is of fundamental interest and has been 
posed as an open problem in prior work. 
Specifically, the work of \cite{kotekal2025optimal} 
states that ``an interesting problem is to prove a matching minimax 
lower bound for estimating the mean with a generic base distribution 
$D$.'' As our main contribution, we answer this question by providing a 
qualitative characterization of the sample complexity of this problem.

\subsection{Our Results} \label{ssec:results}

For a distribution $D$ with characteristic function $\phi_D$ (see \Cref{def:cf}),
we define the following 
quantity:  
\begin{equation} \label{eqn:delta}
\delta = \delta(\eps,\alpha, D) \eqdef \inf_{\|v\|\geq \eps} \sup_{\omega:\ \dist(\omega\cdot  v,\Z)\ge \alpha} \bigl|\phi_D(\omega)\bigr|.
\end{equation}
In words, $\delta$ is the worst-case, over all mean-shift directions $v$ with $\|v\|\ge \eps$, largest Fourier magnitude $|\phi_D(\omega)|$ over frequencies $\omega$ whose projection $\omega\cdot v$ stays at least $\alpha$ away from every integer. Intuitively, $\{\omega: \dist(\omega\cdot  v,\Z)\ge \alpha\}$
 is the set of frequencies which the adversary cannot fully corrupt.
As we will establish, the parameter $\delta$ characterizes the sample 
complexity of estimating the mean $\mu$ 
of the contaminated distribution $D^{(\alpha)}_{\mu}$.
Particularly, in our sample complexity 
lower bound (\Cref{lem:construction}), 
we show that the adversary can zero out the characteristic function  $\phi_{D^{(\alpha)}_{\mu}}$ throughout 
the set $\dist(\omega\cdot v,\mathbb{Z})\le \alpha$.
Conversely, in our sample complexity upper bound (\Cref{thm:upper}), 
we prove that every $\omega$ outside this set 
is suitable for identifying the difference 
between $\widehat{\mu}$ and $\mu$ when $\widehat{\mu}-\mu=v$ and $\|v\|=\eps$.
The difficulty of identifying this difference is governed 
by the magnitude $|\phi_D(\omega)|$; and since the mean shift could lie 
in an arbitrary direction $v$, we take the $\inf_v$.

Our main result is informally stated below 
(see \Cref{thm:upper,thm:lowerbound1d} for more detailed formal statements and \Cref{remark:comparison} for a comparison of our 
upper and lower bounds):

\begin{theorem}[Informal Main Result]\label{thm:informal}
Let $D$ be a distribution over $\R^d$ with characteristic function $\phi_D$, 
$\alpha \in (0,1/2)$ be the contamination parameter, 
and $\eps$ be the target error.
Then, under mild technical assumptions on $D$, 
there exists an algorithm that estimates the mean of $D_\mu$ from 
$\widetilde{O}_{\alpha}(d/\delta^2)$ i.i.d.\ $\alpha$-mean-shift contaminated samples.
Moreover, even for $d=1$, any algorithm requires at least $1/\delta^{\Omega(1)}$ samples.
\end{theorem}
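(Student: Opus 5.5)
Both bounds will go through the characteristic function of the observed distribution. By \Cref{def:cont}, the characteristic function factorizes as
\[
\phi_{D^{(\alpha)}_\mu}(\omega)=\phi_D(\omega)\Bigl((1-\alpha)e^{2\pi i\,\omega\cdot\mu}+\alpha\,\phi_Q(\omega)\Bigr),
\]
where $|\phi_Q|\le 1$. The bracketed factor is a ``contaminated phase.'' Its modulus is at least $1-2\alpha>0$, and its argument carries information about $\mu$.

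\textbf{Upper bound.} I would use a net/tournament argument over candidate means.
\begin{enumerate}
\item First obtain a coarse estimate of $\mu$, say within radius $R=O_\alpha(1)$ times the scale of $D$, using a robust median-type estimator. This works because a $(1-\alpha)$ fraction of the mass is centered at $\mu$. This is where the "mild technical assumptions" enter, e.g.\ concentration of $D$.
\item Take an $\eps$-net $N$ of the ball, of size $(R/\eps)^{O(d)}$.
\item For each pair of candidates $m,m'$ with $v=m-m'$ and $\|v\|\ge\eps$, pick a witness frequency $\omega$ with $\dist(\omega\cdot v,\Z)\ge\alpha$ and $|\phi_D(\omega)|\ge\delta$. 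By definition of $\delta$, such an $\omega$ exists; this is the Fourier witness.
\item Estimate the empirical characteristic function at $\omega$ to accuracy $c\,\delta\alpha$. By Hoeffding this needs $O(\log(|N|^2)/(\delta\alpha)^2)$ samples, since each term $e^{2\pi i\omega\cdot x}$ is bounded.
\item Declare $m$ inconsistent if, after dividing by $\phi_D(\omega)e^{2\pi i\omega\cdot m}$, the result is not in the set $\{(1-\alpha)+\alpha z : |z|\le 1\}$, up to the error tolerance.
\end{enumerate}

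The key check is step 5. If the truth is $m$, the normalized value is exactly $(1-\alpha)+\alpha\phi_Q(\omega)e^{-2\pi i\omega\cdot m}$. If instead the truth is $m'$, the normalized value is $(1-\alpha)e^{-2\pi i\omega\cdot v}+\alpha z$. The distance from this to the disk centered at $1-\alpha$ with radius $\alpha$ is at least $(1-\alpha)|e^{2\pi i\theta}-1|-2\alpha$, where $\theta=\omega\cdot v$. This is where the choice of threshold matters: I expect the calibration of the constants relating $\alpha$ in the definition of $\delta$ to the actual separation to be delicate. I would likely need to replace $\alpha$ by $C\alpha$ in the inner supremum, absorbed into the $O_\alpha$.

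Outputting any candidate that survives all pairwise tests gives error $O(\eps)$. The log of the net size is $O(d\log(R/\eps))$, which yields the claimed $\widetilde O_\alpha(d/\delta^2)$ sample bound.

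\textbf{Lower bound ($d=1$).} I would take $\eps$ and $v$ attaining the infimum, so that $|\phi_D|<\delta$ off the set $S=\{\omega:\dist(\omega v,\Z)<\alpha\}$. The goal is to build $Q_0,Q_1$ such that $D^{(\alpha)}_0$ and $D^{(\alpha)}_v$ have characteristic functions that agree on $S$ and differ by at most $O(\delta)$ elsewhere.

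To do this, take $\phi_{Q_1}-\phi_{Q_0}$ to be a periodic function with period $1/v$. Such a function is the characteristic function of a measure supported on the lattice $v\Z$. Choose it to equal $\frac{1-\alpha}{\alpha}(1-e^{2\pi i\omega v})$ near the integers, using a Fejér-type smooth bump of width $\alpha$. For $Q_0,Q_1$ to be genuine probability measures, the positive-definiteness and total mass must stay within $\alpha$ budget. This is the main obstacle, and it is what forces only $1/\delta^{\Omega(1)}$ rather than $1/\delta^2$: it requires shrinking the window and losing a power.

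Given such a construction, the characteristic functions differ by $\lesssim\delta^{c}$ in a smoothed $L^2$ sense. Via Plancherel, and after convolving with a smooth kernel, this bounds the chi-squared (or Hellinger) distance by $\delta^{\Omega(1)}$. Then $n\ge\delta^{-\Omega(1)}$ samples are needed.
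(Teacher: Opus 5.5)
\textbf{Upper bound.} Your skeleton matches the paper's: a net of candidate means, the empirical characteristic function divided by $\phi_D$, then Hoeffding plus a union bound. Fixing one witness per pair is a legitimate variant of the paper's min--max score, and it even removes the need for the frequency cover and the Lipschitz assumption.

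But as written, your argument does not show that any candidate survives. The truth $\mu$ is not a net point. For the nearest point $m^*$, the normalized value at a test frequency $\omega$ is $(1-\alpha)e^{2\pi i\,\omega\cdot(\mu-m^*)}+\alpha z$. The adversary can push this out of the disk unless $|\omega\cdot(\mu-m^*)|\ll\alpha$. The same smallness is needed for the witness of $m-m^*$ to reject a far candidate $m$.

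With an $\eps$-net this fails. A witness for $\|v\|\approx\eps$ must have $\|\omega\|\ge\alpha/\|v\|\approx\alpha/\eps$, so $\omega\cdot(\mu-m^*)$ can be of order $\alpha$. You need a uniform bound $B$ on the norms of the witnesses you use, and a mesh $\lesssim\alpha/B$. The definition of $\delta$ gives no such $B$.

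This is the role of the paper's assumption $\partial_j p_D\in L^1$. It yields $|\phi_D(\omega)|\le\sqrt d\,M_1/(2\pi\|\omega\|)$, so every witness has $\|\omega\|\le B_\delta$ (\Cref{cl:witness_norm_bound}). The mesh $\eps'=\min(\alpha/(2(1-\alpha)\pi B_\delta),\eps)$ then costs only a logarithm (\Cref{cl:small_T}). Separately, you do not need to replace $\alpha$ by $C\alpha$: since $\sin(\pi\alpha)\ge 2\alpha$, you already have a margin of $2\alpha(1-2\alpha)$ over the disk.

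\textbf{Lower bound.} The missing step is the passage from Fourier closeness to a statistical distance. After matching, $\Delta\phi=\phi_D\,\Delta\phi_E$ vanishes on the bands. Off the bands, a set of infinite measure, you only know $|\phi_D|\le\delta$ pointwise, which does not bound $\|\Delta\phi\|_{L^2}$ at all.

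That is why the paper's formal hypothesis is $\|\phi_D\Ind\{\dist(\eps\omega,\Z)>c\alpha\}\|_{L^2}\le\delta$. It follows from a pointwise bound only under a derivative assumption, with a square-root loss (\Cref{cl:l2-from-linfty}). It must also cover the gap $\{c\alpha<\dist(\eps\omega,\Z)<\alpha\}$. A window of full width $\alpha$ is infeasible: near its edge the target $|\widehat f|$ exceeds $2$, while $|\phi_{Q_0}-\phi_{Q_1}|\le 2$.

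Even with $L^2$ control, Plancherel does not give chi-square or Hellinger bounds; chi-square would need a lower bound on the density. Smoothing $P_0,P_1$ by a kernel only shrinks their distance, so it cannot upper-bound $\dtv(P_0,P_1)$.

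The paper instead uses \Cref{lem:Fourier2TV}: Cauchy--Schwarz on $[-R,R]$, plus separate tail control. The tails are handled by condition (2) on $D$ and by item (4) of \Cref{lem:construction} for $g$. That tail decay, not positivity, is what the smooth window is for. Balancing $\sigma/R+(\eps/(\alpha R))^3+\sqrt R\,\delta$ at $R=(\sigma/\delta)^{2/3}$, and then using $n\gtrsim 1/\dtv$, is where the power of $\delta$ is lost.

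So your diagnosis of the loss is off. Realizing $Q_0,Q_1$ costs nothing. Any real zero-mass signed measure with $\|g\|_{L^1}\le 2$ splits into two probability measures via its Jordan decomposition plus a common atom. With $w=c\alpha/\eps$ one gets $\|g\|_{L^1}\lesssim c$.
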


A few remarks are in order.
First, we note that our univariate sample complexity 
lower bound can be straightforwardly 
extended to a $(d/\delta)^{\Omega(1)}$-sample lower bound in $d$ dimensions 
for product distributions; see \Cref{remark:hd-lb}.
Second, our Fourier characterization applies to a broad range of base distributions well beyond the Gaussian and Laplace families.
We illustrate our results with a few basic examples in \Cref{tab:dist-results} 
(see also \Cref{sec:apps}).
\begin{table}[t]
  \caption{Sample complexity for mean estimation under $\alpha$-mean-shift contamination.}
  \label{tab:dist-results}
  \begin{center}
    \begin{small}
      \begin{sc}
 \begin{tabular}{@{}l@{\hspace{4pt}}c@{\hspace{4pt}}c@{}}
          \toprule
         Distribution $D$ & Upper bound & Lower bound $(d=1)$ \\
          \midrule
          $\mathcal{N}(0,I_d)$
          & $\widetilde{O}\!\left(d\,e^{O\!\left((\alpha/\eps)^2\right)}\right)$
          & $\Omega\!\left(e^{\Omega\!\left((\alpha/\eps)^2\right)}\right)$
          \\
           $\mathrm{Lap}_d(0,I_d)$
          & $\widetilde{O}\!\left(d\,\alpha^2/\eps^4\right)$
          & $\Omega\!\left((\alpha/\eps)^{1/2}\right)$
          \\
           $\mathrm{Unif}([-1,1])$
          & $\widetilde{O}\!\left(1/\eps\right)$
          & $\Omega\!\left((\alpha/\eps)^{1/6}\right)$
          \\
           $\big(\mathrm{Unif}([-1,1])\big)^{*m}$
          & $\widetilde{O}\!\left(\alpha^{-2}\,(O(\alpha/\eps))^{2m}\right)$
          & $\Omega\!\left((\alpha/\eps)^{(2m-1)/6}\right)$
          \\
          \bottomrule
        \end{tabular}
      \end{sc}
    \end{small}
  \end{center}
  \vskip -0.1in
\end{table}

\begin{remark}[Consistency]
\Cref{thm:informal} yields an easily verifiable 
condition under which consistency is impossible.
Namely, no algorithm can estimate the mean to error $\eps$ if the set $\{\omega:\ \dist(\omega \eps,\Z)\ge \alpha\}$ has zero Fourier mass (i.e., if $\delta=0$ in the theorem's notation). Any distribution with a band-limited characteristic function, 
i.e., such that there exists $B$ with $\phi(\omega)=0$ for all $|\omega|\ge B$, 
satisfies this condition. 
For example, consider the distribution with density proportional to $\sinc^2(x)$.
Note that $\sinc^2$ is the Fourier transform of the triangular window, 
namely the function equal to $1-|\omega|$ for $|\omega|<1$ and $0$ otherwise 
(i.e., the convolution of two rectangular windows).
Since both functions are even, the converse relationship holds as well, 
which gives us an example of a distribution with band-limited characteristic function.
\end{remark}

\paragraph{Independent Work} 
Contemporaneous work \cite{kalavasis2026learning} also studies robust mean estimation under mean shift contamination. The focus of their work is on developing 
computationally efficient 
(i.e., sample-polynomial time) algorithms in high dimensions, 
whereas the current work aims to characterize 
the sample complexity of the problem for general distributions. 
While \cite{kalavasis2026learning} provide a sample and computationally efficient 
algorithm for interesting families of distributions, their bounds can be qualitatively
far from optimal in many cases.
Specifically, for the prototypical case of multivariate Gaussians, 
their algorithm has sample complexity of $2^{O(d/\eps^2)}$ (i.e., exponential in the dimension $d$), while the sample complexity of the problem in this case is 
$O(d \, 2^{ O(1/\eps^2)})$.

For the sake of completeness, we provide two brief technical 
points of comparison. First, the sample complexity 
of the algorithm in \cite{kalavasis2026learning} 
(as appearing in  their theorem statement)
depends on a Fourier-theoretic quantity that---in our notation---essentially corresponds to the worst-case Fourier witness 
of the base distribution's characteristic function 
over a large radius $T$. As a result, their theorem statement 
provides no guarantees when the distribution $D$, for example, 
is uniform over an interval.
While a careful analysis shows that the uniform distribution 
is not necessarily a true barrier for their approach,
it can be seen that their algorithm inherently incurs 
a substantially suboptimal sample complexity for distributions 
whose characteristic functions have a non-trivial Fourier witness 
but are small on a set of reasonably large measure.

Second, though both the algorithm of \cite{kalavasis2026learning} 
and ours rely on one-dimensional projections for mean estimation 
of multivariate distributions, their algorithm 
perform the projections in polynomially many randomly chosen directions 
(for the sake of computational efficiency); 
while ours utilizes an exponential-size cover.
As the price of the relatively small number of random projections, 
their estimations along the projected directions need 
to be of accuracy $\ll \eps / \poly(d)$. 
Consequently, when the sample complexity of the univariate 
estimation problem has an exponential dependene 
on the error parameter (as is the case for the Gaussian distribution), 
their approach results in a sample complexity 
with an (often unnecessary) exponential dependency on the dimension.

\subsection{Notation and Preliminaries}
We use $\mathbb{Z}_+$ for the set of positive integers. 
For $n \in \mathbb{Z}_+ $, we denote 
$[n]: =\{1,\ldots,n\}$. 
For a vector $x$ we denote by $\|{x}\|$ its  Euclidean norm.
We define by $\delta_z$ the Dirac measure at $z \in \R^d$, 
i.e., $\delta_z(A)=\Ind\{{z \in A}\}$ for all Borel sets $A \subseteq \R^d$. 
For distributions $P,Q$, we denote by $P*Q$ their convolution. 
For a distribution $D$ over $\R^d$ and $\mu\in \R^d$, we use 
$D_{\mu}$ for the shifted distribution $D*\delta_{\mu}$.
We also denote by $P_z$  the distribution $\delta_{z}*P$. 
For $A\subseteq \R^d$ and $\x\in \R^d$, 
the distance of $\x$ to $A$ is 
$\mathrm{dist}(\x,A)=\min_{\vec y\in A}\norm{\x-\vec y}$.
Let $\mathcal{B}_d(R):=\{\theta\in\R^d:\ \|\theta\|\le R\}$ be 
the Euclidean ball of radius $R$ on $\R^d$.
For a measurable function $f:\R\to\C$ and $1\le p<\infty$, define
$\|f\|_{L^p} \eqdef \big(\int_{\R} |f(x)|^p,dx\big)^{1/p}$,
with value $+\infty$ if the integral diverges.
Denote by $(h(x))_+ \eqdef \max({h(x),0})$ and by $(h(x))_- \eqdef \max({-h(x),0})$.
Denote by $\sinc$ the function defined as $\sinc(x)\eqdef\sin(\pi x)/(\pi x)$ for $x\neq 0$ and $\sinc(0)=1$. 

We use $a\lesssim b$ to denote that there exists an absolute 
constant $C>0$ (independent of the variables or parameters on which $a$ and $b$ depend) such that $a\le Cb$.

\begin{definition}[Characteristic function] \label{def:cf}
Let $P$ be a distribution over $\R^d$. We define its characteristic function by 
$\phi_P(\omega)\eqdef \E_{x\sim P}\big[e^{2\pi i (\omega\cdot x)}\big], \omega\in\R^d.$
Note that the characteristic function is the Fourier transform of the underlying probability measure.
\end{definition}

\section{Our Techniques} \label{ssec:techniques}

\paragraph{Upper Bound} 
In the mean-shift contamination model, the observed distribution
is a convolution of the form $D_{\mu}^{(\alpha)} = D*Q$, where $D$ is the base distribution (centered at $0$) and 
$Q$ is the distribution over the mean vector (designed 
by an adversary) that is 
guaranteed to put at least $(1 - \alpha)$ mass 
on the target clean mean vector $\mu$.
This structure naturally invites Fourier analysis, 
yielding the characteristic function identity
$$
\phi_{ D_{\mu}^{(\alpha)} }(\omega)= 
\phi_{ D }(\omega) \; 
\phi_{ Q }(\omega).
$$
Conveniently, $\phi_{ D_{\mu}^{(\alpha)} }(\omega)$ can be approximated 
using samples and $\phi_{ D }(\omega)$ has a known functional form,
thereby allowing us to approximate $\phi_{ Q }(\omega)$ via the ratio.
If $Q$ were a point mass on $\mu$, its characteristic function would be exactly $\exp(2\pi i \mu\cdot \omega)$.
By the assumption of the contamination model, $Q$ has to put at least $(1 - \alpha)$ mass on the clean mean vector $\mu$.
As a result, it is not hard to show that 
$\phi_{ Q }(\omega)$ is pointwise $2\alpha$ close to $\exp(2\pi i \mu\cdot \omega)$.
Next, we proceed to show how the above observations allow us to distinguish between the cases 
where a candidate vector $\widehat{\mu}$ is close to the clean vector $\mu$
or is at least $\eps$ far from $\mu$.
If we can do so, then we can solve the mean estimation task 
by scanning over all candidate mean vectors lying in a sufficiently fine cover.

 Towards distinguishing whether $\widehat{\mu}$ is close to $\mu$ or far from it, we want to decide whether $\phi_Q(\omega)$ is close to $\exp(2\pi i\widehat{\mu}\cdot \omega)$ (as it would be if $Q$ placed most of its mass near $\widehat{\mu}$) or noticeably different. For this, it suffices to find a \emph{witness} frequency $\omega^*$ for which the difference $|\exp(2\pi i\widehat{\mu}\cdot \omega^*)-\exp(2\pi i\mu\cdot \omega^*)|$ is large. This happens whenever $(\widehat{\mu}-\mu)\cdot \omega^*$ is far from an integer. Since $(\widehat{\mu}-\mu) \cdot \omega^*\leq \|\widehat{\mu}-\mu\|\|\omega\|$; for close candidates this requires a very large $\|\omega^*\|$, whereas for $\epsilon$-far candidates, choosing $\|\omega^*\|$ on the order of $1/\epsilon$ suffices. 

Once we have such a witness, it remains  
to construct a sufficiently good estimate of $\phi_{ Q }(\omega^*)$.
Our algorithm tries to estimate $\phi_Q$ 
via the ratio form $\phi_{ D_{\mu}^{(\alpha)} } / \phi_{D}$.
Thus, for the estimator to have reasonable concentration, %
the denominator $\phi_{D}$ at $\omega^*$ necessarily needs 
to be bounded away from $0$.
In particular,  provided that the witness satisfies that
$(\widehat{\mu}-\mu)\cdot \omega^*$ is at least $A$-far 
from integers and $\phi_{ D_{\mu}^{(\alpha)} }(\omega^*)$ is at least $\delta$, 
we can estimate 
$\phi_{ D_{\mu}^{(\alpha)} }$---and consequently $\phi_{ Q }(\omega^*)$---up to 
sufficient accuracy with roughly $O(A^{-1} \delta^{-2})$ samples.

In summary, in order to learn the mean vector $\mu$ up to $\eps$ accuracy, all we need is that for any error vector $v = \hat \mu - \mu$ of $\ell_2$ norm at least $\eps$, there exists a witness frequency $\omega_v$ such that
(i) $v \cdot \omega_v$ is bounded away from integers, and (ii) $\phi_{ D_{\mu}^{(\alpha)} }(\omega_v)$ is bounded away from $0$.
We call this the frequency-witness condition of the distribution $D$ and its formal definition can be found in \Cref{def:freq-witness}.%

\paragraph{Lower Bound}
Our lower bound shows that an $L^2$ version of the above frequency witness condition is essentially necessary for mean 
estimation under the mean-shift contamination model.
Consider the basic case of univariate mean estimation. 
In this case, the negation of the frequency-witness condition 
with respect to the error $v = \eps$ simplifies 
into saying that the characteristic function restricted on points 
far from the integer multiples of $\eps^{-1}$ have small $L^\infty$ norm, i.e., 
$  \lp \| \phi_D(\omega) \; \Ind \{ \dist \left( \eps \omega,  \Z \right)  \} \rp \|_{L^\infty} \leq \delta$.
In \Cref{thm:lowerbound1d}, we show that if the base distribution $D$ satisfies an  $L^2$ version of the above bound, i.e., 
$  \lp \| \phi_D(\omega) \; \Ind \{ \dist \left( \eps \omega,  \Z \right) > \Theta( \alpha ) \} \rp \|_{L^2} \leq \delta$, (together with some natural tail bound condition)
then $\poly( \delta^{-1} )$ many samples are also statistically necessary for mean estimation up to error $\eps$ under mean-shift contamination.

At a high level, we would like to design a pair of univariate distributions $Y_1, Y_2$, where $Y_i$ is a univariate distribution that puts at  least $(1 - \alpha)$ amount of mass on $\pm \eps$ respectively, such that 
the convolutions $Q_i = D * Y_i$ are statistically hard to distinguish, i.e., $ \|  Q_1 - Q_2 \|_{L^1} \leq \poly( \delta ) $.
Our starting point is the famous Plancherel theorem, which allows us to equate the $L^2$ norm of a function to its Fourier transform.
Provided that $Q_1$ and $Q_2$ both satisfy sufficiently strong tail bounds, we can then bound their the total variation distance by their $L^2$ distance, and consequently the $L^2$ distance between their characteristic function $\Delta \phi = \phi_{Q_1} - \phi_{Q_2}$. See \Cref{lem:Fourier2TV} for the details of this argument.

To control $\Delta \phi$, we rewrite it as
{\setlength{\abovedisplayskip}{4pt}
 \setlength{\belowdisplayskip}{4pt}
 \setlength{\jot}{2pt}
\begin{align*}
\Delta \phi &= \phi_{D*Y_1} - \phi_{D*Y_2}\\
&= \Big((1-\alpha)(e^{\pi i\eps\omega}-e^{-\pi i\eps\omega})
      +\alpha(\phi_{Y_1}-\phi_{Y_2})\Big) \phi_D.
\end{align*}}
Note that by our assumption $\phi_D$ already has small $L^2$ norm while restricted on points \emph{far} from integer multiples of $\eps^{-1}$.
Thus, to ensure that $\Delta \phi$ \emph{globally} has small $L^2$ norm, it suffices to ensure that  $\left(  
(1 - \alpha) \left( 
e^{ \pi i \eps \omega } - 
e^{ -\pi i \eps \omega }
\right)
+ \alpha \left( \phi_{Y_1} - \phi_{Y_2} \right)
\right)$ has small $L^2$ norm on points \emph{close} to integer multiples of $\eps^{-1}$.
More precisely, we want to design the pair $Y_1, Y_2$ such that 
$\left( \phi_{Y_2} - \phi_{Y_1} \right)$ 
matches with 
the Fourier signal
$\widehat{f}(\omega) :=  \alpha^{-1} (1 - \alpha) \left( 
e^{ \pi i \eps \omega } - 
e^{ -\pi i \eps \omega }
\right)$ on $\Ind \{ \dist\left( \omega \eps, \Z  \right) < \Theta( \alpha ) \}$.
It is tempting to consider directly the inverse Fourier transform of $\widehat{f}(\omega)$ and check whether we can write it as a difference of two probability measures.
Unfortunately,  $\widehat{f}$ can be as large as $2  (1 - \alpha) / \alpha > 2$, which implies that it cannot be expressed as the difference of any pair of probability measures.

To mitigate the issue, we note that the extreme values of $\widehat{f}$ only occurs periodically at $\omega$ that are far from integer multiples of $\eps^{-1}$. Recall that this is precisely the range in which
$\left( \phi_{Y_2} - \phi_{Y_1} \right)$ does not need to approximate $\widehat{f}$. Hence, we can simply truncate $\widehat{f}$ 
to be $0$ at these periodic bands with extreme values.
Naively, one would just multiply $\widehat{f}$ with a periodic window function to mask out its extreme values. Yet, doing so would make the resulting function no longer smooth, which is a required condition for its inverse Fourier transform to have rapidly decaying tail bounds.
To ensure smoothness, we will further mollify the window function by convolving it with various scaled versions of itself. 
With some standard Fourier analysis, one can show that the convolved window function has globally bounded derivatives and interpolates smoothly from $0$ at points far from the window area to $1$ within the window area. 
Consequently, the derivative bound in the frequency domain gives rise to the desired tail bounds on $Y_2 - Y_1$. 
The detail of this argument is given in the proof of \Cref{lem:construction}.

\section{Sample-Efficient Algorithm}\label{sec:upper}
In this section, we show that under certain regularity assumptions on the characteristic function of $D$, we can design a sample efficient algorithm. 
In particular, the assumption that we will use is the following:
\begin{definition}[Frequency-witness condition]
\label{def:freq-witness}
We say that a distribution $D$ over $\R^d$ satisfies the $(\eps,A,\delta)$-frequency-witness condition if:
$ \forall v \in \R^d$ with $ \|v\|\ge \eps$  $\exists \omega \in \R^d $ such that $|\sin(\pi v\cdot \omega )|\ge A,$  and $|\phi_D(\omega)|\ge \delta$.  
Moreover, we call any such $\omega$ an $(\eps,A,\delta)$-frequency-witness for the vector $v$.
\end{definition}
Denote by $D_{\mu}^{(\alpha)}$ the $\alpha$-mean-shift contaminated version of $D_{\mu}$ (see \Cref{def:cont}).
Let $\widehat{\mu}\in \R^d$ be a candidate mean. 
This condition essentially  guarantees that every \emph{incorrect} direction $v=\widehat\mu-\mu$  with $\|v\|\ge \eps$ has an associated frequency $\omega$ where $D$ has nontrivial Fourier mass ($|\phi_D(\omega)|\ge\delta$) and where the phase shift induced by $v$ is \emph{detectable}: $|\sin(\pi\,\omega\!\cdot\! v)|\ge A$. 
At such a frequency, the population characteristic function of the contaminated model,
\[
\phi_{D^{(\alpha)}_\mu}(\omega)=\phi_D(\omega) \left((1-\alpha)e^{2\pi i\,\omega\cdot\mu}+\alpha\,\phi_Q(\omega)\right)\;,
\]
cannot be well-approximated by $(1-\alpha)\,e^{2\pi i\,\omega\cdot\widehat\mu}\,\phi_D(\omega)$ if $\widehat\mu$ is far from $\mu$. 
Thus, if we scan a finite cover of bounded frequencies and pick the $\widehat\mu$ that \emph{minimizes} the worst–case discrepancy between the empirical characteristic function and $(1-\alpha)\,e^{2\pi i\,\omega\cdot\widehat\mu}\,\phi_D(\omega)$, the frequency witnesses force all far candidates to incur a large penalty, while all near candidates have uniformly small discrepancy. We will assume lipschitzness of $\phi_D$ which lets us discretize frequencies without losing the witness property.
This is exactly what \Cref{alg:freq-tournament} does. 

\begin{algorithm}[t]
\centering
\setlength{\fboxsep}{0.7ex} %
\fbox{%
  \parbox{\dimexpr\columnwidth-2\fboxsep-2\fboxrule\relax}{%
{\bf Input:} $\eps,\alpha\in(0,1/2)$, oracle access to $\phi_D$, 
sample access to $D_{\mu}^{(\alpha)}$, the $\alpha$-mean-shift contamination of $D_{\mu}$ with $\|\mu\|\leq R$,
parameters $R>1$, $L,\delta>0$ and $A,c\in (0,1)$, and $M_1>0$ the derivative bound in \Cref{thm:upper}.
Assume that $D$ satisfies the $(\eps,A,\delta)$-frequency–witness condition and $\phi_D$ is $L$–Lipschitz.
\\[4pt]
{\bf Output:} $\widehat\mu$ with $\|\widehat\mu-\mu\|\le\eps$ w.p.\ $\ge2/3$.

\medskip

\begin{enumerate}
\item \label{line:init} Set $B_{\delta}\gets \frac{\sqrt d\,M_1}{2\pi\delta}$.
\item  \label{line:n-new}
       Set $n{\gets}\Bigl\lceil Cd\log\Bigl(\frac{B_{\delta} RL}{\delta A}\Bigr)
\frac{1}{\bigl(((1-\alpha)A-2\alpha)\delta\bigr)^2}\Bigr\rceil$,
       for a sufficiently large  constant $C{>}0$.
\item \label{line:covers}
Using \Cref{fact:cover}, construct $\mathcal C_\mu$  an $\eps'$--cover of $\mathcal B_d(R)$ and  $\mathcal C_\omega$  an $\eta$--cover of $\mathcal B_d(B_\delta)$, where
$\eps'\gets \min(\alpha/(2(1-\alpha)\pi B_{\delta}), \eps )$
and
$\eta\gets \min\{\delta/(2L), A/(2\pi R)\}$.
\item  \label{line:searchset}
      Define the set $S_\omega\gets\{\omega\in\mathcal C_\omega:\ |\phi_D(\omega)|\ge \delta/2\}$. 
\item  \label{line:ecf}
      Draw $x_1,\dots,x_n\sim D_{\mu}^{(\alpha)}$ once; and for each $\omega\in S_\omega$
       compute the empirical characteristic function of the contaminated samples, $\widehat\phi(\omega)=\frac{1}{n}\sum_{j=1}^n e^{2\pi i\,\omega\cdot x_j}$.
\item  \label{line:psi}
      For each $\omega\in S_\omega$, set $\widehat\psi(\omega)\gets \widehat\phi(\omega)/\phi_D(\omega)$.
\item  \label{line:test}
      For each $\widehat\mu\in\mathcal C_\mu$ and $\omega\in S_\omega$, set \\$\widehat T_{\widehat\mu}(\omega)\gets (1-\alpha)\,e^{2\pi i\,\omega\cdot \widehat\mu}-\widehat\psi(\omega)$.
\item  \label{line:score}
      For each $\widehat\mu\in\mathcal C_\mu$ compute the score \\$s(\widehat\mu)\gets \max_{\omega\in S_\omega}\big|\widehat T_{\widehat\mu}(\omega)\big|$.
\item  \label{line:output}
      Output $\widehat\mu\in\arg\min_{\theta\in\mathcal C_\mu} s(\theta)$.
\end{enumerate}
}}
\medskip 
\caption{Robust mean estimation via frequency witnesses.}
\label{alg:freq-tournament}
\end{algorithm}

We remark that, although \Cref{alg:freq-tournament} uses a value oracle for $\phi_D$ (available for most well-known distributions) for simplicity, a sampling oracle for $D$ also suffices (see \Cref{rem:access} for details).

We proceed with the proof of our upper bound.
\begin{theorem}[Upper bound via frequency witnesses]\label{thm:upper}  
Let $\alpha \in (0,1/2)$, $\eps \in (0,1)$, and let $\mu \in \R^d$ be an unknown vector satisfying $\|\mu\| \le R$, for a known $R> 1$.  
Fix parameters $L>0$ and $A\in (0,1]$, with $(1-\alpha)A-2\alpha>0$
and $\delta>0$.  Let $D$ be a distribution over $\R^d$ with density $p_D$.
Assume that:
\begin{enumerate}[leftmargin=*, nosep]
\item  $D$ satisfies the $(\eps,A,\delta)$-frequency-witness condition.
\item 
$\frac{\partial}{\partial x_j}p_D\in L^1(\R^d)$ for all $j\in[d]$ and 
define
$M_1 \;\eqdef\; \max_{j\in[d]}\left\|\frac{\partial}{\partial x_j}p_D\right\|_{L^1(\R^d)}.$

\item We have access to a value oracle for $\phi_D$ and $\phi_D$ is $L$-Lipschitz over $\mathcal{B}_d\left(\frac{\sqrt d\,M_1}{2\pi\,\delta}\right)$.

\end{enumerate}

Then, \Cref{alg:freq-tournament}, draws 
$$n = O\left(d \,\log\!\left(\dfrac{\sqrt{d}
M_1 RL}{\delta^{2} A}\right)
  \frac{
    1
  }{
    \bigl(((1-\alpha)A - 2\alpha)\,\delta\bigr)^2
  }
\right)$$
i.i.d.\ $\alpha$-mean-shift contaminated samples from $D_{\mu}$, 
 and
outputs an estimator $\widehat{\mu}$ satisfying $\|\widehat{\mu} - \mu\| \le \eps$ with probability at least $2/3$.
\end{theorem}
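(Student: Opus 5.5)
The analysis of \Cref{alg:freq-tournament} has three parts. First, every witness can be taken with bounded norm and then discretized. Second, the empirical characteristic function concentrates uniformly on the finite set $S_\omega$. Third, a score comparison shows that the true mean's cover point beats every candidate at distance more than $\eps$.

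\textbf{Bounded witnesses and discretization.} Integrating by parts, $|\omega_j|\,|\phi_D(\omega)|\le M_1/(2\pi)$ for each coordinate $j$. Hence $|\phi_D(\omega)|\le \sqrt d M_1/(2\pi\|\omega\|)$, so any $\omega$ with $|\phi_D(\omega)|\ge\delta$ lies in $\mathcal B_d(B_\delta)$. In particular every $(\eps,A,\delta)$-witness lies in this ball. For a candidate $\widehat\mu$ with $\|\widehat\mu-\mu\|>\eps$, set $v=\widehat\mu-\mu$ and take a witness $\omega$ for $v$. Let $\omega'\in\mathcal C_\omega$ satisfy $\|\omega-\omega'\|\le\eta$. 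Lipschitzness gives $|\phi_D(\omega')|\ge\delta-L\eta\ge\delta/2$, so $\omega'\in S_\omega$. Since $\|v\|\le 2R$, we also get $|\pi v\cdot(\omega-\omega')|\le 2\pi R\eta\le A$. Here I expect a constant-factor loss: the sine lower bound degrades to roughly $A/2$. I would either shrink $\eta$ by a constant or absorb the loss into constants, noting this changes only the logarithm.

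\textbf{Concentration.} For each $\omega\in S_\omega$, $\widehat\phi(\omega)$ is an average of $n$ bounded complex variables. Hoeffding then gives $|\widehat\phi(\omega)-\phi_{D^{(\alpha)}_\mu}(\omega)|\le\tau$ with failure probability $4e^{-n\tau^2/4}$. The cover size satisfies $\log|S_\omega|\lesssim d\log(B_\delta/\eta)$, which is bounded by a constant times $d\log(B_\delta RL/(\delta A))$. A union bound with $n$ as in Line~\ref{line:n-new} then gives, with probability $\ge 2/3$ simultaneously over $S_\omega$,
\[
|\widehat\psi(\omega)-\psi(\omega)|\le 2\tau/\delta,
\qquad \psi(\omega)\eqdef(1-\alpha)e^{2\pi i\omega\cdot\mu}+\alpha\phi_Q(\omega),
\qquad \tau=c((1-\alpha)A-2\alpha)\delta
\]
for a small constant $c$.

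\textbf{Score comparison.} Let $\mu^*\in\mathcal C_\mu$ be within $\eps'$ of $\mu$. For every $\omega\in S_\omega$ we have $\|\omega\|\le B_\delta$. The true-mean term then contributes at most $(1-\alpha)2\pi B_\delta\eps'\le\alpha$, and the outlier term $|\alpha\phi_Q|$ contributes at most $\alpha$. Therefore $s(\mu^*)\le 2\alpha+2\tau/\delta$. For a far candidate $\widehat\mu$, evaluate at the discretized witness $\omega'$:
\[
|(1-\alpha)(e^{2\pi i\omega'\cdot\widehat\mu}-e^{2\pi i\omega'\cdot\mu})|=2(1-\alpha)|\sin(\pi v\cdot\omega')|.
\]
Subtracting $\alpha$ for the outlier term and $2\tau/\delta$ for sampling error gives $s(\widehat\mu)\ge 2(1-\alpha)A'-\alpha-2\tau/\delta$, where $A'\approx A/2$ after discretization. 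So the requirement is
\[
2(1-\alpha)A'-\alpha>2\alpha+4\tau/\delta .
\]
The factor $2$ from $|e^{ia}-e^{ib}|=2|\sin|$ compensates for halving $A$, and $\eta$ can be tuned, for example $A/(4\pi R)$, so that the condition reduces to $(1-\alpha)A-2\alpha>0$ up to constants, which the hypothesis provides. Hence every far candidate has strictly larger score than $\mu^*$, and the argmin returned in Line~\ref{line:output} is within $\eps$ of $\mu$.

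The \textbf{main obstacle} is this constant bookkeeping: the discretization loss in $|\sin|$ and the $2\alpha$ contributed by $\mu^*$ must leave a margin proportional to $(1-\alpha)A-2\alpha$. If the margin fails with the stated $\eta$, I would refine $\eta$ by a constant factor. This affects only the logarithmic factor in $n$.
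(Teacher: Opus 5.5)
Your outline follows the paper's proof step for step. The $1/\|\omega\|$ decay from $\partial_j p_D\in L^1$ confines witnesses to $\mathcal B_d(B_\delta)$, Lipschitzness moves a witness onto $\mathcal C_\omega$ while keeping it in $S_\omega$, and Hoeffding plus a union bound over $\mathcal C_\omega$ controls $\widehat\psi-\psi$ uniformly. The score of the cover point nearest $\mu$ is then compared with that of a far candidate at a discretized witness. You are right to use $\|v\|\le 2R$, whereas the paper's write-up applies its cover claim only for $v\in\mathcal B_d(R)$. With the Line~\ref{line:covers} value $\eta=A/(2\pi R)$, your own estimate $2\pi R\eta\le A$ shows the loss in $|\sin|$ can be all of $A$, not $A/2$. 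So shrinking $\eta$ is necessary, not optional.

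The step that does not close as written is the final margin. With your example $\eta=A/(4\pi R)$ you get $A'\ge A/2$, so the far score is at least $(1-\alpha)A-\alpha-2\tau/\delta$, while $s(\mu^*)\le 2\alpha+2\tau/\delta$. The gap is $(1-\alpha)A-3\alpha-4\tau/\delta$, which need not be positive under $(1-\alpha)A>2\alpha$; take $A=1$, $\alpha=0.3$. So the factor $2$ does not compensate for halving $A$.

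The repair is to take $\eta\le\min\{\delta/(2L),A/(8\pi R)\}$. Then $A'\ge 3A/4$, and the gap becomes $\tfrac32(1-\alpha)A-3\alpha-4\tau/\delta=\tfrac32\bigl((1-\alpha)A-2\alpha\bigr)-4\tau/\delta$. This is positive for $\tau=c((1-\alpha)A-2\alpha)\delta$ with $c<3/8$, and the only cost is a constant inside $\log|\mathcal C_\omega|$.

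The paper balances the margin differently. It keeps $A/2$-witnesses and requires $\eps'<((1-\alpha)A-2\alpha)/(4(1-\alpha)\pi B_\delta)$, which gives $T_{\mathrm{good}}<(1-\alpha)A/2$ and a gap of $\tfrac12((1-\alpha)A-2\alpha)$. This only enlarges $\mathcal C_\mu$, which does not enter $n$. Neither your $\eta$ nor the paper's $\eps'$ is what Line~\ref{line:covers} literally sets. In both versions the cover parameters need a constant-factor adjustment for the argument to go through.
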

Before we proceed with the proof, we briefly comment on the conditions.
First, the bounded-mean assumption is easy to reduce to algorithmically for many distributions.
For instance, when the distribution has bounded covariance, one can use adversarially robust estimators and then center the data to obtain a bounded mean (see \Cref{fact:huber_estimator}).
Second, the Lipschitz continuity of $\phi_D$ is implied by finite absolute first moment of $D$. Indeed, by the definition of the characteristic function and standard inequalities for the exponential function, one obtains the required Lipschitz bound.  

\begin{proof}[Proof of \Cref{thm:upper}]
First, we show that every $(\eps,A,\delta)$ frequency witness has bounded norm; this will help us bound the frequency cover radius. The proof uses the fact that smoothness of a function results in faster decay of its Fourier transform (see \Cref{sec:omitted-upper} for the proof).
\begin{claim}[Every frequency witness has bounded norm]\label{cl:witness_norm_bound}
Fix $\eps,A,\delta\in(0,1)$.
If $\omega$ is an $(\eps,A,\delta)$-frequency-witness for some direction $v$
(i.e., $|\sin(\pi v\cdot \omega)|\ge A$ and $|\phi_D(\omega)|\ge \delta$),
then necessarily $\|\omega\|\le B_\delta$ where
$
B_\delta \;\eqdef\; \frac{\sqrt d\,M_1}{2\pi\,\delta}\;.
$
\end{claim}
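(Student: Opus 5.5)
The claim only uses the second half of the witness property, $|\phi_D(\omega)|\ge\delta$. The plan is to show the contrapositive: every $\omega$ with $\|\omega\|>B_\delta$ has $|\phi_D(\omega)|<\delta$. This is the standard decay estimate for the Fourier transform of a function whose partial derivatives lie in $L^1$.

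The main step is integration by parts in one coordinate. For fixed $\omega\neq 0$, pick a coordinate $j\in[d]$ with $|\omega_j|\ge \|\omega\|/\sqrt d$; such a $j$ exists because $\|\omega\|^2=\sum_j\omega_j^2\le d\max_j\omega_j^2$. Write
\[
\phi_D(\omega)=\int_{\R^d} p_D(x)\,e^{2\pi i\,\omega\cdot x}\,dx,
\]
and integrate by parts in the variable $x_j$, using $\partial_{x_j} e^{2\pi i\omega\cdot x}=2\pi i\omega_j e^{2\pi i\omega\cdot x}$. This gives
\[
\phi_D(\omega)=-\frac{1}{2\pi i\,\omega_j}\int_{\R^d}\frac{\partial p_D}{\partial x_j}(x)\,e^{2\pi i\,\omega\cdot x}\,dx .
\]
Bounding the modulus of the integrand by $|\partial_j p_D|$ yields
\[
|\phi_D(\omega)|\le \frac{\|\partial_j p_D\|_{L^1}}{2\pi|\omega_j|}\le \frac{\sqrt d\,M_1}{2\pi\|\omega\|}.
\]
If $\|\omega\|>B_\delta=\sqrt d M_1/(2\pi\delta)$, the right-hand side is $<\delta$, which contradicts $|\phi_D(\omega)|\ge\delta$. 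Hence every witness satisfies $\|\omega\|\le B_\delta$.

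The only delicate point is justifying that the boundary terms vanish in the integration by parts. The hypotheses are only that $p_D\in L^1$ and $\partial_j p_D\in L^1$. By Fubini, for almost every choice of the other coordinates $x_{-j}$, the one-dimensional slice $t\mapsto p_D(t,x_{-j})$ and its derivative are both integrable on $\R$. Such a slice is absolutely continuous with an integrable derivative, so it has limits as $t\to\pm\infty$. Since the slice is itself integrable, those limits must be $0$.

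If one wants to avoid pointwise regularity altogether, there is a cleaner route. Interpret $\partial_j p_D$ as a weak derivative, or approximate $p_D$ by mollified functions $p_D*\rho_\epsilon$. The integration-by-parts identity holds exactly for smooth compactly supported approximants. Both sides pass to the limit in $L^1$, since $\|\partial_j(p_D*\rho_\epsilon)\|_{L^1}\le\|\partial_j p_D\|_{L^1}$ and Fourier transforms converge uniformly under $L^1$ convergence. The resulting identity is $\widehat{\partial_j p_D}(\omega)=-2\pi i\omega_j\,\phi_D(\omega)$, with the sign fixed by the convention in \Cref{def:cf}, and it gives the same bound.
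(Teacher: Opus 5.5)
Your proposal is correct and follows the same route as the paper, which invokes the Fourier-derivative identity for $\partial_j p_D$, chooses $j$ with $|\omega_j|=\|\omega\|_\infty\ge\|\omega\|/\sqrt d$, and concludes $|\phi_D(\omega)|\le \sqrt d\,M_1/(2\pi\|\omega\|)$, exactly as you do. Your justification of the vanishing boundary terms, which the paper leaves to the cited fact, and your sign $-2\pi i\omega_j$ (the paper writes $+$) are both fine, since only the modulus is used.
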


Now note that since any frequency witness $\omega$ necessarily satisfies $\|\omega\|\le B_\delta$
(by \Cref{cl:witness_norm_bound}), it suffices to construct the frequency cover inside $\mathcal{B}_d(B_\delta)$.

Let $\mathcal{C}_{\mu}$ be the $\eps'$-cover of $\mathcal{B}_d(R)$ with $ \eps'=\min(\alpha/(2(1-\alpha)\pi B_\delta), \eps )$
and $\mathcal{C}_{\omega}$ be the $\eta$-cover of $\mathcal{B}_d(B_\delta)$ with $\eta=\min(\delta/(2L),A/(2\pi R))$,  constructed in Line \ref{line:covers}.
By  \Cref{fact:cover} we have that $\abs{\mathcal{C}_{\mu}}\leq (5R/\eps')^d$ and $\abs{\mathcal{C}_{\omega}}\leq (5B_\delta/\eta)^d$.
Let $\widehat{\phi}:\R^d\to \C$ be the empirical characteristic function of the contaminated samples and by $\phi$ its population counterpart.
For the set of $\omega\in \R^d$ where $ \phi_D(\omega)\neq 0$ define the function $\widehat{\psi}(\omega)\eqdef \widehat{\phi}(\omega)/\phi_D(\omega)$.
Note that $\psi(\omega)\eqdef \E[\widehat{\psi}(\omega)]=(1-\alpha)e^{2\pi i \omega\cdot \mu}+\alpha \phi_Q(\omega)$. 
Also denote by 
$\widehat{T}_{\widehat{\mu}}(\omega)\eqdef  (1-\alpha)e^{2\pi i\omega\cdot \widehat{\mu}}-\widehat{\psi}(\omega)$
the test function computed in Line \ref{line:test}
and by $T_{\widehat \mu}(\omega)\eqdef \E[\widehat{T}_{\widehat{\mu}}(\omega)]=  (1-\alpha)e^{2\pi i\omega\cdot \widehat{\mu}}-\psi(\omega)$ its 
population counterpart.
Denote the set $\Omega(\delta,B_\delta)\eqdef\{\omega\in \R^d:|\phi_D({\omega})|>\delta, \|\omega\|\leq B_\delta \}$. 
Note that \Cref{alg:freq-tournament} essentially consists of finding a $\widehat{\mu} \in \mathcal{C}_{\mu}$ such that $\max_{\omega\in\Omega(\delta,B_\delta) } T_{\widehat{\mu}}(\omega)$ is small.

First we prove that for a candidate mean, $\widehat{\mu}$, if the direction $\widehat{\mu}-\mu$ exhibits a frequency-witness $\omega$, then $T_{\widehat{\mu}}(\omega)$ is large. For the proof, we refer the reader to \Cref{sec:omitted-upper}.
\begin{claim}[Distant candidates have a large $T_{\widehat{\mu}}$]\label{cl:large_T}
Let $\widehat{\mu}\in \R^d$ be candidate mean such that $\|\widehat{\mu} - \mu\| > \eps$. 
If $\omega\in \R^d$  is a $(\eps,A,\delta)$-frequency-witness for the direction $v\eqdef \widehat{\mu}-\mu$, then
$\abs{T_{\widehat{\mu}}(\omega)}\geq 2(1-\alpha){A}-\alpha $.
\end{claim}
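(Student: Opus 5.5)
The plan is to expand the population test function $T_{\widehat\mu}(\omega)$ explicitly, isolate the part that depends on the phase difference between $\widehat\mu$ and $\mu$, and finish with the triangle inequality.

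\emph{Step 1: formula for $\psi$.} Since $\omega$ is a frequency witness, $|\phi_D(\omega)|\ge\delta>0$, so $\widehat\psi(\omega)$ is well defined. The contaminated distribution is the mixture $(1-\alpha)D_\mu+\alpha(D*Q)$. By linearity and the convolution identity,
\[
\phi(\omega)=\phi_D(\omega)\bigl((1-\alpha)e^{2\pi i\omega\cdot\mu}+\alpha\phi_Q(\omega)\bigr).
\]
Hence $\psi(\omega)=\E[\widehat\phi(\omega)]/\phi_D(\omega)=(1-\alpha)e^{2\pi i\omega\cdot\mu}+\alpha\phi_Q(\omega)$, as already noted in the text. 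It follows that
\[
T_{\widehat\mu}(\omega)=(1-\alpha)\bigl(e^{2\pi i\omega\cdot\widehat\mu}-e^{2\pi i\omega\cdot\mu}\bigr)-\alpha\,\phi_Q(\omega).
\]

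\emph{Step 2: the phase-difference term.} Write $v=\widehat\mu-\mu$ and factor out the unimodular term $e^{2\pi i\omega\cdot\mu}$. This gives
\[
\bigl|e^{2\pi i\omega\cdot\widehat\mu}-e^{2\pi i\omega\cdot\mu}\bigr|=\bigl|e^{2\pi i\omega\cdot v}-1\bigr|.
\]
By the identity $|e^{i\theta}-1|=2|\sin(\theta/2)|$, this equals $2|\sin(\pi\,\omega\cdot v)|$. The witness property gives $|\sin(\pi\,\omega\cdot v)|\ge A$, so this quantity is at least $2A$.

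\emph{Step 3: the contamination term.} Since $Q$ is a probability distribution, $|\phi_Q(\omega)|\le 1$. The reverse triangle inequality then gives
\[
|T_{\widehat\mu}(\omega)|\ge 2(1-\alpha)A-\alpha,
\]
which is the claimed bound.

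There is no real obstacle here. The only point needing care is Step 1: the division by $\phi_D(\omega)$ is legitimate because $|\phi_D(\omega)|\ge\delta>0$, which is part of the witness condition.
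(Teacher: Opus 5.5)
Your proof is correct and takes the same route as the paper: write $T_{\widehat\mu}(\omega)=(1-\alpha)\bigl(e^{2\pi i\omega\cdot\widehat\mu}-e^{2\pi i\omega\cdot\mu}\bigr)-\alpha\phi_Q(\omega)$, note the first term has modulus $2(1-\alpha)\abs{\sin(\pi\omega\cdot v)}\ge 2(1-\alpha)A$, and finish with the reverse triangle inequality and $\abs{\phi_Q(\omega)}\le 1$. The only difference is cosmetic: the paper factors out the midpoint phase $e^{2\pi i\omega\cdot(\mu+v/2)}$ to get $2i\sin(\pi\omega\cdot v)$ directly, while you factor out $e^{2\pi i\omega\cdot\mu}$ and use $\abs{e^{i\theta}-1}=2\abs{\sin(\theta/2)}$.
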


Now we prove that if a candidate mean $\widehat{\mu}$ is sufficiently close, then $T_{\widehat{\mu}}(\omega)$ is small for all $\omega$ in a large ball.
For the proof, we refer the reader to \Cref{sec:omitted-upper}.
\begin{claim}[Close candidates have small $T_{\widehat{\mu}}$]\label{cl:small_T}
Let $\widehat{\mu}\in \R^d$ be a candidate mean and set $v\eqdef \widehat{\mu}-\mu$.
Then,  
\(
\abs{T_{\widehat{\mu}}(\omega)} \le 2(1-\alpha)\pi\|\omega\|\|v\| + \alpha.
\)
\end{claim}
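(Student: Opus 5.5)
Since $T_{\widehat\mu}(\omega)=(1-\alpha)e^{2\pi i\omega\cdot\widehat\mu}-\psi(\omega)$ and $\psi(\omega)=(1-\alpha)e^{2\pi i\omega\cdot\mu}+\alpha\phi_Q(\omega)$, the plan is to write
\[
T_{\widehat\mu}(\omega)=(1-\alpha)\bigl(e^{2\pi i\omega\cdot\widehat\mu}-e^{2\pi i\omega\cdot\mu}\bigr)-\alpha\,\phi_Q(\omega),
\]
and bound the two pieces separately with the triangle inequality. For the second piece, $|\phi_Q(\omega)|\le 1$ because $\phi_Q$ is the characteristic function of a probability distribution. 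This contributes at most $\alpha$.

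For the first piece, factor out the unimodular term $e^{2\pi i\omega\cdot\mu}$. This gives
\[
\bigl|e^{2\pi i\omega\cdot\widehat\mu}-e^{2\pi i\omega\cdot\mu}\bigr|=\bigl|e^{2\pi i\omega\cdot v}-1\bigr| = 2\,|\sin(\pi\,\omega\cdot v)|.
\]
Then apply $|\sin x|\le |x|$ and Cauchy--Schwarz, $|\omega\cdot v|\le\|\omega\|\|v\|$, to get $2|\sin(\pi\omega\cdot v)|\le 2\pi\|\omega\|\|v\|$. Equivalently, one can note that $t\mapsto e^{2\pi i t}$ is $2\pi$-Lipschitz on $\R$.

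Multiplying the first bound by $(1-\alpha)$ and adding the $\alpha$ from the second piece gives $|T_{\widehat\mu}(\omega)|\le 2(1-\alpha)\pi\|\omega\|\|v\|+\alpha$, as claimed. There is no real obstacle here. The only points to handle carefully are the sign convention in $T$ and using the bound $|\phi_Q|\le 1$ rather than anything finer. The estimate holds for every $\omega\in\R^d$, and in particular on the ball $\mathcal{B}_d(B_\delta)$ where it will later be applied.
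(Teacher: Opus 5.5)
Your proposal is correct and follows the paper's proof essentially step for step. Both arguments split $T_{\widehat\mu}(\omega)$ into $(1-\alpha)\bigl(e^{2\pi i\omega\cdot\widehat\mu}-e^{2\pi i\omega\cdot\mu}\bigr)$ and $-\alpha\phi_Q(\omega)$, rewrite the first difference as $2|\sin(\pi\omega\cdot v)|$, and then use $|\phi_Q|\le 1$, $|\sin t|\le|t|$ and Cauchy--Schwarz (your subtraction of $\psi$ is in fact the correct form of the paper's display, which writes $\phi(\omega)$ by typo).
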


The next claim shows that if you already have a good frequency witness for a direction $v$, then passing to an $\eta$-cover does not destroy it. Instead, you can find a nearby point in the cover that remains a frequency witness, with only small losses in the parameters. 
For the proof, we refer the reader to \Cref{sec:omitted-upper}.
\begin{claim}[Covering over $\omega$ preserves frequency witnesses]\label{cl:omega-cover}
Fix $v\in \R^d$.
Let $\mathcal{C}_{\omega}$ be an $\eta$-cover of $\mathcal{B}_d(B_\delta)$.
If there exists a $(\eps,A,\delta)$-frequency-witness for the direction $v$, then there 
exists an $\omega\in \mathcal{C}_{\omega}$ that is a $(\eps,A-\pi \eta\norm{v},\delta-\eta L)$-frequency-witness for  $v$.
\end{claim}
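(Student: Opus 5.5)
Let $\omega^*$ be an $(\eps,A,\delta)$-frequency-witness for $v$, so that $|\sin(\pi v\cdot\omega^*)|\ge A$ and $|\phi_D(\omega^*)|\ge\delta$. We take the nearest cover point: $\omega\in\mathcal{C}_\omega$ with $\|\omega-\omega^*\|\le\eta$. This requires $\omega^*\in\mathcal{B}_d(B_\delta)$, which follows from \Cref{cl:witness_norm_bound} since $\omega^*$ is a witness (with $\delta<1$). We then check the two defining properties of a witness for $\omega$ separately. The norm condition $\|v\|\ge\eps$ is inherited, since $v$ is unchanged.

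For the Fourier magnitude, use that $\phi_D$ is $L$-Lipschitz on $\mathcal{B}_d(B_\delta)$ and that both $\omega$ and $\omega^*$ lie in this ball. The reverse triangle inequality then gives
\[
|\phi_D(\omega)|\ge |\phi_D(\omega^*)|-L\|\omega-\omega^*\|\ge \delta-\eta L.
\]

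For the sine condition, note that $x\mapsto \sin(\pi x)$ is $\pi$-Lipschitz on $\R$, and so is its absolute value. By Cauchy--Schwarz, $|v\cdot\omega - v\cdot\omega^*|\le \|v\|\,\|\omega-\omega^*\|\le \eta\|v\|$. Hence
\[
|\sin(\pi v\cdot\omega)|\ge |\sin(\pi v\cdot\omega^*)|-\pi\eta\|v\|\ge A-\pi\eta\|v\|.
\]

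Combining the two bounds shows that $\omega$ is an $(\eps,A-\pi\eta\|v\|,\delta-\eta L)$-frequency-witness for $v$. The only point needing care is that the cover point lies in the region where the Lipschitz assumption holds. An $\eta$-cover of $\mathcal{B}_d(B_\delta)$ is taken, as in \Cref{fact:cover}, with centers inside the ball, so this holds. If the cover points were allowed to lie slightly outside, one would instead assume Lipschitzness on a slightly enlarged ball. No other obstacle is expected.
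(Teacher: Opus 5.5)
Your proposal is correct and follows the same argument as the paper. You take a cover point within $\eta$ of the witness, use the $L$-Lipschitzness of $\phi_D$ for the magnitude bound, and use the $\pi$-Lipschitzness of $x\mapsto\sin(\pi x)$ together with Cauchy--Schwarz for the phase bound. You are slightly more careful than the paper, which leaves implicit that the original witness lies in $\mathcal{B}_d(B_\delta)$ (this follows from \Cref{cl:witness_norm_bound}) and that the cover points lie where the Lipschitz assumption holds.
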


We now prove that we can compute $\widehat{T}_{\mu}(\omega)$ efficiently with samples for all $\widehat{\mu}\in \mathcal{C}_{\mu},\omega \in \mathcal{C}_{\omega}$. This essentially follows from the fact that the characteristic function is the average of unit absolute value complex numbers (for the proof we refer the reader to \Cref{sec:omitted-upper}).
\begin{claim}[Concentration of $\widehat{T}$]\label{cl:concetration}
Fix a sufficiently large universal constant $C>0$ and $\eta, \tau\in (0,1)$. 
If $n \geq  C \log({|\mathcal{C}_{\omega}|}/{\tau})/(\eta\delta)^2$, then with probability at least $1-\tau$ for any $\omega\in \mathcal{C}_{\omega}$ such that $|\phi_D(\omega)|\geq \delta$ and any $ \widehat{\mu}\in C_{\mu}$ it holds that
$|T_{\widehat{\mu}}(\omega)- \widehat{T}_{\widehat{\mu}}(\omega)|\leq \eta$.
\end{claim}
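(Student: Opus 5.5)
The plan is to reduce the statement to a concentration bound for the empirical characteristic function at each fixed frequency in the cover, followed by a union bound over $\mathcal{C}_\omega$. The key observation is that the $\widehat{\mu}$-dependence cancels in the error. Indeed,
\[
T_{\widehat{\mu}}(\omega)-\widehat{T}_{\widehat{\mu}}(\omega)=\widehat{\psi}(\omega)-\psi(\omega)=\frac{\widehat{\phi}(\omega)-\phi(\omega)}{\phi_D(\omega)}.
\]
Here the deterministic term $(1-\alpha)e^{2\pi i\omega\cdot\widehat{\mu}}$ appears in both $T$ and $\widehat T$, so it drops out. Consequently no union bound over $\mathcal{C}_\mu$ is required, and it suffices to show that $|\widehat{\phi}(\omega)-\phi(\omega)|\le \eta\delta$ simultaneously for all $\omega\in\mathcal{C}_\omega$ with $|\phi_D(\omega)|\ge\delta$. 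Dividing by $|\phi_D(\omega)|\ge\delta$ then gives the claimed bound of $\eta$.

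For a fixed $\omega$, the quantity $\widehat{\phi}(\omega)=\frac1n\sum_j e^{2\pi i\omega\cdot x_j}$ is an average of i.i.d. complex random variables of modulus $1$ whose mean is $\phi(\omega)$. I would handle the real part, the average of $\cos(2\pi\omega\cdot x_j)\in[-1,1]$, and the imaginary part, the average of $\sin(2\pi\omega\cdot x_j)\in[-1,1]$, separately, applying Hoeffding's inequality to each. This gives that each part deviates by more than $\eta\delta/\sqrt2$ with probability at most $2\exp(-n(\eta\delta)^2/4)$. Whenever both parts are within $\eta\delta/\sqrt2$, the modulus of the deviation is at most $\eta\delta$. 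Thus the failure probability at a single $\omega$ is at most $4\exp(-n(\eta\delta)^2/4)$.

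A union bound over at most $|\mathcal{C}_\omega|$ frequencies bounds the total failure probability by $4|\mathcal{C}_\omega|\exp(-n(\eta\delta)^2/4)$. This is at most $\tau$ as soon as $n\ge 4\log(4|\mathcal{C}_\omega|/\tau)/(\eta\delta)^2$, which is implied by the hypothesis $n\ge C\log(|\mathcal{C}_\omega|/\tau)/(\eta\delta)^2$ for a large enough constant $C$. One caveat is the degenerate case in which $|\mathcal{C}_\omega|/\tau$ is close to $1$; there the constant inside the logarithm can be absorbed, since $\tau<1$ ensures $\log(1/\tau)>0$ and $C$ may be increased to cover the additive $\log 4$.

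No step here is a real obstacle. The only points requiring care are noticing that the bound is uniform in $\widehat{\mu}$, because the $\widehat{\mu}$-term cancels, and keeping track of the constant factors that arise from the complex-valued Hoeffding bound and the union bound.
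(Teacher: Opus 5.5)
Your proposal is correct and follows the paper's proof essentially verbatim: the $\widehat\mu$-term cancels so the error is $(\widehat\phi-\phi)/\phi_D$, Hoeffding on the real and imaginary parts gives failure probability $4e^{-n(\eta\delta)^2/4}$ per frequency, and a union bound over $\mathcal{C}_\omega$ alone (with no factor for $\mathcal{C}_\mu$) finishes. One small correction: your handling of the degenerate case does not work with a universal $C$ when $|\mathcal{C}_\omega|=1$ and $\tau\to 1$, since then $\log(1/\tau)\to 0$ cannot absorb the additive $\log 4$ (and the statement is then actually false already for $n=1$); you should instead assume, e.g., $|\mathcal{C}_\omega|\ge 2$ or $\tau\le 1/2$, under which $C\ge 12$ suffices and which holds in the application.
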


Finally we can proceed with the proof of the main theorem. 

Denote by $S_{\omega}$ the search set of frequencies used by the algorithm $S_{\omega}=\{\omega\in \mathcal{C}_{\omega}: \phi_D(\omega)>\delta\}$ (see Line \ref{line:searchset}). 
Note that since $\mathcal{C}_{\omega}$ is an $\eta$-cover of $\mathcal{B}_d(B_\delta)$ with $\eta=\min(\delta/(2L), A/(2\pi R))$, by \Cref{cl:omega-cover} we 
have that for every $v\in \mathcal{B}_d(R)$ there exist an $(\eps,A/2,\delta/2)$-frequency-witness in $\mathcal{C}_{\omega}$.
Hence from \Cref{cl:large_T} we have that for any $\widehat{\mu}$ such that $\|\widehat{\mu}-\mu \|\geq \eps$
there exists an $\omega \in S_{\omega} $ such that $T_{\widehat{\mu}}(\omega)> T_{\mathrm{bad}}\eqdef (1-\alpha){A}-\alpha$.
While from \Cref{cl:small_T} we have that for a $\widehat{\mu}$ such that $\|\widehat{\mu}-\mu \|\leq \eps'$ (note that $\eps'\leq \eps$) for any 
$\omega \in S_{\omega}$, $T_{\widehat{\mu}}(\omega)<T_{\mathrm{good}}\eqdef 2(1-\alpha) \pi B_\delta\eps' + \alpha$.
Denote for convenience $c=(1-\alpha)A/2-\alpha$.
Note that since $\eps'< ((1-\alpha)A-2\alpha)/(4(1-\alpha)\pi B_\delta)$, we have that  $T_{\mathrm{bad}}>T_{\mathrm{good}}+c$.
Therefore, by \Cref{cl:concetration} and the fact that $n\geq  Cd \log(B_\delta/\eta)/(c\delta)^2$ for a sufficiently large constant $C$ we have that 
$\max_{\omega \in S_{\omega}}\widehat{T}_{\widehat{\mu}'}(\omega)< \max_{\omega \in S_{\omega}}\widehat{T}_{\widehat{\mu}}(\omega)$
for any $\mu,\mu'\in \mathcal{C}_{\mu}$ such that $\|\widehat{\mu}-\mu \|\geq \eps$ and $\|\widehat{\mu}'-\mu \|\leq  \eps'$.
Which implies that no $\widehat{\mu}$ with $\|\widehat{\mu}-\mu \|\geq \eps$  can be returned by the algorithm
since there exists a candidate $\widehat{\mu}'\in \mathcal{C}_{\mu}$ with $\|\widehat{\mu}'-\mu \|\leq \eps'$
and hence with  $\max_{\omega \in S_{\omega}}\widehat{T}_{\widehat{\mu}'}(\omega)< \max_{\omega \in S_{\omega}}\widehat{T}_{\widehat{\mu}}(\omega)$.
\end{proof}

\section{Sample Complexity Lower Bound} \label{sec:lb}

In this section, we derive a lower bound for 
one-dimensional distributions under the negation 
of the assumption used for the algorithm together 
with some other mild tail bounds on the distributions.

Recall that the Frequency-witness condition (\Cref{def:freq-witness}) used for the upper bound states that for all directions $v\in\R^d$ with $\|v\|\ge \eps$ there exists a frequency $\omega \in \R^d$ such that $v\cdot\omega$ is far from an integer and $|\phi_D(\omega)|$ is bounded away from $0$.
Our $1$d hard instances satisfy (a form of) the negation of this condition: $\phi_D(\omega)$ has negligible
$L^2$ mass whenever $\eps\omega$ is far from an integer, i.e., its Fourier mass concentrates in thin bands
around the lattice $\{\omega \approx k/\eps : k\in\Z\}$.

Additionally, we impose an assumption controlling the tails of our distribution. 
This assumption ensures that the distribution has sufficient  tail decay to relate the total variation distance (the $L^1$ norm) to the $L^2$ norm, and hence to convert Fourier-transform closeness into $L^2$ closeness via Parseval's identity.
Moreover, for our applications, the parameter for quantifying this decay is constant and therefore does not significantly affect our lower-bound complexity.

In particular, we present the following theorem:
\begin{theorem}[$1$-Dimensional Lower Bound]\label{thm:lowerbound1d}
There exists a sufficiently large universal constant $C>0$ such that the following hold. 
Fix $\alpha\in(0,1/2) $ and $ \eps, \delta,\sigma>0$ with $\sigma>C\delta$. Let $D$ be a distribution over $\R$ that satisfies the following conditions:
\begin{enumerate}[leftmargin=*, nosep]
    \item 
$\|\phi_D(\omega) \Ind\{\dist(\eps\omega,\Z) > c\alpha\}\|_{L^2}\leq \delta$, for a sufficiently small constant $c>0$.
\item For all $R>0$, $\pr_{x\sim D}[|x|\geq R]\leq \sigma/R$.
\end{enumerate}
Then, any algorithm that estimates the mean to error at most $\eps$, with probability at least $2/3$,
from $\alpha$-mean-shift contaminated samples (as in \Cref{def:cont}),
must use
\[
\Omega\!\left(\frac{1}{(\delta\sigma)^{1/3} \;+\; (\eps/\alpha)^3(\delta/\sigma)^2}\right)
\]
samples. In particular, if $(\eps/\alpha)^3(\delta/\sigma)^2 \le (\delta\sigma)^{1/3}$, then this simplifies to
$\Omega(1/(\delta\sigma)^{1/3})$ samples.
\end{theorem}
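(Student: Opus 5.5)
We will use Le Cam's two-point method. The plan is to construct two contaminated distributions $Q_1 = D * Y_1$ and $Q_2 = D * Y_2$ with the following properties. First, $Y_1$ places mass $1-\alpha$ on $+\eps$ and $Y_2$ places mass $1-\alpha$ on $-\eps$, so that both are valid $\alpha$-mean-shift contaminations of $D_{\eps}$ and $D_{-\eps}$. Second, $\dtv(Q_1,Q_2)\le \gamma$. Since the two clean means are $2\eps$ apart, an estimator with error at most $\eps$ (slightly shrink $\eps$ if needed) distinguishes them. Hence $n\gtrsim 1/\gamma$ samples are necessary, because $\dtv(Q_1^{\otimes n},Q_2^{\otimes n})\le n\gamma$. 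The goal is therefore $\gamma \lesssim (\delta\sigma)^{1/3} + (\eps/\alpha)^3(\delta/\sigma)^2$.

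We write $Y_i=(1-\alpha)\delta_{\pm\eps}+\alpha Z_i$ with $Z_i$ probability measures. Then
$$\Delta\phi=\phi_D\bigl((1-\alpha)(e^{2\pi i\eps\omega}-e^{-2\pi i\eps\omega})+\alpha(\phi_{Z_1}-\phi_{Z_2})\bigr).$$
We need a signed measure $g=Z_2-Z_1$ with total mass $0$ and $\|g\|_{TV}\le 2$ whose Fourier transform equals $\alpha^{-1}(1-\alpha)(e^{2\pi i\eps\omega}-e^{-2\pi i\eps\omega})$ on the bands $\dist(\eps\omega,\Z)\le c\alpha$. We also need $g$ to have good tails.

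For the construction, we multiply this trigonometric signal by a smooth $1/\eps$-periodic window $W(\omega)$. The window equals $1$ on the bands, vanishes outside the $2c\alpha$-neighbourhood of the lattice, and has derivatives bounded by $(\eps/\alpha)^k$; we obtain it by mollifying an indicator by convolution with scaled bumps. On the bands the signal has size $O(c\alpha)/\alpha=O(c)$. The product is a trigonometric-type function whose inverse transform is a combination of shifted copies of the inverse transform $\check W$, which is a discrete measure on $\eps\Z$ convolved with a smooth kernel. The $L^1$ mass of this product is $O(c)\le 1$ for small $c$, and its total mass is $0$, so it splits as $Z_2-Z_1$ with the remaining mass absorbed into a common part. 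Its tails decay polynomially, at a rate controlled by $(\eps/\alpha)$ raised to the smoothness order. I expect this to be the main obstacle: one must get a genuine probability-measure decomposition, positivity included, while retaining tails like $\Pr[|z|>R]\lesssim(\eps/\alpha)^3/R^{3}$ or similar. This requires the derivative bounds through the third order.

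With this choice $\Delta\phi$ vanishes on the bands, and off the bands $|\Delta\phi|\le 4|\phi_D|$, so condition 1 gives $\|\Delta\phi\|_{L^2}\lesssim\delta$. By Plancherel, $\|q_1-q_2\|_{L^2}\lesssim\delta$. We then convert to $L^1$ via \Cref{lem:Fourier2TV}. Split at radius $R$: by Cauchy–Schwarz, $\int_{|x|\le R}|q_1-q_2|\le\sqrt{2R}\,\delta$. The tail $\int_{|x|>R}$ is bounded by the tail of $D$, namely $\sigma/R$ by condition 2, plus the tail of $Y_i$, namely $(\eps/\alpha)^3$-type terms divided by a power of $R$; convolution tails add after splitting $R$ in half. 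Optimizing $\sqrt R\delta+\sigma/R$ at $R=(\sigma/\delta)^{2/3}$ gives $(\delta\sigma)^{1/3}$. At this $R$ the $Y$-tail term contributes $(\eps/\alpha)^3/R^3=(\eps/\alpha)^3(\delta/\sigma)^2$. Together these give $\gamma$ of the claimed form, and the assumption $\sigma>C\delta$ ensures $R\ge1$ so the bounds are in range. Combining with Le Cam yields the stated $\Omega(\cdot)$ bound.
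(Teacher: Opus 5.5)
Your outline follows the paper's proof step for step: the same two-point pair, the same $1/\eps$-periodized smooth window, the same Jordan decomposition with a common atom, and the same Plancherel/Cauchy--Schwarz split at $R=(\sigma/\delta)^{2/3}$. Your normalization $\pm\eps$ works as well as the paper's $\pm\eps/2$, since $\sin(2\pi\eps\omega)=O(c\alpha)$ on the bands. The gap is the step you flag as the main obstacle. You assert $\|g\|_{L^1}=O(c)$ and the $(\eps/\alpha)^3R^{-3}$ tail without proof. The reason you give for the first, that the signal is $O(c)$ on the window's support, does not imply it, because the general inequality goes the other way: $\|\widehat g\|_\infty\le\|g\|_{L^1}$. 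Applying the triangle inequality to the two shifted copies of $\check W$ gives only $\|g\|_{L^1}\le\frac{2(1-\alpha)}{\alpha}\|\check W\|_{L^1}$. This is at least $2(1-\alpha)/\alpha>2$, since $\check W$ has total mass $W(0)=1$, and that is exactly the obstruction to writing $g$ as a difference of probability measures. Positivity itself is not the issue: once $\|g\|_{L^1}\le 2$ and $\int g=0$, your common-part trick works. Also, $g$ is real because $W$ is real and even while the signal is $i$ times a real odd function.

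The missing idea is to write $g$ explicitly and use the cancellation between the two copies. Take $W(\omega)=\sum_m\widehat b(\omega-m/\eps)$ with $\widehat b$ an even bump of width $w\asymp c\alpha/\eps$. Then $\check W=\eps\sum_n b(n\eps)\,\delta_{n\eps}$; this is the kernel $b$ sampled on $\eps\Z$, not a comb convolved with a kernel. Hence
\[
g=\frac{(1-\alpha)\eps}{\alpha}\sum_{k\in\Z}\bigl(b((k-1)\eps)-b((k+1)\eps)\bigr)\,\delta_{k\eps}.
\]
Since $b(x)=w\,b_1(wx)$, the fundamental theorem of calculus gives $\|g\|_{L^1}\le\frac{2\eps}{\alpha}\|b'\|_{L^1}\lesssim\frac{\eps w}{\alpha}\asymp c$. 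The same argument gives $\|g\Ind\{|x|>R\}\|_{L^1}\le\frac{2\eps}{\alpha}\int_{|x|>R-\eps}|b'|$. To make this $\lesssim\frac{\eps}{\alpha}\cdot\frac{1}{w^2R^3}\asymp(\eps/\alpha)^3R^{-3}$, you need $|b'(x)|\lesssim w^2(w|x|)^{-4}$, that is, fourth-order derivative control of $\widehat b$ in $L^1$; the paper's four-fold box convolution or any $C^\infty$ bump provides this. Third-order control, as you suggest, gives only an $R^{-2}$ tail and a different second term.

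Finally, there is a harmless slip that the paper shares. At $R=(\sigma/\delta)^{2/3}$ both $\sqrt R\,\delta$ and $\sigma/R$ equal $\sigma^{1/3}\delta^{2/3}$, not $(\delta\sigma)^{1/3}$. This is at most $(\delta\sigma)^{1/3}$ when $\delta\le 1$. For $\delta>1$ the stated bound is vacuous, because $\sigma>C\delta$ makes the denominator exceed a constant.
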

Before we move to the proof we make some important remarks about the statement of \Cref{thm:lowerbound1d}.
\begin{remark}[High-Dimensional Lower Bound]\label{remark:hd-lb}
While \Cref{thm:lowerbound1d} is a lower bound against univariate distributions, it also induces a lower bound against multivariate distributions by considering product distributions.
In particular, let $D$ be a product distribution over $\R^d$ whose coordinate marginals are independent and each satisfy the conditions of \Cref{thm:lowerbound1d}.
In all but rare cases, estimating the mean to error $\eps$ requires at least $\Omega(d)$ samples.
Moreover, since the coordinates are independent, the one-dimensional lower bound (denoted by $\mathrm{lb}$) still applies: we may consider an instance in which both the mean displacement and the noise lie entirely in the first coordinate.
As a result, $\max(\mathrm{lb},\Omega(d))\geq (\mathrm{lb}+\Omega(d))/2$ is a lower bound.
By Hölder's inequality, this implies a lower bound  of $(\Omega(d))^{1/p}\mathrm{lb}^{1/q}$ for $1/p+1/q=1$.
Concretely, by taking $p$ close to $1$, this yields a lower bound of $d^{0.99}e^{\Omega((\alpha/\eps)^2)}$ against $d$-dimensional Gaussian distributions, which is tight up to polynomial factors.
\end{remark}
\begin{remark}[Condition of Upper \& Lower Bound] \label{remark:comparison}
We remark that the condition $\sin(\pi \omega \cdot  v)\ge A$ is equivalent to requiring that $v\cdot \omega$ be bounded away from integers: indeed $|\sin(\pi t)|=|\sin\!\big(\pi\,\dist(t,\Z)\big)|$, so $|\sin(\pi \omega \cdot  v)|\ge A$ iff $\dist(v \cdot \omega,\Z)\ge \arcsin(A)/\pi$.
Thus, the frequency–witness condition demands 
 frequencies $\omega$ where $\omega\cdot v$ is far from an integer and $D$ has nontrivial Fourier mass. 
In contrast, the lower bound’s assumes that the $L^2$–mass of
$\phi_D$ concentrates within small bands around integer points 
$\{\omega:\dist(v \cdot \omega,\Z)\le c\alpha\}$, i.e., regions where $|\sin(\pi\omega\!\cdot v)|\le \sin(\pi c\alpha)$ is small. 
Therefore, the regions that both conditions consider are essentially the same.

Now, the only mismatch is that the witness condition uses the $L^\infty$ norm of $\phi_D$ over $S \eqdef {\omega\in\R:\ \dist(\eps\omega,\Z)>c\alpha}$, whereas the lower-bound condition uses the $L^2$ norm. However, under an additional derivative assumption (which holds for all the examples studied in \Cref{sec:apps}), \Cref{cl:l2-from-linfty} implies that $|\phi_D\Ind_S|{L^2}\ \lesssim\ \sqrt{|\phi_D\Ind_S|{L^\infty}}$. Therefore, the upper and lower-bound conditions relate polynomially to each other. More generally, stronger smoothness yields faster Fourier decay and an even tighter link between pointwise and $L^2$ control.
\end{remark}

\begin{remark}[Tail Condition] Condition (2) of \Cref{thm:lowerbound1d} is a mild tail-decay requirement: it bounds the mass of $D$ outside $[-R,R]$ by $\sigma/R$, which is exactly what we need to control the truncation error in \Cref{lem:Fourier2TV}.
In particular, if $\E_{D}[|x|]\le \sigma$, then $\Pr_{D}(|x|\ge R)\le \sigma/R$ for all $R>0$ by Markov's inequality. Bounded variance also implies  Condition (2) as it implies bounded $\E_D[|x|]$.  
\end{remark}

To derive our lower bound, we construct two distributions $P$ and $Q$ whose characteristic functions are nearly indistinguishable on a large set of frequencies (the complement of the set in condition (1) of \Cref{thm:lowerbound1d}).
Let $p,q$ be the densitiy functions of $P,Q$.
To convert this Fourier-space closeness into a statistical complexity lower bound, we need an inequality that controls the distributional distance between distributions by the Fourier discrepancies of their characteristic functions.
The next lemma provides the key bridge: the total variation distance $d_{\mathrm{TV}}(P,Q)$ can be bounded in terms of the $L^2$-norm of the characteristic-function difference, $\|\Delta\phi\|_{L^2}$. and the $L^1$ discrepancy between the two densities on the tails, $\|(p-q)\Ind\{|x|>R\}\|_{L^1}$.
The proof is fairly standard and is deferred to \Cref{sec:omitted-lb}.

\begin{lemma}[Characteristic function to TV distance closeness]
\label{lem:Fourier2TV}
Let $P$ and $Q$ be distributions over $\R$ with densities $p$ and $q$ respectively.
Denote by $\Delta \phi\eqdef \phi_P-\phi_Q$ the difference of their characteristic functions.
Then, for every $R>0$,
{\setlength{\abovedisplayskip}{3pt}
 \setlength{\belowdisplayskip}{3pt}
 \setlength{\jot}{2pt}
\[
\dtv(P,Q)
\;\le\; \frac{1}{2}\|(p-q)\Ind\{|x|>R\}\|_{L^1}
+ \sqrt{\frac{R}{2}}\,\|\Delta\phi\|_{L^2}\;.
\]}
\end{lemma}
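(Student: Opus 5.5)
We split the $L^1$ norm of $p-q$ at the threshold $R$ and treat the two pieces separately. Since $\dtv(P,Q)=\frac12\|p-q\|_{L^1}$, we write
\[
\dtv(P,Q)=\tfrac12\|(p-q)\Ind\{|x|>R\}\|_{L^1}+\tfrac12\|(p-q)\Ind\{|x|\le R\}\|_{L^1}.
\]
The first term already appears in the claimed bound, so only the bulk term on $[-R,R]$ needs work.

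For the bulk term we apply Cauchy--Schwarz on the interval $[-R,R]$, which has length $2R$:
\[
\|(p-q)\Ind\{|x|\le R\}\|_{L^1}\le \sqrt{2R}\,\|(p-q)\Ind\{|x|\le R\}\|_{L^2}\le \sqrt{2R}\,\|p-q\|_{L^2}.
\]
Multiplying by $\frac12$ gives $\frac12\sqrt{2R}=\sqrt{R/2}$, which is exactly the stated constant.

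It remains to identify $\|p-q\|_{L^2}$ with $\|\Delta\phi\|_{L^2}$. With the convention $\phi_P(\omega)=\int p(x)e^{2\pi i\omega x}\,dx$ from \Cref{def:cf}, $\phi_P$ is the Fourier transform of $p$ (up to the reflection $\omega\mapsto-\omega$, which preserves $L^2$ norms), and the $2\pi$ in the exponent makes the transform unitary. Plancherel therefore gives $\|p-q\|_{L^2}=\|\Delta\phi\|_{L^2}$.

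The only delicate point is justifying Plancherel when $p-q$ lies in $L^1$ but possibly not in $L^2$. If $\|\Delta\phi\|_{L^2}=\infty$ the inequality is trivial. Otherwise $p-q\in L^1$ has Fourier transform $\Delta\phi\in L^2$, and the standard fact that an $L^1$ function with $L^2$ Fourier transform lies in $L^2$ with equal norm settles it. To verify that fact, mollify with a Gaussian $g_t$: $(p-q)*g_t$ lies in $L^1\cap L^2$ and has transform $\Delta\phi\,\widehat{g_t}$. Applying Plancherel to it and letting $t\to0$, Fatou's lemma on the spatial side and monotone convergence on the frequency side give $\|p-q\|_{L^2}\le\|\Delta\phi\|_{L^2}$. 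This one-sided inequality is all the argument needs.
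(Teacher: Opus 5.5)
Your proposal is correct and matches the paper's proof: split $\|p-q\|_{L^1}$ at $R$, apply Cauchy--Schwarz on $[-R,R]$ to pick up $\sqrt{2R}$, then use Plancherel to replace $\|p-q\|_{L^2}$ by $\|\Delta\phi\|_{L^2}$. Your Gaussian-mollification argument for the one-sided inequality $\|p-q\|_{L^2}\le\|\Delta\phi\|_{L^2}$ when $p-q$ is only known to lie in $L^1$ is a valid detail that the paper leaves implicit.
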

\vspace{-0.25cm}
Next we present the Fourier construction that we will use for our lower bound.
By \Cref{lem:Fourier2TV}, it is enough to control $\|\Delta\phi\|_{L^2}$ and $\|(p-q)\Ind\{|x|>R\}\|_{L^1}$.
The hypothesis on $D$ controls the $L^2$ contribution away from the lattice bands, since on that region
$\phi_D$ is small in $L^2$.
Thus, the main difficulty is controlling $\Delta\phi$ on the bands where $|\phi_D|$ may be large.
Fix $A>0$ (to be chosen) and suppose that $|\eps\omega-k|\leq A$ for some $k\in\Z$.
Note that, without noise, we have
\begin{align*}
|\phi_{D_{\eps/2}}(\omega)-\phi_{D_{-\eps/2}}(\omega)|
&=
|\phi_D(\omega)\bigl(e^{\pi i \eps \omega}-e^{-\pi i \eps \omega}\bigr)|\\
&=
2|\phi_D(\omega)\sin(\pi \eps \omega)|\\
&\leq
2|\phi_D(\omega)|\,|\sin(\pi A)|\;.
\end{align*}
While the factor $|\sin(\pi A)|$ is small for small $A$, this does not by itself control the $L^2$ norm on the
union of bands, because $\phi_D$ is otherwise unconstrained there and may have large  (and even infinite) $L^2$ mass (e.g., for $D=\mathcal N(0,\sigma^2)$,
$\phi_D(\omega)= e^{-{2\pi^2\sigma^2\omega^2}}$ and
$\|\phi_D(\omega)\|_{L^2}^2={\sqrt{\pi}}/{(2\pi\sigma)}$, which can be large for small $\sigma$).
However, for any point $\omega$ in this region the amplitude bound
$|\phi_D(\omega)|\,|\sin(\pi A)|\leq |\sin(\pi A)|$
is small when $A$ is small; in our setting we will take $A$ on the order of the noise rate $\alpha$.

This makes it possible to design two noise distributions such that the difference of their characteristic functions, multiplied by the noise rate $\alpha$, approximates $\bigl(e^{\pi i \eps \omega}-e^{-\pi i \eps \omega}\bigr)$ arbitrarily well on this region, essentially canceling its contribution to $\dtv$.
Crucially, this matching is performed  via a smooth periodic window, which also yields the tail control needed in \Cref{lem:Fourier2TV}.
Together, these two properties yield the desired result.

The next lemma formalizes this Fourier matching construction
(see \Cref{sec:omitted-lb} for the proof).

\begin{lemma}[Fourier Matching]\label{lem:construction}
Let $\eps,\alpha\in(0,1)$,  $\widehat{f}(\omega)\eqdef(1-\alpha)(e^{\pi i \eps\omega }-e^{-\pi i\eps\omega })/\alpha$ and $f\eqdef\mathcal{F}^{-1}[\;\widehat{f}\;]$.
For every $w>0$ there exists a function $\widehat{\rho}_w:\R\to \C$ with
$\widehat{\rho}_w(\omega)=1$ for all $\omega:\abs{\omega-i/\eps}\leq w$ for some $i\in \Z$,  
$\widehat{g}\eqdef \widehat{f}\cdot \widehat{\rho}_w$, and $g\eqdef\mathcal{F}^{-1}[\widehat{g}]$, such that the following hold:
\begin{enumerate}[leftmargin=*, nosep]
   \item $g$ is a real valued signed measure. 
    \item $\int_{-\infty}^\infty g(x)\,dx=0$.
    \item $\|g(x)\|_{L^1}\lesssim \eps w/\alpha$.
    \item For every $R>\max\{2\eps, 2/w\}$ it holds that $\|g\Ind\{|x|>R\}\|_{L^1}
\;\lesssim\;
\frac{\eps}{\alpha}  \frac{1}{w^2 R^3}.$
\end{enumerate}
\end{lemma}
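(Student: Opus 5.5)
The plan is to build $\widehat{\rho}_w$ as a periodization, with period $1/\eps$, of a smooth even bump $\widehat h$. Then $g$ becomes an explicit atomic signed measure on the lattice $\eps(\Z+\tfrac12)$, and the four properties reduce to estimates on the Fourier transform of the bump.

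\textbf{Construction.} Let $\widehat h=\Ind_{[-2w,2w]}*\kappa_w*\kappa_w*\kappa_w$, where $\kappa_w=\frac{1}{w/3}\Ind_{[-w/6,w/6]}$ is a normalized box. Then $\widehat h$ is even, nonnegative, equals $1$ on $[-w,w]$, and is supported in $[-3w,3w]$. Taking $w\lesssim 1/\eps$, set
\[
\widehat{\rho}_w(\omega)=\sum_{i\in\Z}\widehat h(\omega-i/\eps).
\]
This equals $1$ whenever $|\omega-i/\eps|\le w$.

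\textbf{Computing $g$.} With the convention $\phi(\omega)=\int e^{2\pi i\omega x}$, the inverse transform of $\widehat f$ is
\[
f=\tfrac{1-\alpha}{\alpha}\bigl(\delta_{\eps/2}-\delta_{-\eps/2}\bigr).
\]
Up to sign convention, the inverse transform of the periodic $\widehat\rho_w$ is the atomic measure $\sum_k c_k\delta_{k\eps}$. Its coefficients are $c_k=\eps\, h(k\eps)$, where $h=\mathcal F^{-1}[\widehat h]$ is real and even; this follows from Poisson summation. Hence
\[
g=f*\rho_w=\tfrac{1-\alpha}{\alpha}\sum_{k\in\Z}(c_k-c_{k+1})\,\delta_{(k+\frac12)\eps}.
\]

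\textbf{Properties 1 and 2.} Property 1 holds because $\widehat g$ is Hermitian: $\widehat f(-\omega)=\overline{\widehat f(\omega)}$ and $\widehat\rho_w$ is real and even. Equivalently, the $c_k$ are real. Property 2 holds because $\int g=\widehat g(0)=\widehat f(0)\widehat\rho_w(0)=0$; directly, the sum of $c_k-c_{k+1}$ telescopes to $0$.

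\textbf{Properties 3 and 4.} Both reduce to bounds on $|c_k-c_{k+1}|\le \eps^2\sup_{[k\eps,(k+1)\eps]}|h'|$. The function $h$ is $\frac{\sin(4\pi w x)}{\pi x}\cdot\sinc(wx/3)^3$ up to constants.

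\emph{Property 3.} We have $|h'(x)|\lesssim w^2$ for $|x|\lesssim 1/w$, and decay like $w^{-2}|x|^{-3}$, up to constants, beyond that. Summing with Riemann-sum comparison gives
\[
\sum_k|c_k-c_{k+1}|\lesssim \eps\int|h'|\lesssim \eps w .
\]
This yields $\|g\|_{L^1}\lesssim \eps w/\alpha$.

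\emph{Property 4.} For $R>\max\{2\eps,2/w\}$, the atoms with $|(k+\frac12)\eps|>R$ satisfy $|k\eps|\gtrsim R\gtrsim 1/w$. The tail sum is therefore bounded by $\eps\int_{|x|>R/2}|h'|$.

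\textbf{Main obstacle.} The hardest step is obtaining the exact decay rate $\int_{|x|>R}|h'|\lesssim w^{-2}R^{-3}$. This requires $|h'(x)|\lesssim w^{-2}x^{-4}$, i.e., enough smoothness of $\widehat h$: each box convolution buys a factor $(wx)^{-1}$, and the leading $\frac{\sin(4\pi wx)}{x}$ factor costs a factor $w$ on differentiation. I would first count powers carefully with three mollifying boxes. If that falls short, I would add a fourth box, which changes only constants. Where the sup over a cell fails to match the integral, I would replace the Riemann-sum comparison by the mean value theorem applied to $h'$ on each cell $[k\eps,(k+1)\eps]$, using $\eps\le R/2$.
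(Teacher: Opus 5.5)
Your proposal is correct and follows essentially the same route as the paper: a $1/\eps$-periodized window built by convolving one box with three small boxes, $g$ computed (via Poisson summation) as an atomic measure on $\eps(\Z+\tfrac12)$ whose weights are differences of consecutive samples $h(k\eps)$, and Properties 3--4 deduced from $\int_{\R}|h'|\lesssim w$ together with $|h'(x)|\lesssim w^{-2}|x|^{-4}$ for $|x|\gtrsim 1/w$. Three mollifying boxes already give this rate, exactly as in the paper, so no fourth box is needed.

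A few small fixes are worth making:
\begin{itemize}
\item Under Property 3 the decay should read $w^{-2}|x|^{-4}$, not $w^{-2}|x|^{-3}$; the latter would only give $\int|h'|\lesssim w+1$.
\item The bound $|c_k-c_{k+1}|\le \eps\int_{k\eps}^{(k+1)\eps}|h'|$ sums directly to $\eps\int_{\R}|h'|$, so no sup-versus-integral or Riemann-sum comparison is needed.
\item Your restriction $w\lesssim 1/\eps$ is harmless, and the paper's construction also needs it implicitly to avoid overlapping translates. For $w\gtrsim 1/\eps$ you may simply take $\widehat\rho_w\equiv 1$: then $g=f$, $\|f\|_{L^1}=2(1-\alpha)/\alpha\lesssim \eps w/\alpha$, and $f$ has no mass beyond $\eps/2$.
\end{itemize}
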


We remark on the difference of the above construction with previous work:
\begin{remark}[Construction of \cite{kotekal2025optimal}]
Our Fourier matching differs from \cite{kotekal2025optimal}. Their work focuses only on the Gaussian setting and enforces $\widehat P_0(\omega)=\widehat P_1(\omega)$ on an interval around the origin. For general distributions, matching only around $\omega=0$ is not sufficient. Instead, we apply a smooth window that equals $1$ on $[-w,w]$ and vanishes outside $[-2w,2w]$, and then periodize over multiples of $1/\eps$. In this way, the discrepancy is canceled across the entire set ${\omega:\dist(\omega\eps,\Z)\le c\alpha}$. Moreover, the smoothness of the window ensures that our construction is realized by distributions and provides the tail control needed for our total-variation bounds.
\end{remark}

\subsection{Proof of \Cref{thm:lowerbound1d}}
In this section, we prove our main lower bound result.

\begin{proof}[Proof of \Cref{thm:lowerbound1d}]
Let $Q_0,Q_1$ be arbitrary distributions over $\R$.
We define the distributions
$P_0\eqdef (1-\alpha)\delta_{\eps/2}*D+\alpha Q_0*D$ and
$P_1\eqdef (1-\alpha)\delta_{-\eps/2}*D+\alpha Q_1*D$.
Note that
\begin{align*}
&\Delta \phi(\omega)\eqdef\phi_{P_0}(\omega)-\phi_{P_1}(\omega)\\    
&=\phi_D(\omega)\big((1-\alpha)(e^{\pi i \eps \omega }-e^{-\pi i \eps \omega }) + \alpha(\phi_{Q_0}(\omega)-\phi_{Q_1}(\omega))\big).
\end{align*}

Note that any finite signed measure $h$ on $\R$ with $\|h\|_{L^1}=2$ and $\int_{\R} h(x)dx=0$ can be realized as the difference of two probability distributions by setting $Q_0$ to be  $(h(x))_+$ and $Q_1$ to be $(h(x))_-$ (where $h_+$ and $h_-$ the Jordan decomposition of $h$).

Therefore, by applying \Cref{lem:construction} with $w = c\alpha/\eps$ for a sufficiently small constant $c>0$, we get the existence of a signed measure $g$ with $\|g\|_{L^1}\leq 2$. 
Let $m\eqdef \|g\|_{L^1}/2<1$ and define  $Q_0$ and $Q_1$ as the probability measures  
\[
Q_0 \eqdef g_- + (1-m)\delta_0,\qquad Q_1 \eqdef g_+ + (1-m)\delta_0,
\]
respectively. Note that $Q_0-Q_1=-g$.
Therefore,  we obtain the existence of $Q_0, Q_1$ such that
$\Delta \phi(\omega) = 0$ for all $\omega \in \R$ satisfying $|\omega - i/\eps| \leq c\alpha/\eps$ for some $i\in \Z$.

Now we use this fact along with our assumptions to bound the total variation distance between $P_0$ and $P_1$.
For notational simplicity let
$E_0\eqdef (1-\alpha)\delta_{\eps/2}+\alpha Q_0$,  $E_1\eqdef (1-\alpha)\delta_{-\eps/2}+\alpha Q_1$
and denote by $\Delta \phi_E (\omega)=\phi_{E_0}(\omega)- \phi_{E_1}(\omega)$. Thus, we have that $\Delta \phi=\phi_D\Delta \phi_E$ and $P_j=D*E_j$.

We will use \Cref{lem:Fourier2TV} to bound $\dtv(P_0,P_1)$.
Let $p_0,p_1$ be the densities of $P_0,P_1$, and write $h\eqdef p_0-p_1$. By \Cref{lem:Fourier2TV}, for every $R>0$,
\begin{align*}
\dtv(P_0,P_1)
\le\frac12\|h(x)\Ind\{|x|>R\}\|_{L^1} {+} \sqrt{\frac{R}{2}}\|\Delta\phi\|_{L^2}.
\end{align*}
We first bound the Fourier term.
Since $|\Delta \phi_E(\omega)|\le 2$ by \Cref{fact:fourierFacts}, we have
\begin{align*}
 \|\Delta\phi\|_{L^2}
&=
  \|\phi_D(\omega)\Delta \phi_E(\omega)\Ind\{\dist(\eps\omega,\Z) > c\alpha\}\|_{L^2}\\
  &\leq 2\,\|\phi_D(\omega)\Ind\{\dist(\eps\omega,\Z) > c\alpha\}\|_{L^2}
  \le 2\delta\;,
\end{align*}
where in the last inequality we used the assumption that
$\|\phi_D(\omega)\Ind\{\dist(\eps\omega,\Z) > c\alpha\}\|_{L^2}\le \delta$.
Therefore,
\[
(R/2)^{1/2}\|\Delta\phi\|_{L^2}\ \lesssim\ \sqrt{R}\,\delta\;.
\]
It remains to bound the tail term $\|h\Ind\{|x|>R\}\|_{L^1}$. Since $P_j=D*E_j$, we have
$
h = p_0-p_1 =D*(E_0-E_1).
$
Let $\Delta E\eqdef E_0-E_1$ (a finite signed measure). Using $|D*\Delta E|\le D*|\Delta E|$, we get
\begin{align*}
&\|h\Ind\{|x|>R\}\|_{L^1}\\
&=\int_{|x|>R}\big|(D*\Delta E)(x)\big|\,dx
\ \le\ \int_{|x|>R}(D*|\Delta E|)(x)\,dx.
\end{align*}
Define  $E'$ as the measure $|\Delta E(x)|/\int_{\R}|\Delta E|(y)dy$. Now by the union bound we have multiplying and dividing by $\int_{\R}|\Delta E|(y)dy$ gives us
\begin{align*}
&\int_{|x|>R}(D*|\Delta E|)(x)\,dx\\
&\leq 
\Pr_{D}[|x|>R/2]\int_{\R}|\Delta E|\d y\;+\;\int_{|y|>R/2}|\Delta E|\d y\;.
\end{align*}
Since $E_0,E_1$ are probability distributions, $\int_{\R}|\Delta E|\d y\le 2$. Therefore,
\begin{align*}
\|h\Ind\{|x|{>}R\}\|_{L^1}
\le2\Pr_{D}[|x|>R/2]+\int_{|y|>R/2}|\Delta E|\d y.
\end{align*}
Next we bound $\int_{|y|>R/2}|\Delta E|\d y$ using the structure of $\Delta E$.
By definition,
$
\Delta E=(1-\alpha)(\delta_{\eps/2}-\delta_{-\eps/2})+\alpha(Q_0-Q_1).
$
By our choice of $Q_0,Q_1$  we have 
$\Delta E=(1-\alpha)(\delta_{\eps/2}-\delta_{-\eps/2})-\alpha g.$
Thus $|\Delta E|\le (1-\alpha)(\delta_{\eps/2}+\delta_{-\eps/2})+\alpha|g|$. In particular, for $R>\eps$ 
\begin{align*}
\int_{|y|>R/2}|\Delta E|\d y
&\le \alpha\int_{|y|>R/2}|g(y)|\,dy\\
&=\alpha\|g\Ind\{|y|>R/2\}\|_{L^1}.    
\end{align*}

{Consider $R$ greater than a sufficiently large constant.}
By item (4) of \Cref{lem:construction} with radius $R/2$ (note that  $R>\max(4\eps,4/w)$, since we have that $1/w=O(\eps/\alpha)=O(1)$ and $R$ is greater than a sufficiently large constant), we obtain
\[
\|g\Ind\{|y|>R/2\}\|_{L^1}\ \lesssim\ \frac{\eps}{\alpha}\cdot \frac{1}{w^2 (R/2)^3}
\ \lesssim\  \left(\frac{\eps}{\alpha R}\right)^3\;.
\]
Combining the above bounds gives
\[
\|h\Ind\{|x|>R\}\|_{L^1}
 \le 2\Pr_{x\sim D}[|x|{>}R/2]+O\!\left(\left(\frac{\eps}{\alpha R}\right)^3\right).
\]
Therefore,  
\begin{align*}
\dtv(P_0,P_1)
&\lesssim \frac{\sigma}{R}\;+\left(\frac{\eps}{\alpha R}\right)^3+\;\sqrt{R}\,\delta\;.
\end{align*}
Choosing $R=(\sigma/\delta)^{2/3}$, note that $\sigma/\delta$ is assumed to be less than a sufficiently large constant, we have  

\[
\dtv(P_0,P_1)\ \lesssim  (\delta\sigma)^{1/3} +(\eps/\alpha)^{3}(\delta/\sigma)^2   \;.
\]
Finally, consider the decision problem of having samples generated from either $P_0$ or $P_1$ and choosing which generated the samples.
This problem is harder than estimating the mean under the mean-shift-contamination model to error $\eps/2$, since then means associated with $P_0,P_1$ have distance $\eps$.
Given that $P_0,P_1$ are drawn with probability $1/2$ to generate the samples, then using $n$ i.i.d.\ samples we have that the optimal error for the decision problem is
\[
p^\star(n)=1/2\bigl(1-d_{TV}(P_0^{\otimes n},P_1^{\otimes n})\bigr)\;.
\]
By tensorization, $d_{TV}(P_0^{\otimes n},P_1^{\otimes n}) \le 1-(1-\dtv(P_0,P_1))^n$, hence
\(
p^\star(n)\ge 1/2(1-\dtv(P_0,P_1))^n.
\)
Hence for achieving error less than $1/3$, it is necessary that
\(
(1-\dtv(P_0,P_1))^n \le 2/3
\)
which implies that
\(
n \ge \log(3/2)/\dtv(P_0,P_1) = \Omega(1/\dtv(P_0,P_1)).
\)
Combining with the bound above concludes the proof of  \Cref{thm:lowerbound1d}.
\end{proof}

\bibliographystyle{alphaabbr}

\bibliography{allrefs}

\newpage

\appendix

\section*{Appendix}
The appendix is structured as follows:  First \Cref{sec:omitted-facts} includes additional preliminaries required in subsequent technical sections. 
\Cref{sec:omitted-lb} contains the proofs of
the technical lemmas of the lower bound (\Cref{sec:lb}).
Finally, \Cref{sec:apps} contains applications of our theorems to well-known distributions.

\section{Omitted Facts and Preliminaries}\label{sec:omitted-facts}
\paragraph{Additional Notation:}
We define the Dirac delta function $\delta(\cdot)$ as the distribution with the property that for any measurable $f:\R^d \to \R$, we have $\int_{\R^d} f(x)\delta(x)dx=f(0)$.
We denote by $\chi_w:\R\to \R$ the rectangular window function $\chi_w(x)\eqdef\Ind\{x\in [-w,w]\}$. 
For a complex number $z\in \C$, $z=a+bi$ we denote by $\overline{z}$ its conjugate $\overline{z}=a-bi$.
We extend the $L^1$-notation to finite signed measures (and, more generally, distributions) by duality: for such $g$ define $\|g\|_{L^1}\eqdef \sup_{\|f\|_\infty\le 1}\langle g,f\rangle$, where $\langle g,f\rangle=\int f\,dg$ when $g$ is a (signed) measure, so in particular $\|\delta_z\|_{L^1}=1$; when $g$ admits a density $g(x)\,dx$ this agrees with $\int_{\R^d}|g(x)|\,dx$. 
Let $f^{*m}$ denote the $m$-fold convolution of $f$ with itself, i.e., $f^{*m}=f*\cdots*f$ with $m$ factors.

\begin{fact}[Useful Fourier Transform facts (see \cite{stein2011fourier})]\label{fact:fourierFacts}
Let $f,\phi$ be finite measures and let $\phi(\omega)=\mathcal{F}[f]$. 
Then the following properties hold:
\begin{enumerate}[label=(\roman*)]
  \item If $\phi$ is a function and $\phi(-\omega)=\overline{\phi(\omega)}$ then for all $\omega\in\R$, then $f$ is a real-valued measure.
  \item If $\int_\R |f(x)| dx\leq 1$, then $|\phi(\omega )|\leq 1$, for all $\omega \in \R$.
   \item If $\phi$ is $T$-periodic for some $T>0$, then the inverse Fourier transform is the discrete measure 
  \[
    f(x)=\sum_{k\in\Z} a_k\,\delta\bigl(x - k/T\bigr)
  \text{ with }   
    a_k = \frac{1}{T}\int_0^T \phi(\omega)e^{-2\pi i k \omega / T}\,d\omega,k\in\Z\;.
  \]
\end{enumerate}

\end{fact}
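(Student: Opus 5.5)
The three items are standard, and I would prove each directly from the convention $\phi(\omega)=\mathcal{F}[f](\omega)=\int e^{2\pi i\omega x}\,df(x)$ of \Cref{def:cf}. For item (iii) I would work in the space of tempered distributions, where $\mathcal{F}$ is a bijection, so that "the inverse Fourier transform" of a bounded periodic function makes sense.

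For item (ii), the triangle inequality gives, for every $\omega$,
\[
|\phi(\omega)|=\Bigl|\int e^{2\pi i\omega x}\,df(x)\Bigr|\le\int|e^{2\pi i\omega x}|\,d|f|(x)=\|f\|_{L^1}\le 1 .
\]
This is immediate.

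For item (i), I would show that $\overline{f}$ and $f$ have the same Fourier transform and then invoke uniqueness.
\begin{itemize}
\item Compute $\mathcal{F}[\overline f](\omega)=\int e^{2\pi i\omega x}\,d\overline f(x)=\overline{\int e^{-2\pi i\omega x}\,df(x)}=\overline{\phi(-\omega)}$.
\item The hypothesis $\phi(-\omega)=\overline{\phi(\omega)}$ then gives $\overline{\phi(-\omega)}=\phi(\omega)$, so $\mathcal{F}[\overline f]=\mathcal{F}[f]$.
\item Injectivity of the Fourier transform on finite (complex) measures, or more generally on tempered distributions, yields $\overline f=f$, i.e.\ $f$ is real-valued.
\end{itemize}

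For item (iii), I would expand $\phi$ in a Fourier series and transform term by term.
\begin{itemize}
\item Since $\phi$ is $T$-periodic (and bounded, being a transform of a finite measure), its Fourier coefficients $a_k=\frac1T\int_0^T\phi(\omega)e^{-2\pi ik\omega/T}\,d\omega$ are well defined and bounded. The series $\sum_k a_k e^{2\pi ik\omega/T}$ converges to $\phi$ in the sense of tempered distributions: the partial sums converge in $L^2_{\mathrm{loc}}$ with uniformly controlled growth, so pairing against a Schwartz test function passes to the limit.
\item Under our sign convention, $\mathcal{F}[\delta_{k/T}](\omega)=e^{2\pi i\omega k/T}$.
\item Because $\mathcal{F}$ is continuous on $\mathcal{S}'$, applying $\mathcal{F}^{-1}$ term by term gives $f=\sum_k a_k\,\delta(x-k/T)$, which is the claimed formula with exactly the stated coefficients.
\end{itemize}

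The only delicate point is the convergence justification in (iii). One must argue that the Fourier series of a bounded periodic function converges distributionally, or in $\mathcal{S}'$. I expect this to follow from $|a_k|\le\|\phi\|_\infty$: the measure $\sum_k a_k\delta_{k/T}$ is then tempered, and the two sides agree when paired with Schwartz functions, by Poisson summation applied to the test function. That agreement, together with uniqueness, closes the argument.
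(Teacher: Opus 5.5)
Your proposal is correct. The paper gives no proof of this fact; it is quoted as standard with a pointer to Stein--Shakarchi. Your three arguments are exactly the textbook proofs that citation stands for: the triangle inequality for (ii); uniqueness of the Fourier--Stieltjes transform applied to $\overline f$ for (i); and, for (iii), a Fourier-series expansion of $\phi$ pushed through the continuity of $\mathcal F^{-1}$ on $\mathcal S'$. Under the sign convention of \Cref{def:cf}, your coefficients agree with the stated $a_k$.

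Your $\mathcal S'$ framing also fixes something the literal statement gets wrong. A nonzero $T$-periodic $\phi$ can never be a finite measure on $\R$, so hypothesis (iii) as written only covers $\phi\equiv 0$. Moreover, in the proof of \Cref{lem:construction_appendix} the fact is applied to $\widehat g=\widehat f\,\widehat\rho_w$ before it is known that $g$ is a finite measure. Finiteness of $\|g\|_{L^1}$ is derived only afterwards, from the computed coefficients. Your argument for (iii), and the distributional version of your argument for (i), use only that $\phi$ is bounded and periodic. They therefore cover precisely the situation in which the paper actually invokes the fact.
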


\begin{fact}[Fourier Derivatives \cite{grafakos2008classical}] \label{fact:Fourierder2tails}
Let $f:\R^d\to \R$ be absolutely integrable function with absolutely integrable partial derivatives and denote $\widehat{f}\eqdef\mathcal{F}[f]$.
Then, for all $j\in [d]$ it holds that $\mathcal{F}[\partial f/\partial x_j]= -2\pi i\omega_j \widehat{f}(\omega)$

\begin{fact}[$\eps$-Cover of a Ball (see \cite{vershynin})]\label{fact:cover}
Let $R>0$ and $d\in\Z_+$. There exists an $\eps$-cover $G$ of $ \mathcal{B}_d(R)$ in $\ell_2$ such that
\(
\log |G| \le C\, d \log(R/\eps),
\)
for a universal constant $C>0$. In particular, one may take $|G|\le (1+4R/\eps)^d$.
\end{fact}
\begin{fact}[Adversarially Robust Estimator \cite{diakonikolas2023algorithmic}]\label{fact:huber_estimator}
Let $0<\alpha<1/3$ and let $D$ be a distribution over $\R^d$ with $\mu\eqdef \E_{x\sim D}[x]$ such that $\mathrm{Cov}(D)\preceq  \sigma^2 I$.
There exists an algorithm that, given $\alpha$, $\sigma$, and $N+O(d\log(1/\delta)/\alpha)$ i.i.d. Huber $\alpha$-corrupted samples (samples from a distribution of the form $(1-\alpha)D+\alpha N$ for an arbitrary distribution $N$ over $\R^d$), in $\poly(Nd)$ time returns $\widehat{\mu}$ such that with probability $1-\delta$ it holds
$\|\widehat{\mu} - \mu\| = O(\sigma \sqrt{\alpha})$.
\end{fact}

\end{fact}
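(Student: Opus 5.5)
The plan is to prove the identity by integration by parts against a truncated exponential. Throughout I use the paper's convention $\widehat f(\omega)=\mathcal F[f](\omega)=\int_{\R^d} f(x)\,e^{2\pi i\,\omega\cdot x}\,dx$ (cf.\ \Cref{def:cf}). With this convention, formally
\[
\int \tfrac{\partial f}{\partial x_j}\,e^{2\pi i\omega\cdot x}\,dx=-\int f\,\tfrac{\partial}{\partial x_j}e^{2\pi i\omega\cdot x}\,dx=-2\pi i\omega_j\widehat f(\omega),
\]
which is exactly the claimed sign. The only work is to justify the vanishing of boundary terms using nothing more than $f,\partial f/\partial x_j\in L^1(\R^d)$. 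I interpret $\partial f/\partial x_j$ as the weak derivative, so that $\int \partial_j f\,\psi=-\int f\,\partial_j\psi$ for every $\psi\in C_c^\infty(\R^d)$. This identity also holds for complex-valued $\psi$, by splitting into real and imaginary parts.

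\textbf{Step 1: truncated test functions.} Fix $\omega\in\R^d$ and a cutoff $\chi\in C_c^\infty(\R^d)$ with $0\le\chi\le1$ and $\chi\equiv1$ on $\mathcal B_d(1)$. For $N\ge1$ set $\psi_N(x)\eqdef\chi(x/N)\,e^{2\pi i\omega\cdot x}$, which lies in $C_c^\infty$. The weak-derivative identity gives
\[
\int \partial_j f\,\psi_N\,dx=-\int f\,\partial_j\psi_N\,dx .
\]
A direct computation gives
\[
\partial_j\psi_N(x)=2\pi i\omega_j\,\psi_N(x)+N^{-1}(\partial_j\chi)(x/N)\,e^{2\pi i\omega\cdot x}.
\]

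\textbf{Step 2: pass to the limit $N\to\infty$.} On the left, $|\partial_j f\,\psi_N|\le|\partial_j f|\in L^1$ and $\psi_N\to e^{2\pi i\omega\cdot x}$ pointwise. Dominated convergence therefore gives convergence to $\mathcal F[\partial_j f](\omega)$. On the right, the first term converges to $-2\pi i\omega_j\widehat f(\omega)$ in the same way, since it is dominated by $2\pi|\omega_j|\,|f|$. The error term is bounded by $N^{-1}\|\partial_j\chi\|_\infty\|f\|_{L^1}\to0$. Combining the two sides yields $\mathcal F[\partial f/\partial x_j](\omega)=-2\pi i\omega_j\widehat f(\omega)$ for every $\omega$.

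\textbf{Main obstacle.} The only delicate point is the boundary terms at infinity, since classical integration by parts would need decay of $f$ itself. The cutoff argument sidesteps this: the error is controlled by $\|f\|_{L^1}/N$, so integrability of $f$ alone suffices.

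If one prefers classical derivatives, the cutoff argument remains valid, since classical derivatives that are integrable are in particular weak derivatives. Alternatively, one can argue slice-wise via Fubini. For a.e.\ $x'$, the function $t\mapsto f(x',t)$ is absolutely continuous with integrable derivative, so it has limits at $\pm\infty$. These limits must be $0$ because the slice is integrable, and one-dimensional integration by parts then applies directly.
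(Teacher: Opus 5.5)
The paper proves none of this block; each fact is only cited, with the derivative identity attributed to Grafakos. Your cutoff argument is a correct, self-contained proof of the Fourier-derivative identity under exactly the stated hypotheses. It needs only three ingredients:
\begin{enumerate}
\item reading $\partial_j f$ as the weak derivative and testing against $\psi_N(x)=\chi(x/N)e^{2\pi i\omega\cdot x}$;
\item dominated convergence on both main terms;
\item the bound $N^{-1}\|\partial_j\chi\|_\infty\|f\|_{L^1}$ on the cutoff error.
\end{enumerate}
Your sign $-2\pi i\omega_j$ is the right one for the paper's $e^{+2\pi i\omega\cdot x}$ convention. The appendix proof of \Cref{cl:witness_norm_bound-app} writes $+2\pi i\omega_j\phi_D(\omega)$ instead, which is harmless since only moduli are used there.

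Compared with a bare citation, your route has the advantage of covering the cases the paper actually needs. The Laplace density is only Lipschitz. For $\mathrm{Unif}([-1,1])$ the paper uses a derivative that is a finite measure $\nu$, which your argument handles verbatim by dominating $\psi_N$ against $|\nu|$.

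One caution about your closing remark on classical derivatives. Integrable classical partial derivatives are weak derivatives if they exist at every point, because an everywhere-differentiable function on an interval with integrable derivative is absolutely continuous. This fails if they exist only almost everywhere. A compactly supported Cantor-type bump has a.e.\ derivative $0$ while $\widehat f\not\equiv 0$, so the identity fails. This is precisely why the weak reading is the correct one.

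Because the Fourier-derivative fact environment is closed only after the next two facts, the extracted statement also contains the covering fact and the robust-estimator fact. Your proposal does not touch either, so as a proof of the whole block it is incomplete, although these are independent cited results.

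\textbf{Covering fact.} The standard volumetric argument suffices. A maximal $\eps$-separated subset $G\subseteq\mathcal{B}_d(R)$ is an $\eps$-cover. The disjoint balls of radius $\eps/2$ around its points lie in $\mathcal{B}_d(R+\eps/2)$, so $|G|\le(1+2R/\eps)^d\le(1+4R/\eps)^d$. Note that the form $\log|G|\le Cd\log(R/\eps)$ needs $R/\eps$ bounded away from $1$ (say $R\ge 2\eps$). For $R<\eps$ its right-hand side is negative while $\log|G|\ge 0$.

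\textbf{Robust-estimator fact.} The bounded-covariance robust estimator is a genuine algorithmic theorem (filtering). It is not reachable with the Fourier tools in your proposal and, as in the paper, would have to be cited.
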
   

\section{Omitted Content from \Cref{sec:upper}}\label{sec:omitted-upper}
In this section, we provide the content omitted from our upper bound section.
\begin{remark}[Access to distribution $D$]\label{rem:access}
We remark that even though \Cref{alg:freq-tournament} uses a 
value oracle for $\phi_D$ (which is available for most well-known distributions) for simplicity,
a sampling oracle for $D$ suffices.
In particular, because $\abs{\phi_D(\omega)}\le 1$ and all evaluations occur on a finite grid $\mathcal C_\omega$, the empirical characteristic function $\widehat\phi_D$ concentrates uniformly on $\mathcal C_\omega$ with a modest number of clean samples. Hence both uses of $\phi_D$ can be handled from data: (i) to construct $S_\omega$, we can threshold $\widehat\phi_D$ at a slightly smaller level so that frequency witnesses are retained and every included $\omega$ has $|\phi_D(\omega)|$ bounded away from $0$ w.h.p.; and (ii) to compute $\widehat\psi$, we can divide by $\widehat\phi_D(\omega)$ instead of $\phi_D(\omega)$, and $\widehat\psi$ turns out to be stable on the new $S_\omega$ since $|\phi_D(\omega)|$ bounded away from $0$. Consequently, with an additional clean sample budget on the order of the contaminated budget, a sampling oracle fully replaces the value oracle without altering the stated rates.
\end{remark}
In what follows, we provide the proofs of the claims in \Cref{thm:upper}. We remind the reader that $B_{\delta},\phi,\widehat{\phi},T_{\widehat{\mu}},\widehat{T}_{\widehat{\mu}},\psi,\widehat{\psi},D,Q,\phi_D$ and $\phi_Q$ are the quantities defined in the proof of \Cref{thm:upper}.

\begin{claim}[Every frequency witness has bounded norm]\label{cl:witness_norm_bound-app}
Fix $\eps,A,\delta\in(0,1)$.
If $\omega$ is an $(\eps,A,\delta)$-frequency-witness for some direction $v$
(i.e., $|\sin(\pi v\cdot \omega)|\ge A$ and $|\phi_D(\omega)|\ge \delta$),
then necessarily $\|\omega\|\le B_\delta$ where
$
B_\delta \;\eqdef\; \frac{\sqrt d\,M_1}{2\pi\,\delta}\;.
$
\end{claim}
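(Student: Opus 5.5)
The plan is to show that smoothness of the density $p_D$ forces $|\phi_D(\omega)|$ to decay like $1/\|\omega\|$. Since a witness has $|\phi_D(\omega)|\ge\delta$, this caps its norm. Note that the condition $|\sin(\pi v\cdot\omega)|\ge A$ plays no role; only $|\phi_D(\omega)|\ge\delta$ is used.

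First, I apply \Cref{fact:Fourierder2tails} to $f=p_D$, which is absolutely integrable and, by assumption 2 of \Cref{thm:upper}, has absolutely integrable partial derivatives. Up to the sign convention of the transform (the paper uses $e^{2\pi i\omega\cdot x}$ for $\phi_D$, which only flips the sign), this gives for every $j\in[d]$
\[
\mathcal{F}\Bigl[\tfrac{\partial p_D}{\partial x_j}\Bigr](\omega) = \pm 2\pi i\,\omega_j\,\phi_D(\omega).
\]
By the trivial bound $|\mathcal{F}[h](\omega)|\le\|h\|_{L^1}$ (\Cref{fact:fourierFacts}(ii), after rescaling), it follows that
\[
2\pi|\omega_j|\,|\phi_D(\omega)|\le \Bigl\|\tfrac{\partial p_D}{\partial x_j}\Bigr\|_{L^1}\le M_1 \quad\text{for all } j.
\]

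Next, I pass from coordinates to the Euclidean norm. Choose $j$ maximizing $|\omega_j|$. Then $|\omega_j|\ge\|\omega\|/\sqrt d$, and hence
\[
|\phi_D(\omega)|\le \frac{\sqrt d\,M_1}{2\pi\|\omega\|}.
\]
Combining this with $|\phi_D(\omega)|\ge\delta$ gives $\|\omega\|\le \sqrt d\,M_1/(2\pi\delta)=B_\delta$, which is the claim.

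There is no real obstacle. The only point needing care is justifying the derivative identity, which is exactly what \Cref{fact:Fourierder2tails} provides under the stated integrability. It is also worth noting that the constant is unaffected by the sign or normalization convention, because only absolute values enter.
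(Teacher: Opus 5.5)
Your proposal is correct and follows the paper's proof. Like the paper, you use the derivative identity from \Cref{fact:Fourierder2tails} to get $2\pi|\omega_j|\,|\phi_D(\omega)|\le M_1$, then choose $j$ with $|\omega_j|=\|\omega\|_\infty\ge\|\omega\|/\sqrt d$. The only cosmetic difference is that the paper invokes $A>0$ to rule out $\omega=0$ before dividing by $\omega_j$, whereas in your version the case $\omega=0$ is simply trivial, so the sine condition is indeed never needed.
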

\begin{proof}[Proof of \Cref{cl:witness_norm_bound-app}]
Since $A>0$, any frequency witness must satisfy $\omega\neq 0$.

Let $\omega\in\R^d$ and choose an index $j\in[d]$. 
By \Cref{fact:Fourierder2tails} we have 
\begin{align*}
\int_{\R^d} \frac{\partial}{\partial x_{j}} p_D(x)\,e^{2\pi i\,\omega\cdot x}\,dx= 
2\pi i \omega_j \phi_D(\omega)\;. 
\end{align*}
Which implies 
\[
\phi_D(\omega)
=\frac{1}{2\pi i\,\omega_{j}}
\int_{\R^d} \frac{\partial}{\partial x_{j}}p_D(x)\,e^{2\pi i\,\omega\cdot x}\,dx\;.
\]
By choosing $j$ such that $\|\omega_j\|=\|\omega\|_{\infty}$ we have 
\[
|\phi_D(\omega)|
\le \frac{1}{2\pi|\omega_{j}|}\left\|\frac{\partial}{\partial x_{j}}p_D\right\|_{L^1(\R^d)}
\le \frac{M_1}{2\pi\|\omega\|_\infty}\leq \frac{\sqrt d\,M_1}{2\pi\|\omega\| }\;.
\]
Which concludes the proof of \Cref{cl:witness_norm_bound-app}.
\end{proof}
\begin{claim}[Distant candidates have a large $T_{\widehat{\mu}}$]\label{cl:large_T-app}
Let $\widehat{\mu}\in \R^d$ be candidate mean such that $\|\widehat{\mu} - \mu\| > \eps$. 
If $\omega\in \R^d$  is a $(\eps,A,\delta)$-frequency-witness for the direction $v\eqdef \widehat{\mu}-\mu$, then
$\abs{T_{\widehat{\mu}}(\omega)}\geq 2(1-\alpha){A}-\alpha $.
\end{claim}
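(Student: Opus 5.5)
The plan is to expand $T_{\widehat{\mu}}(\omega)$ using the population formula for $\psi$ and then apply the reverse triangle inequality. By definition,
\[
T_{\widehat{\mu}}(\omega)=(1-\alpha)e^{2\pi i\omega\cdot\widehat{\mu}}-\psi(\omega)
=(1-\alpha)\bigl(e^{2\pi i\omega\cdot\widehat{\mu}}-e^{2\pi i\omega\cdot\mu}\bigr)-\alpha\,\phi_Q(\omega).
\]
This rests on the identity $\psi(\omega)=(1-\alpha)e^{2\pi i\omega\cdot\mu}+\alpha\phi_Q(\omega)$ recorded in the proof of \Cref{thm:upper}. That identity is valid because a witness has $|\phi_D(\omega)|\ge\delta>0$, so the division defining $\psi$ makes sense. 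The adversarial term is controlled by \Cref{fact:fourierFacts}(ii): since $Q$ is a probability measure, $|\phi_Q(\omega)|\le 1$, so the second summand has modulus at most $\alpha$.

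Next I compute the modulus of the main term exactly. Factor out $e^{2\pi i\omega\cdot\mu}$ and write $v=\widehat{\mu}-\mu$. Then
\[
\bigl|e^{2\pi i\omega\cdot\widehat{\mu}}-e^{2\pi i\omega\cdot\mu}\bigr|=|e^{2\pi i\omega\cdot v}-1|=|e^{\pi i\omega\cdot v}-e^{-\pi i\omega\cdot v}|=2|\sin(\pi\,\omega\cdot v)|.
\]
Because $\omega$ is an $(\eps,A,\delta)$-frequency-witness for $v$, we have $|\sin(\pi v\cdot\omega)|\ge A$, so the main term has modulus at least $2(1-\alpha)A$.

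Combining the two estimates by the reverse triangle inequality gives $|T_{\widehat{\mu}}(\omega)|\ge 2(1-\alpha)A-\alpha$, which is the claim.

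I see no real obstacle; the only point needing care is justifying the expression for $\psi$. This requires that the observed distribution equals $D*\bigl((1-\alpha)\delta_\mu+\alpha Q\bigr)$, so that its characteristic function factors as $\phi_D$ times that mixture's characteristic function.
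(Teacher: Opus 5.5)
Your proposal is correct and follows essentially the same argument as the paper. Both expand $T_{\widehat{\mu}}(\omega)$ via $\psi(\omega)=(1-\alpha)e^{2\pi i\omega\cdot\mu}+\alpha\phi_Q(\omega)$, show the main term has modulus $2(1-\alpha)|\sin(\pi\,\omega\cdot v)|\ge 2(1-\alpha)A$, bound $|\alpha\phi_Q(\omega)|\le\alpha$, and conclude by the reverse triangle inequality; the paper factors out $e^{2\pi i\omega\cdot(\mu+v/2)}$ instead of $e^{2\pi i\omega\cdot\mu}$, which is only cosmetic.
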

\begin{proof}
Note that 
\begin{align*}
T_{\widehat{\mu}}(\omega)&=e^{2\pi i \omega \cdot (\mu+v/2)}(1-\alpha) (e^{\pi i \omega \cdot v}-e^{-\pi i \omega \cdot v})- \alpha\phi_Q(\omega)  \\
&=e^{2\pi i \omega \cdot (\mu+v/2)}2i(1-\alpha)\sin(\pi \omega\cdot v)- \alpha\phi_Q(\omega) \;.  
\end{align*}
Now since $\omega$ is a frequency-witness for the direction $v$ we have that $|\sin(\pi \omega\cdot v)|\geq A$. 
Therefore, by the reverse triangle inequality we have that 
\begin{align*}
    \abs{T_{\widehat{\mu}}(\omega)}\geq 2(1-\alpha)A-\alpha \;.
\end{align*}
Which concludes the proof of \Cref{cl:large_T-app}.
\end{proof}

\begin{claim}[Close candidates have small $T_{\widehat{\mu}}$]\label{cl:small_T-app}
Let $\widehat{\mu}\in \R^d$ be a candidate mean and set $v\eqdef \widehat{\mu}-\mu$.
Then,  
\(
\abs{T_{\widehat{\mu}}(\omega)} \le 2(1-\alpha)\pi\|\omega\|\|v\| + \alpha.
\)
\end{claim}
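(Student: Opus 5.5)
The plan is to decompose $T_{\widehat{\mu}}(\omega)$ exactly as in the proof of \Cref{cl:large_T-app} and then bound the two resulting terms separately. By definition, $T_{\widehat{\mu}}(\omega)=(1-\alpha)e^{2\pi i\omega\cdot\widehat{\mu}}-\psi(\omega)$, and $\psi(\omega)=(1-\alpha)e^{2\pi i\omega\cdot\mu}+\alpha\phi_Q(\omega)$. Substituting $\widehat{\mu}=\mu+v$ and factoring out the common phase $e^{2\pi i\omega\cdot(\mu+v/2)}$ gives
\[
T_{\widehat{\mu}}(\omega)=e^{2\pi i\omega\cdot(\mu+v/2)}\,2i(1-\alpha)\sin(\pi\omega\cdot v)-\alpha\phi_Q(\omega).
\]

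Next, apply the triangle inequality. The first term has modulus $2(1-\alpha)|\sin(\pi\omega\cdot v)|$, since the phase factor has unit modulus. For the second term, $Q$ is a probability measure, so \Cref{fact:fourierFacts}(ii) gives $|\phi_Q(\omega)|\le 1$; hence the second term is at most $\alpha$ in modulus.

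It then remains to use the elementary inequality $|\sin t|\le|t|$ together with Cauchy--Schwarz. These give $|\sin(\pi\omega\cdot v)|\le\pi|\omega\cdot v|\le\pi\|\omega\|\|v\|$, and combining the two bounds yields
\[
|T_{\widehat{\mu}}(\omega)|\le 2(1-\alpha)\pi\|\omega\|\|v\|+\alpha.
\]
There is no real obstacle here; the only point needing care is the phase factorization $e^{2\pi i\omega\cdot\widehat{\mu}}-e^{2\pi i\omega\cdot\mu}=e^{2\pi i\omega\cdot(\mu+v/2)}(e^{\pi i\omega\cdot v}-e^{-\pi i\omega\cdot v})$. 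Alternatively, one can skip the factorization and use directly that $|e^{ia}-e^{ib}|\le|a-b|$.
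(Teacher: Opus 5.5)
Your proposal is correct and follows the paper's proof essentially step for step. Like the paper, you use the phase factorization $e^{2\pi i\omega\cdot\widehat{\mu}}-e^{2\pi i\omega\cdot\mu}=2i\,e^{2\pi i\omega\cdot(\mu+v/2)}\sin(\pi\omega\cdot v)$, the triangle inequality with $|\phi_Q(\omega)|\le 1$, and finally $|\sin t|\le|t|$ together with Cauchy--Schwarz.
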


\begin{proof}[Proof of \Cref{cl:small_T-app}]
Recall that
\begin{align*}
T_{\widehat{\mu}}(\omega)
&= (1-\alpha)\,e^{2\pi i\,\omega\cdot \widehat{\mu}} \;-\; \phi(\omega)\\
&= (1-\alpha)\big(e^{2\pi i\,\omega\cdot \widehat{\mu}}-e^{2\pi i\,\omega\cdot \mu}\big)\;-\;\alpha\,\phi_Q(\omega)\;.
\end{align*}
Now since $e^{2\pi i\,\omega\cdot \widehat{\mu}}-e^{2\pi i\,\omega\cdot \mu}
= e^{2\pi i\,\omega\cdot (\mu+v/2)}\big(e^{\pi i\,\omega\cdot v}-e^{-\pi i\,\omega\cdot v}\big)
= 2i\,e^{2\pi i\,\omega\cdot (\mu+v/2)}\sin(\pi\,\omega\cdot v)$, we get
\[
\abs{T_{\widehat{\mu}}(\omega)}
\;\le\;  2(1-\alpha)\,\abs{\sin(\pi\,\omega\cdot v)} \;+\; \alpha\,\abs{\phi_Q(\omega)}\;.
\]
Because $\abs{\phi_Q(\omega)}\le 1$ for all $\omega\in \R^d$, this yields
\[
\abs{T_{\widehat{\mu}}(\omega)}
\;\le\;  2(1-\alpha)\,\abs{\sin(\pi\,\omega\cdot v)} \;+\; \alpha\;.
\]
Finally, using that $\abs{\sin t}\le \abs{t}$ and $\abs{\omega\cdot v}\le \|\omega\|\,\|v\|$, gives us 
\[
\abs{T_{\widehat{\mu}}(\omega)}
\;\le\;  2(1-\alpha) \pi \|\omega\|\|v\| \;+\; \alpha\;,
\]
which concludes the proof of \Cref{cl:small_T-app}.
\end{proof}

\begin{claim}[Covering over $\omega$ preserves frequency witnesses]\label{cl:omega-cover-app}
Fix $v\in \R^d$.
Let $\mathcal{C}_{\omega}$ be an $\eta$-cover of $\mathcal{B}_d(B_\delta)$.
If there exists a $(\eps,A,\delta)$-frequency-witness for the direction $v$, then there 
exists an $\omega\in \mathcal{C}_{\omega}$ that is a $(\eps,A-\pi \eta\norm{v},\delta-\eta L)$-frequency-witness for  $v$.
\end{claim}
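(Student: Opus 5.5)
Let $\omega^\ast$ be an $(\eps,A,\delta)$-frequency-witness for $v$, so that $|\sin(\pi v\cdot\omega^\ast)|\ge A$ and $|\phi_D(\omega^\ast)|\ge\delta$. The plan is to take the nearest cover point to $\omega^\ast$ and show that both witness quantities move only slightly when $\omega^\ast$ is moved by $\eta$. By \Cref{cl:witness_norm_bound}, $\|\omega^\ast\|\le B_\delta$, so $\omega^\ast\in\mathcal{B}_d(B_\delta)$. Since $\mathcal{C}_\omega$ is an $\eta$-cover of this ball, there is an $\omega\in\mathcal{C}_\omega$ with $\|\omega-\omega^\ast\|\le\eta$. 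We then verify the two conditions of \Cref{def:freq-witness} for $\omega$ with the degraded parameters.

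\emph{Fourier magnitude.} By hypothesis $\phi_D$ is $L$-Lipschitz on $\mathcal{B}_d(B_\delta)$, and both $\omega$ and $\omega^\ast$ lie in this ball. Hence
\[
|\phi_D(\omega)|\ \ge\ |\phi_D(\omega^\ast)|-L\|\omega-\omega^\ast\|\ \ge\ \delta-\eta L .
\]

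\emph{Phase term.} The map $t\mapsto\sin(\pi t)$ is $\pi$-Lipschitz, so $t\mapsto|\sin(\pi t)|$ is as well. Combining this with Cauchy--Schwarz gives
\[
\bigl|\,|\sin(\pi v\cdot\omega)|-|\sin(\pi v\cdot\omega^\ast)|\,\bigr|\ \le\ \pi|v\cdot(\omega-\omega^\ast)|\ \le\ \pi\eta\|v\| .
\]
Therefore $|\sin(\pi v\cdot\omega)|\ge A-\pi\eta\|v\|$.

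Together these two bounds show that $\omega$ is an $(\eps,A-\pi\eta\|v\|,\delta-\eta L)$-frequency-witness for $v$. The condition $\|v\|\ge\eps$ is inherited from the assumption on $v$.

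There is no real obstacle here. The only point to check is that $\omega^\ast$ lies in the region where the cover and the Lipschitz bound apply, and \Cref{cl:witness_norm_bound} supplies exactly that.
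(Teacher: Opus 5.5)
Your proof is correct and follows the paper's argument: take a cover point within $\eta$ of the witness, use the $L$-Lipschitz bound on $\phi_D$ for the magnitude condition, and use the $\pi$-Lipschitzness of $t\mapsto\sin(\pi t)$ together with Cauchy--Schwarz for the phase condition. You are slightly more careful than the paper in explicitly invoking \Cref{cl:witness_norm_bound} to place the original witness inside $\mathcal{B}_d(B_\delta)$, so that both the cover and the Lipschitz hypothesis apply; the paper leaves this step implicit.
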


\begin{proof}[Proof of \Cref{cl:omega-cover-app}]
Let $\omega_v$ be a $(\eps,A,\delta)$-frequency-witness for the direction $v$.
By the definition of $\mathcal{C}_{\omega}$, there exists an $\omega\in \mathcal{C}_{\omega}$ such that $\|\omega_v-\omega\|\leq \eta$. 
Therefore, from the Lipschitzness assumption on $\phi_D$, it follows that 
$\abs{\phi_D(\omega)}\geq \delta-\eta L$. 
Moreover, since $|\sin(\pi v\cdot \omega_v)|\geq A$, and $\sin$ is $1$-Lipschitz, we obtain
$|\sin(\pi v\cdot \omega)|\geq A-\pi \eta\norm{v}$. 
Thus $\omega$ is a $(\eps,A-\pi \eta\norm{v},\delta-\eta L)$-frequency-witness for $v$, as desired.
Which completes the proof of \Cref{cl:omega-cover-app}.
\end{proof}

\begin{claim}[Concentration of $\widehat{T}$]\label{cl:concetration-app}
Fix a sufficiently large universal constant $C>0$ and $\eta, \tau\in (0,1)$.
Let $C_{\omega},C_{\mu}$ be finite subsets of $\R^d$.
If $n \geq  C \log({|\mathcal{C}_{\omega}|}/{\tau})/(\eta\delta)^2$, then with probability at least $1-\tau$ for any $\omega\in \mathcal{C}_{\omega}$ such that $|\phi_D(\omega)|\geq \delta$ and any $ \widehat{\mu}\in C_{\mu}$ it holds that
$|T_{\widehat{\mu}}(\omega)- \widehat{T}_{\widehat{\mu}}(\omega)|\leq \eta$.
\end{claim}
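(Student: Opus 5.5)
The key observation is that the difference $T_{\widehat{\mu}}(\omega)-\widehat{T}_{\widehat{\mu}}(\omega)$ does not depend on the candidate $\widehat{\mu}$. Indeed,
\[
T_{\widehat{\mu}}(\omega)-\widehat{T}_{\widehat{\mu}}(\omega)=\widehat{\psi}(\omega)-\psi(\omega)=\frac{\widehat{\phi}(\omega)-\phi(\omega)}{\phi_D(\omega)}\;,
\]
because the deterministic term $(1-\alpha)e^{2\pi i\omega\cdot\widehat{\mu}}$ cancels. So uniformity over $\widehat{\mu}\in\mathcal{C}_\mu$ comes for free, and no union bound over $\mathcal{C}_\mu$ is needed. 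It therefore suffices to show that, with probability at least $1-\tau$, every $\omega\in\mathcal{C}_\omega$ with $|\phi_D(\omega)|\ge\delta$ satisfies
\[
|\widehat{\phi}(\omega)-\phi(\omega)|\le \eta\,|\phi_D(\omega)|\;.
\]
Since $|\phi_D(\omega)|\ge\delta$, this follows from the stronger uniform bound $|\widehat{\phi}(\omega)-\phi(\omega)|\le\eta\delta$.

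First fix a single $\omega\in\mathcal{C}_\omega$. The empirical characteristic function is $\widehat{\phi}(\omega)=\frac1n\sum_j e^{2\pi i\omega\cdot x_j}$, an average of i.i.d.\ unit-modulus complex numbers whose mean is $\phi(\omega)=\phi_{D_\mu^{(\alpha)}}(\omega)$. Split it into real and imaginary parts: each is an average of i.i.d.\ random variables in $[-1,1]$. Apply Hoeffding's inequality to each part with deviation $\eta\delta/\sqrt2$. Each part then fails with probability at most $2\exp(-n(\eta\delta)^2/4)$, so
\[
\Pr\bigl[|\widehat{\phi}(\omega)-\phi(\omega)|>\eta\delta\bigr]\le 4\exp\!\bigl(-n(\eta\delta)^2/4\bigr)\;.
\]

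Next take a union bound over the at most $|\mathcal{C}_\omega|$ frequencies; restricting to those with $|\phi_D(\omega)|\ge\delta$ only decreases the count. The total failure probability is at most $4|\mathcal{C}_\omega|\exp(-n(\eta\delta)^2/4)$. This is at most $\tau$ as soon as $n\ge 4\log(4|\mathcal{C}_\omega|/\tau)/(\eta\delta)^2$, which the hypothesis $n\ge C\log(|\mathcal{C}_\omega|/\tau)/(\eta\delta)^2$ guarantees for a large enough constant $C$. Here one uses $\tau<1$, and that $|\mathcal{C}_\omega|\ge 2$ can be assumed without loss of generality (for example by padding the cover), so that $\log(4|\mathcal{C}_\omega|/\tau)\lesssim\log(|\mathcal{C}_\omega|/\tau)$. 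On this good event, dividing by $|\phi_D(\omega)|\ge\delta$ gives $|T_{\widehat{\mu}}(\omega)-\widehat{T}_{\widehat{\mu}}(\omega)|\le\eta$ for all relevant $\omega$ and all $\widehat{\mu}$ simultaneously.

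There is no real obstacle in this argument. The one point to get right is the cancellation showing the error is independent of $\widehat{\mu}$, which is why $|\mathcal{C}_\mu|$ does not appear in the sample bound. The only other care needed is the constant absorption described above.
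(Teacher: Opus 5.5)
Your proposal is correct and follows the paper's proof essentially step for step. The $(1-\alpha)e^{2\pi i\omega\cdot\widehat{\mu}}$ term cancels, so the error equals $(\widehat{\phi}(\omega)-\phi(\omega))/\phi_D(\omega)$ independently of $\widehat{\mu}$. Hoeffding on the real and imaginary parts gives the same tail $4e^{-n(\eta\delta)^2/4}$, and a union bound over $\mathcal{C}_\omega$ alone finishes. Your constant bookkeeping is more explicit than the paper's, with one caveat: padding the cover is not a genuine reduction, since it strengthens the hypothesis on $n$. The case $|\mathcal{C}_\omega|=1$ with $\tau$ close to $1$ is therefore a degenerate edge case in which the stated bound on $n$ is actually too weak; both arguments ignore it, and it never arises in the application.
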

\begin{proof}[Proof of \Cref{cl:concetration-app}]
Let $x_1,\dots,x_n$ be i.i.d.\ contaminated samples.
For fixed $\omega$, write $y_j=e^{2\pi i\,\omega\cdot x_j}$ so both $\operatorname{Re}(y_j)$ and $\operatorname{Im}(y_j)$ lie in $[-1,1]$ and $\widehat\phi(\omega)=\frac{1}{n}\sum_{j=1}^n y_j$.
Hoeffding on real and imaginary parts gives
\[
\pr\big[\,\big|\widehat\phi(\omega)-\phi(\omega)\big|\ \ge\ t\,\big]\ \le\ 4\,e^{-n t^2/4}\,.
\]
Similarly, this also holds for $\phi_D$, i.e., $\pr\big[\,\big|\widehat\phi_D(\omega)-\phi_D(\omega)\big|\ \ge\ t\,\big]\ \le\ 4\,e^{-m t^2/4}$.

Since $\widehat T_{\widehat\mu}(\omega)-T_{\widehat\mu}(\omega)=\psi(\omega)-\widehat\psi(\omega)=(\phi(\omega)-\widehat\phi(\omega))/\phi_D(\omega)$ and $|\phi_D(\omega)|\ge\delta$, the event $\big|T_{\widehat\mu}(\omega)-\widehat T_{\widehat\mu}(\omega)\big|\le \eta$ holds whenever $\big|\widehat\phi(\omega)-\phi(\omega)\big|\le \eta\delta$.
A union bound over $\omega\in \mathcal{C}_{\omega}$ yields failure probability at most $4|\mathcal{C}_{\omega}|e^{-n\eta^2\delta^2/4}$, which we make $\le\tau$ by the stated choice of $n$.
Note the deviation is independent of $\widehat\mu$, so no extra $|\mathcal{C}_{\mu}|$ factor is needed.
Which concludes the proof of \Cref{cl:concetration-app}.
\end{proof}

\section{Omitted Content from \Cref{sec:lb}} \label{sec:omitted-lb}
In this section, we provide the content omitted from our lower bound section.
\begin{claim}[Derivative bounds imply polynomial $L^2-L^{\infty}$ relationship] \label{cl:l2-from-linfty}
Let $D$ be a distribution on $\R$ with density $p$ such that $p'\in L^1(\R)$.
Assume $\|p'\|_{L^1}=O(1)$, then for any measurable set $S\subseteq \R$ it holds $\|\phi_D\Ind_S\|_{L^2} \;\lesssim\; \sqrt{\|\phi_D\Ind_S\|_{L^\infty}}$.
\end{claim}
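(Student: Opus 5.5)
The plan is to interpolate between $L^\infty$ and the decay of $\phi_D$ guaranteed by the derivative bound. Write $M\eqdef \|\phi_D\Ind_S\|_{L^\infty}\le 1$ and $M_1\eqdef\|p'\|_{L^1}=O(1)$. By \Cref{fact:Fourierder2tails} (exactly as in the proof of \Cref{cl:witness_norm_bound-app}), $2\pi i\omega\,\phi_D(\omega)=\pm\mathcal{F}[p'](\omega)$, hence
\[
|\phi_D(\omega)|\le \min\Bigl\{1,\ \frac{M_1}{2\pi|\omega|}\Bigr\}\qquad\text{for all }\omega\in\R .
\]
Consequently, on $S$ we have the pointwise bound $|\phi_D(\omega)|\le \min\{M,\,M_1/(2\pi|\omega|)\}$.

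Next we split the integral $\int_S|\phi_D|^2\,d\omega$ at a threshold $T>0$. For $|\omega|\le T$ we bound $|\phi_D|^2\le M^2$, giving a contribution of at most $2TM^2$. For $|\omega|>T$ we use the decay: $\int_{|\omega|>T} M_1^2/(4\pi^2\omega^2)\,d\omega = M_1^2/(2\pi^2T)$. Thus
\[
\|\phi_D\Ind_S\|_{L^2}^2\ \le\ 2TM^2+\frac{M_1^2}{2\pi^2 T}.
\]
Optimizing with $T\asymp M_1/M$ (or simply $T=1/M$ when $M>0$; the case $M=0$ is trivial since then $\phi_D\Ind_S\equiv 0$) yields $\|\phi_D\Ind_S\|_{L^2}^2\lesssim M_1 M$. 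Since $M_1=O(1)$, taking square roots gives $\|\phi_D\Ind_S\|_{L^2}\lesssim\sqrt{M}$, as claimed.

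There is no real obstacle here. The only points needing care are, first, justifying the derivative identity: $p\in L^1$ and $p'\in L^1$ imply $p$ is absolutely continuous and vanishes at infinity, so integration by parts is valid. Second, the bound on $S$ uses $M$ as a sup-norm bound (essential sup suffices, since we integrate). Note that a bound depending only on the $L^\infty$ norm over $S$ must use the global decay, because $S$ could be unbounded; that is exactly why the tail part is estimated via $M_1$ rather than $M$.
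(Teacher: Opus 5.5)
Your proof is correct and follows essentially the same route as the paper: you use the $1/|\omega|$ decay of $\phi_D$ from the derivative identity, split $\int_S|\phi_D|^2$ at a threshold $T$ into a $TM^2$ part and an $\|p'\|_{L^1}^2/T$ tail, and choose $T\asymp \|p'\|_{L^1}/M$. Your remarks on the $M=0$ case and on justifying the integration by parts are harmless additions that the paper leaves implicit.
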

\begin{proof}
First by \Cref{fact:Fourierder2tails} we have 
\[
\phi_D(\omega)
= -\frac{1}{2\pi i\omega}\int_{\R} p'(x)e^{2\pi i\omega x}\,dx,
\]
hence $|\phi_D(\omega)|\le \|p'\|_{L^1}/(2\pi|\omega|)$ for all $\omega\neq 0$.
Denote by $M\eqdef \sup_{\omega\in S} |\phi_D(\omega)|$.
Now fix any $T>0$ and write
\[
\|\phi_D\Ind_S\|_{L^2}^2
= \int_{S\cap\{|\omega|\le T\}} |\phi_D(\omega)|^2\,d\omega
+ \int_{S\cap\{|\omega|>T\}} |\phi_D(\omega)|^2\,d\omega.
\]
The first term is at most $(2T)M^2$.
For the second term, use the $1/|\omega|$ decay:
\[
\int_{|\omega|>T} |\phi_D(\omega)|^2\,d\omega
\le \int_{|\omega|>T}\Big(\frac{\|p'\|_{L^1}}{2\pi|\omega|}\Big)^2 d\omega
\lesssim \frac{\|p'\|_{L^1}^2}{T}.
\]
Thus
\[
\|\phi_D\Ind_S\|_{L^2}^2 \;\lesssim\; T M^2 + \frac{\|p'\|_{L^1}^2}{T}.
\]
Choosing $T= \|p'\|_{L^1}/M$ yields
$\|\phi_D\Ind_S\|_{L^2}^2 \lesssim M\,\|p'\|_{L^1}$, proving the claim.
\end{proof}

\begin{lemma}[Characteristic function to TV distance closeness]
\label{lem:Fourier2TV-app}
Let $P$ and $Q$ be distributions over $\R$ with densities $p$ and $q$ respectively.
Denote by $\Delta \phi\eqdef \phi_P-\phi_Q$ the difference of their characteristic functions.
Then, for every $R>0$,
\[
\dtv(P,Q)
\;\le\; \frac{1}{2}\|(p-q)\Ind\{|x|>R\}\|_{L^1}
+ \sqrt{\frac{R}{2}}\,\|\Delta\phi\|_{L^2}\;.
\]
\end{lemma}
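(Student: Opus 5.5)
Split the total variation distance into a tail part and a central part, and control the central part with Cauchy--Schwarz followed by Plancherel. Set $h\eqdef p-q$, so that $\dtv(P,Q)=\frac12\|h\|_{L^1}$. Write
\[
\|h\|_{L^1}=\|h\Ind\{|x|>R\}\|_{L^1}+\|h\Ind\{|x|\le R\}\|_{L^1}.
\]
Halving the first term gives exactly the tail term in the statement, so the remaining task is to show $\frac12\|h\Ind\{|x|\le R\}\|_{L^1}\le\sqrt{R/2}\,\|\Delta\phi\|_{L^2}$.

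For the central part, Cauchy--Schwarz on the interval $[-R,R]$, which has length $2R$, gives
\[
\int_{|x|\le R}|h(x)|\,dx\le\sqrt{2R}\,\Bigl(\int_{|x|\le R}|h(x)|^2dx\Bigr)^{1/2}\le\sqrt{2R}\,\|h\|_{L^2}.
\]
Since $\Delta\phi$ is the Fourier transform of $h$ (with the paper's $e^{2\pi i\omega x}$ convention, under which Plancherel holds without constants), we have $\|h\|_{L^2}=\|\Delta\phi\|_{L^2}$. Combining, $\frac12\|h\Ind\{|x|\le R\}\|_{L^1}\le\frac12\sqrt{2R}\,\|\Delta\phi\|_{L^2}=\sqrt{R/2}\,\|\Delta\phi\|_{L^2}$, which is the claimed bound.

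The main obstacle is justifying Plancherel, because $h\in L^1$ need not lie in $L^2$. I would argue by cases. If $\|\Delta\phi\|_{L^2}=\infty$, the inequality holds trivially. Otherwise $\Delta\phi\in L^2$, so it has an $L^2$ inverse transform $\tilde h$. Both $h$ and $\tilde h$ are tempered distributions with the same Fourier transform, hence they coincide as distributions, and therefore $h=\tilde h$ almost everywhere. This gives $h\in L^2$ with $\|h\|_{L^2}=\|\Delta\phi\|_{L^2}$.

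A more elementary alternative is to mollify $h$ with a Gaussian approximate identity, apply Plancherel to the now-smooth $L^1\cap L^2$ functions, and pass to the limit using Fatou's lemma together with dominated convergence on the Fourier side, since $|\widehat{\text{Gaussian}}|\le1$.
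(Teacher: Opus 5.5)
Your proposal is correct and follows the paper's proof exactly: split $\|p-q\|_{L^1}$ at $|x|=R$, bound the central piece by Cauchy--Schwarz on $[-R,R]$, and replace $\|p-q\|_{L^2}$ by $\|\Delta\phi\|_{L^2}$ via Plancherel. Your extra argument fills in a step the paper leaves implicit. It handles the trivial case $\|\Delta\phi\|_{L^2}=\infty$ and otherwise shows that $p-q\in L^2$ with $\|p-q\|_{L^2}=\|\Delta\phi\|_{L^2}$, using uniqueness of Fourier transforms of tempered distributions (or the mollification route).
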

\begin{proof}
Define $h\eqdef p-q$. Note that $\dtv(P,Q)=\|h\|_{L^1}/2$.
For $R>0$, we write $\|h\|_{L^1}=I_{\mathrm{out}}+I_{\mathrm{in}}$, where
$I_{\mathrm{out}}\eqdef\int_{|x|>R}|h(x)|\,dx$ and $I_{\mathrm{in}}\eqdef\int_{|x|\le R}|h(x)|\,dx$.
We leave the outside region contribution, $I_{\mathrm{out}}$, as is.

For the inside region contribution we have that by Cauchy--Schwarz,
\(
I_{\mathrm{in}}
\;\le\; (2R)^{1/2}\,\|h\|_{L^2}.
\)
Using Plancherel's theorem this yields
\[
I_{\mathrm{in}}\;\le\; (2R)^{1/2}\,\|\Delta\phi\|_{L^2}.
\]
Which completes the proof of \Cref{lem:Fourier2TV-app}.
\end{proof}
\begin{lemma}[Fourier Matching]\label{lem:construction_appendix}
Let $\eps,\alpha\in(0,1)$,  $\widehat{f}(\omega)\eqdef(1-\alpha)(e^{\pi i \eps\omega }-e^{-\pi i\eps\omega })/\alpha$ and $f\eqdef\mathcal{F}^{-1}[\;\widehat{f}\;]$.
For every $w>0$ there exists a function $\widehat{\rho}_w:\R\to \C$ with
$\widehat{\rho}_w(\omega)=1$ for all $\omega:\abs{\omega-i/\eps}\leq w$ for some $i\in \Z$,  
$\widehat{g}\eqdef \widehat{f}\cdot \widehat{\rho}_w$, and $g\eqdef\mathcal{F}^{-1}[\widehat{g}]$, such that the following hold:
\begin{enumerate}
    \item $g$ is a real valued signed measure. 
    \item $\int_{-\infty}^\infty g(x)\,dx=0$.
    \item $\|g(x)\|_{L^1}\lesssim \eps w/\alpha$.
    \item For every $R>\max\{2\eps, 2/w\}$ it holds that $\|g\Ind\{|x|>R\}\|_{L^1}
\;\lesssim\;
\frac{\eps}{\alpha}  \frac{1}{w^2 R^3}.$
\end{enumerate}
\end{lemma}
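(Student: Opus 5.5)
Since $\widehat f(\omega)=\frac{2i(1-\alpha)}{\alpha}\sin(\pi\eps\omega)$ is $2/\eps$-periodic, I will take $\widehat\rho_w$ to be $1/\eps$-periodic, so that $\widehat g=\widehat f\widehat\rho_w$ is $2/\eps$-periodic. By \Cref{fact:fourierFacts}(iii), $g$ is then a discrete measure $\sum_k a_k\delta(x-k\eps/2)$. Concretely, fix a smooth even bump $\psi_w$ with $\psi_w=1$ on $[-w,w]$, $\operatorname{supp}\psi_w\subseteq[-2w,2w]$, and $\|\psi_w^{(j)}\|_\infty\lesssim w^{-j}$ for $j\le 3$. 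I obtain it by mollifying $\chi_{3w/2}$ with rescaled copies $\chi_{cw}/(2cw)$ (three convolutions give $C^2$ with a Lipschitz second derivative, which is what the $R^{-3}$ tail needs). Then set $\widehat\rho_w(\omega)=\sum_{i\in\Z}\psi_w(\omega-i/\eps)$. For $w<1/(4\eps)$ the translates are disjoint, so $\widehat\rho_w=1$ within $w$ of every $i/\eps$. (If $w\ge 1/(4\eps)$, the claimed $L^1$ bound is $\gtrsim 1/\alpha$, and taking $\widehat\rho_w\equiv1$ works trivially.)

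The four items then follow in order. \emph{Item 1:} $\widehat f(-\omega)=\overline{\widehat f(\omega)}$ and $\widehat\rho_w$ is real and even, so \Cref{fact:fourierFacts}(i) gives that $g$ is real. \emph{Item 2:} $\int g=\widehat g(0)=\widehat f(0)\widehat\rho_w(0)=0$, because $\sin 0=0$. \emph{Items 3 and 4:} compute the coefficients $a_k=\frac{\eps}{2}\int_0^{2/\eps}\widehat g(\omega)e^{-\pi i k\eps\omega}d\omega$. Unfolding the periodization reduces the integral over one period to integrals of $\sin(\pi\eps\omega)\psi_w(\omega-i/\eps)$ for $i=0,1$. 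On the support, $|\sin(\pi\eps\omega)|\le\pi\eps\cdot 2w$ because $\sin$ vanishes at the lattice point. Hence $|a_k|\lesssim \frac{1}{\alpha}\eps\cdot w\cdot\eps w$ trivially. Integrating by parts three times in $\omega$, with each derivative of $\sin(\pi\eps\omega)\psi_w$ costing $\lesssim \eps w\cdot w^{-j}+\eps^j$, gives $|a_k|\lesssim \frac{\eps^2 w^2}{\alpha}\min\bigl(1,(k\eps w)^{-3}\bigr)$, using $\eps\le 1/w$ to absorb the $\eps^j$ terms.

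Summing, $\|g\|_{L^1}=\sum_k|a_k|\lesssim \frac{\eps^2w^2}{\alpha}\bigl(\frac{1}{\eps w}+\sum_{|k|>1/(\eps w)}(k\eps w)^{-3}\bigr)\lesssim \eps w/\alpha$, which is Item 3. For Item 4, the atoms with $|k\eps/2|>R$ have $|k|>2R/\eps$, so the tail sum is $\lesssim \frac{\eps^2w^2}{\alpha}(\eps w)^{-3}(R/\eps)^{-2}=\frac{\eps}{\alpha w R^2}$. This falls short of the claimed $\frac{\eps}{\alpha w^2R^3}$ by a factor of $Rw$, and the conditions $R>2/w$ and $R>2\eps$ are exactly what make such a comparison meaningful. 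So I would integrate by parts once more, giving decay $(k\eps w)^{-4}$ with one more factor of $w$, which requires $\psi_w\in C^3$ with bounded fourth-order variation (one more mollification). The tail then becomes $\frac{\eps^2w^2}{\alpha}(\eps w)^{-4}(R/\eps)^{-3}=\frac{\eps}{\alpha w^2R^3}$, matching the claim.

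The main obstacle is the integration-by-parts bookkeeping. First, the periodic boundary terms must cancel: they do because $\widehat g$ is genuinely $2/\eps$-periodic and smooth. Second, the derivative bounds on the mollified window must scale as $w^{-j}$ uniformly. Third, the factor $\sin(\pi\eps\omega)=O(\eps w)$ has to be tracked carefully against the $\eps^j$ terms from differentiating the sine. I would verify the derivative bounds of the mollified window first via $\|(\chi_a/2a)'\|_{L^1}\le 1/a$ applied factor by factor, and only then do the summation.
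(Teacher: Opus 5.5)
Your proposal is correct. It reaches Items 3--4 by a route dual to the paper's. The window is the same: the paper's $\widehat b_w$ is exactly $\chi_{3w/2}$ convolved with three normalized copies of $\chi_{w/6}$. The periodization and the arguments for Items 1--2 are also the same.

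The difference is in Items 3--4. You compute the Fourier-series coefficients $a_k$ and integrate by parts in $\omega$; the paper instead works in physical space. Poisson summation ($\mathcal{F}^{-1}$ of the comb) gives $\rho_w=\eps\sum_n b_w(n\eps)\,\delta(x-n\eps)$, where $b_w=\mathcal{F}^{-1}[\widehat b_w]$ is an explicit product of $\sinc$ factors. Hence $g$ places mass $\frac{(1-\alpha)\eps}{\alpha}\bigl(b_w((k+1)\eps)-b_w(k\eps)\bigr)$ (up to sign) at $k\eps+\eps/2$. The factor $\sin(\pi\eps\omega)$ that you track by hand becomes a discrete difference. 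The fundamental theorem of calculus then turns $\sum_k|a_k|$ and its tail directly into $\int_{\R}|b_w'|\lesssim w$ and $\int_{|x|>R-\eps}|b_w'|\lesssim w\int_{|y|>w(R-\eps)}|y|^{-4}\,dy$, with no splitting of the sum into near and far $k$.

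Your version has its own advantages. It never inverts the window, so it works for any bump with $w^{-j}$-scaled derivative bounds. It also makes explicit that the order of tail decay equals the number of integrations by parts the window supports. Finally, it covers $w\ge 1/(4\eps)$ via $\widehat\rho_w\equiv 1$, a regime the paper's construction does not handle. For $w>1/(3\eps)$ the neighbouring translates of $\widehat b_w$ reach into $[-w,w]$, so $\widehat\rho_w\neq 1$ there, although the paper only uses small $w=c\alpha/\eps$ later.

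One simplification: the additional mollification is unnecessary. For the four-fold convolution, $\psi_w'''$ is piecewise constant with total variation $\lesssim w^{-3}$. A fourth integration by parts in the Stieltjes sense therefore already gives $|a_k|\lesssim \frac{\eps^2w^2}{\alpha}(|k|\eps w)^{-4}$. This is exactly the $|y|^{-4}$ decay of $\sinc(3y)\sinc^3(y/3)$ that the paper exploits, so your first parenthetical remark (that the three-convolution window is what the $R^{-3}$ tail needs) was right.
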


\begin{proof}[Proof of \Cref{lem:construction_appendix}]
Let $\widehat{b}_w:\R\to \C$ be the window function obtained by convolving $4$ rectangular windows (recall that $\chi_{w}(\omega)\eqdef\Ind\{\omega\in [-w,w]\}$), 
one of width $3w/2$ and three of width $w/6$, and then normalizing the resulting convoluted function: 
$$\widehat{b}_w(\omega)=\left(\frac{3}{w}\right)^3\,(\chi_{3w/2}*\chi_{w/6}*\chi_{w/6}*\chi_{w/6})(\omega)\;.$$ 
\begin{claim}[Window Function Properties]\label{cl:window}
The function $\widehat{b}_w$ satisfies the following properties:
\begin{enumerate}
    \item $\widehat{b}_w$ is an even function.
    \item $\widehat{b}_w(\omega)=1$ for all $\omega\in[-w,w]$ and $\widehat{b}_w(\omega)=0$ for all $\omega\notin[-2w,2w]$.
\end{enumerate}
\end{claim}
\begin{proof}[Proof of \Cref{cl:window}]
First, note that $\widehat{b}_w$ is even as each component of the convolution is even.

Next, note that for $r,s>0$ we have   that
\[
(\chi_r*\chi_s)(\omega)=\int_{\omega-s}^{\omega+s}\chi_r(t)dt.
\]
Therefore,  $\operatorname{supp}(\chi_r*\chi_s)=[-(r+s),r+s]$. If $r\ge s$, then for $|\omega|\le r-s$  the limits of the above integration lie fully inside  $[-r,r]$. Hence, the integral is constantly equal to $2s$ for al for $|\omega|\le r-s$, i.e.
\(
(\chi_r*\chi_s)(\omega)=2s
\) for $|\omega|\le r-s$.

We apply this reasoning iteratively. Let $a=3w/2$ and $b=c=d=w/6$.
First $h_1\eqdef\chi_a*\chi_b$ has support $|\omega|\le a+b$ and
a flat plateau of height $2b$, for
$|\omega|\le a-b$. 
Next $h_2\eqdef h_1*\chi_c$ has support
$|\omega|\le a+b+c$ and, since $h_1$ is constant on $|\omega|\le a-b$, $h_2$ has a
flat plateau of height $(2b)(2c)$, for $|\omega|\le a-b-c$. 
Finally,
\(
h_3\eqdef h_2*\chi_d
\)
has support $|\omega|\le a+b+c+d$ and a flat plateau of height $(2b)(2c)(2d)=8bcd$ on $|\omega|\le a-b-c-d$.

With $a=3w/2$ and $b=c=d=w/6$, we have $a+b+c+d=2w$ and $a-b-c-d=w$, hence
\[
\operatorname{supp}(h_3)=[-2w,2w]\quad\text{and}\quad h_3(x)= 8bcd\ \text{ on }[-w,w]\;.
\]
Since $8bcd=8\,(w/6)^3=w^3/27$, the stated normalization gives
\[
\widehat b_w(x)=(3/w)^3\,h_3(x)=1\ \text{ on }[-w,w],\qquad
\widehat b_w(x)=0\ \text{ for }|x|>2w\;.
\]
This completes the proof of \Cref{cl:window}.
\end{proof}

Write $b_w = \mathcal{F}^{-1}[\,\widehat{b}_w\,]$. 
Since $\widehat{b}_w$ is a convolution of rectangular windows and the inverse Fourier transform of a rectangular window $\Ind\{\omega\in [-w,w]\}$ is $2w\sinc(2w\omega)$, we have that
$b_w(x)= 3^3 2^4 w \sinc(3wx)\sinc^3(wx/3)$. 

Define the $1/\eps$–periodized window 
$$
\widehat{\rho}_w(\omega)\eqdef\sum_{m\in\Z} \widehat{b}_w(\omega-m/\eps),
$$
and set $\widehat{g}(\omega)\eqdef\widehat{f}(\omega)\,\widehat{\rho}_w(\omega)$,  $g\eqdef\mathcal{F}^{-1}[\widehat g]$.

Note that  since $\widehat{g}(\omega)$ is periodic with period $2/\eps$ ($\widehat{f}$ has period $2/\eps$ and $\widehat{\rho}_w$
has period $1/\eps$), 
we have that $g$ is a collection of $\delta$-functions (see \Cref{fact:fourierFacts}).

First, we prove that $g$ is a real signed measure.  Since $\widehat{\rho}_w$  is even and real valued, and $\widehat{f}(-\omega)=\overline{\widehat{f}(\omega)}$ (indeed $\widehat{f}(\omega)=2(1-\alpha)i\sin(\pi \omega\eps)/\alpha$ ), 
we get $\widehat{g}(-\omega)=\widehat{f}(-\omega)\widehat{\rho}_w(-\omega)=\overline{\widehat{f}(\omega)}\widehat{\rho}_w(\omega)=\overline{\widehat{g}(\omega)}$. Hence, by \Cref{fact:fourierFacts} we have that $g$ is a real valued measure, i.e., the coefficients of the $\delta$'s are real-valued. This proves the first item of \Cref{lem:construction}.

Second, note that by the definition of the Fourier transform  we have that $\int_{-\infty}^\infty g(x)dx=\widehat{g}(0)=\widehat{f}(0)\widehat{\rho}_w(0)$. Now since $\widehat{f}(0)=0$, we have that  $\int_{-\infty}^\infty g(x) dx=0$. This proves the second item of \Cref{lem:construction}.

Now we bound the $L^1$ norm of $g$.
To compute the $L^1$ norm it suffices to compute the coefficients of the $\delta$-functions.  
 To compute the inverse $g$ note that $\widehat{\rho}_w$ is a convolution of  
 $\mathrm{comb}_{1/\eps}(\omega)\coloneqq \sum_{m\in\Z}\delta(\omega-m/\eps)$ and $\widehat{b}_w$ with $\mathcal{F}^{-1}\left[\mathrm{comb}_{1/\eps}\right](x)
=\eps\sum_{n\in\Z}\delta(x-n\eps).$ As a result the inverse fourier transform of $\widehat{\rho}_w$ is
\[
\rho_w(x)=\eps\sum_{n\in\Z}b_w(n\eps)\,\delta(x-n\eps).
\]
Finally to compute $g$, we can just convolve $f$ and $\rho_w$ which gives us 
\[
g(x)=(1-\alpha) \frac{\eps}{\alpha}\sum_{k\in\Z}
\Big(b_w\big((k+1)\eps\big)-b_w\big(k\eps\big)\Big)\,
\delta\big(x-k\eps-\eps/2\big).
\]
 Therefore, we have that
\[
\|g\|_{L^1}=(1-\alpha)\frac{\eps}{\alpha}\sum_{k\in\Z}\big|b_w((k+1)\eps)-b_w(k\eps)\big|.
\] 
By the fundamental theorem of calculus and the triangle inequality,
for each $k\in\Z$,
\begin{align*}
\big|b_w((k+1)\eps)-b_w(k\eps)\big|
&=\Big|\int_{k\eps}^{(k+1)\eps} b_w'(x)\,dx\Big|\\
& \le\ \int_{k\eps}^{(k+1)\eps} |b_w'(x)|\,dx.    
\end{align*}
Summing over $k\in\Z$ gives us
\[
\sum_{k\in\Z}\big|b_w((k+1)\eps)-b_w(k\eps)\big|
\ \le\ \int_{\R}\big|b_w'(x)\big|\,dx.
\]
Substituting the derivative  and changing variables gives us

\begin{align*}
&\int_{\mathbb{R}} |b_w'(x)|\,dx\\
&\lesssim 3w \int_{\mathbb{R}}
\Big(
|\sinc'(3y)|\,|\sinc(y/3)|^3 \\
&\qquad\qquad
+ |\sinc(3y)|\,|\sinc(y/3)|^2\,|\sinc'(y/3)|
\Big)\,dy .
\end{align*}

We use the following bounds \(
|\sinc(t)|\le \min\{1,\ 1/(\pi|t|)\}, \sinc'(x)\leq \min\{\sqrt{2/3}\pi,2/|t|\}\) $t\in\R$. 
Applying this bound we get that
\[
\int_{\R}\!|b_w'(x)|\,dx \ \lesssim w.
\]
Hence, 
\[
\|g\|_{L^1}
=(1-\alpha)\frac{\eps}{\alpha}\sum_{k\in\Z}\big|b_w((k+1)\eps)-b_w(k\eps)\big|
 \lesssim \frac{\eps }{\alpha}w\;.
\]
Now it suffices to bound the tail behavior of $g$.
Fix $R>0$. Similarly to the above we have 
\begin{align*}
&\|g\Ind\{|x|>R\}\|_{L^1}\\
&=(1-\alpha)\frac{\eps}{\alpha}
\sum_{k:\,|k\eps+\eps/2|>R}\big|b_w((k+1)\eps)-b_w(k\eps)\big|.    
\end{align*}
Using the fundamental theorem of calculus and
summing only over those $k$ with $|k\eps+\eps/2|>R$, we obtain
\[
\|g\Ind\{|x|>R\}\|_{L^1}
\ \le\ (1-\alpha)\frac{\eps}{\alpha}\int_{|x|>R-\eps} |b_w'(x)|\,dx.
\]
Recall that $b_w(x)=C_0\,w\,\sinc(3wx)\,\sinc^3(wx/3)$ for a universal constant $C_0$.
Differentiating and changing variables $y=wx$ gives

\begin{align*}
&\int_{|x|>R-\eps}\! |b_w'(x)|\,dx\\
&\lesssim
w \int_{|y|>w(R-\eps)} \Big(
|\sinc'(3y)|\,|\sinc(y/3)|^3 \\
&\qquad\qquad
+ |\sinc(3y)|\,|\sinc(y/3)|^2\,|\sinc'(y/3)|
\Big)\,dy .
\end{align*}

Now using the tail bound for $\sinc$ similarly to the proof of (3) we have, for $w(R-\eps)\ge 1$,
\begin{align*}
\int_{|x|>R-\eps}\! |b_w'(x)|\,dx
& \lesssim\
w\int_{|y|>w(R-\eps)} \frac{dy}{|y|^4}\\
& \lesssim\ \frac{1}{w^2\,(R-\eps)^3}
 \lesssim\ \frac{1}{w^2 R^3}.    
\end{align*}
Substituting back gives
\[
\|g\Ind\{|x|>R\}\|_{L^1}
\ \lesssim\
\frac{\eps}{\alpha}\frac{1}{w^2 R^3}\;.
\]
Which completes the proof of \Cref{lem:construction_appendix}
\end{proof}

\section{Applications to Well-Known Distributions}\label{sec:apps}
In this section, we provide the applications of our theorems to several well-studied distribution families.
We summarize our results in \Cref{tab:dist-results}. 
We begin by instantiating the upper bound, \Cref{thm:upper}. 
\begin{corollary}[Estimating the mean of a Standard Gaussian]
Let $\alpha\in (0,1/4), \eps\in (0,1), d\in \Z_+$.
There exists an algorithm that given $\widetilde O( d\,e^{O((\alpha/\eps)^2)})$ 
i.i.d.\ $\alpha$–mean–shift contaminated samples from $\cN(\mu,I_d)$
returns an estimate $\widehat\mu$ such that $\|\widehat\mu-\mu\|\le \eps$ with probability at least $2/3$.
\end{corollary}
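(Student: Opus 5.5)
The plan is to instantiate \Cref{thm:upper} with $D=\cN(0,I_d)$, so that $\phi_D(\omega)=e^{-2\pi^2\|\omega\|^2}$. We need to choose $(A,\delta)$ so that the $(\eps,A,\delta)$-frequency-witness condition holds with $(1-\alpha)A-2\alpha$ bounded below by a constant multiple of $\alpha$. We also need to bound $M_1$, $L$ and $R$, and to reduce to the bounded-mean setting via \Cref{fact:huber_estimator}.

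\textbf{Reduction to bounded mean.} Since $\mathrm{Cov}(\cN(\mu,I_d))=I_d$ and $\alpha<1/4<1/3$, mean-shift contamination is a special case of Huber contamination. Hence \Cref{fact:huber_estimator}, with $O(d)$ samples, yields $\widetilde\mu$ with $\|\widetilde\mu-\mu\|=O(\sqrt\alpha)\le R$ for a constant $R>1$. We then recenter fresh samples by $\widetilde\mu$; recentering preserves the mean-shift structure.

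\textbf{Frequency witness.} Given $v$ with $\|v\|\ge\eps$, take $\omega=t\,v/\|v\|$ with $t\|v\|=\theta$, so that $|\sin(\pi v\cdot\omega)|=|\sin(\pi\theta)|$. We want $\sin(\pi\theta)\ge A$ with $A$ slightly above $2\alpha/(1-\alpha)$. Since $\alpha<1/4$, we have $2\alpha/(1-\alpha)<2/3$, so we can take $A=\min\{1,\,3\alpha/(1-\alpha)\}\cdot c'$ for a suitable constant, which keeps $(1-\alpha)A-2\alpha\gtrsim\alpha$. We then choose $\theta=\arcsin(A)/\pi\lesssim \alpha$ (when $\alpha$ is small, $\theta=\Theta(\alpha)$). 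The witness then has norm $t=\theta/\|v\|\le\theta/\eps=O(\alpha/\eps)$, which gives $|\phi_D(\omega)|\ge e^{-2\pi^2 t^2}=e^{-O((\alpha/\eps)^2)}=:\delta$. The direction choice handles every $v$ uniformly, since larger $\|v\|$ only shrinks $t$.

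\textbf{Remaining parameters.} For the Gaussian density, $\|\partial_j p_D\|_{L^1}=\E|x_j|\cdot$const$=O(1)$, so $M_1=O(1)$. For Lipschitzness, $\nabla\phi_D=-4\pi^2\omega\,\phi_D$ has norm at most $4\pi^2\sup_t t e^{-2\pi^2t^2}=O(1)$, so $L=O(1)$. Plugging into \Cref{thm:upper} gives
\[
n=O\!\left(d\log\!\big(\sqrt d\,e^{O((\alpha/\eps)^2)}/\alpha\big)\cdot\frac{e^{O((\alpha/\eps)^2)}}{\alpha^2}\right).
\]
The logarithmic factor is absorbed into $\widetilde O(\cdot)$. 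The $1/\alpha^2$ factor is at most $e^{O((\alpha/\eps)^2)}$ only when $\alpha\gtrsim\eps$; otherwise it is $\poly(1/\alpha)$. I will treat it as absorbed in $\widetilde O$, as the table does.

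\textbf{Main obstacle.} The delicate step is choosing $A$ so that the margin $(1-\alpha)A-2\alpha$ stays positive, uniformly over $\alpha<1/4$, while $\theta$ stays $O(\alpha)$. The regime of small $\alpha$ relative to $\eps$ needs care too: there the exponent is $O(1)$, and the bound should instead read $\widetilde O(d)$ up to $\poly(1/\alpha)$ factors.
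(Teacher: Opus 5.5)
Your proposal is correct and follows the paper's proof essentially step by step: robust recentering via \Cref{fact:huber_estimator} to get $\|\mu\|=O(1)$, a witness $\omega$ along $v/\|v\|$ with $v\cdot\omega=\arcsin(A)/\pi$ so that $\|\omega\|=O(\alpha/\eps)$ and $\delta=e^{-O((\alpha/\eps)^2)}$, the bounds $M_1,L=O(1)$, and absorbing the $\alpha^{-2}$ factor into $\widetilde O(\cdot)$ exactly as the paper does. The only difference is that the paper fixes $A=4\alpha$ (which is at most $1$ since $\alpha<1/4$), giving the margin $(1-\alpha)A-2\alpha=2\alpha(1-2\alpha)$ directly, whereas your constant $c'$ has to be taken in $(2/3,1]$ for the margin to stay positive.
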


\begin{proof}
First, using $d/\alpha^2$ samples and polynomial time we estimate the mean to $\ell_2$-error $O(1)$ using an adversarially robust estimator (see \Cref{fact:huber_estimator}). 
We use this estimate to recenter the distribution so that $\|\mu\|= O(1)$.
Following we verify the conditions of \Cref{thm:upper} for the Gaussian.

For $D=\mathcal N(0,I_d)$ we have $\phi_D(\omega)=e^{-2\pi^2\|\omega\|^2}$. Hence $\nabla \phi_D(\omega)=-4\pi^2\,\omega\,e^{-2\pi^2\|\omega\|^2}$ and thus $\|\nabla\phi_D(\omega)\|\le 4\pi^2\|\omega\|e^{-2\pi^2\|\omega\|^2}=O(1)$, so $\phi_D$ is $L$–Lipschitz with $L=O(1)$.

Now fix $v\in \R^d$ with $\|v\|\ge \eps$ and set $A\eqdef 4\alpha$ and $\omega\eqdef \frac{\arcsin(A)}{\pi}\frac{v}{\|v\|}$. Then $|\sin(\pi v\cdot \omega)|=\sin(\arcsin(A))=A$ and $\|\omega\|=\frac{\arcsin(A)}{\pi\|v\|}\lesssim \alpha/\eps$. Therefore $|\phi_D(\omega)|=e^{-2\pi^2\|\omega\|^2}\ge e^{-c(\alpha/\eps)^2}$ for a universal constant $c>0$. Setting $\delta\eqdef e^{-c(\alpha/\eps)^2}$, we conclude that $D$ satisfies the $(\eps,A,\delta)$-frequency-witness condition.

The density of $D$ is $p_D(x)=(2\pi)^{-d/2}e^{-\|x\|^2/2}$. For each $j\in[d]$ we have $\frac{\partial}{\partial x_j}p_D(x)=-x_j p_D(x)$, so $\frac{\partial}{\partial x_j}p_D\in L^1(\R^d)$ and
$$
\left\|\frac{\partial}{\partial x_j}p_D\right\|_{L^1(\R^d)}=\int_{\R^d}|x_j|p_D(x)\,dx=\E_{x\sim \cN(0,I_d)}[|x_j|]=\sqrt{\frac{2}{\pi}}\;.
$$
Hence, $M_1\eqdef \max_{j\in[d]}\left\|\frac{\partial}{\partial x_j}p_D\right\|_{L^1(\R^d)}=\sqrt{\frac{2}{\pi}}=O(1)$.

{Finally, applying \Cref{thm:upper} with $L=O(1)$, $A=4\alpha$, and $\delta=e^{-c(\alpha/\eps)^2}$ yields sample complexity
\begin{align*}
n &= O\!\left(d \log\!\left(\frac{\sqrt d\,M_1 R L}{\delta^2 A}\right)\frac{1}{(((1-\alpha)A-2\alpha)\delta)^2}\right)\\    
&= \widetilde O( d\,e^{O((\alpha/\eps)^2)})\;,
\end{align*}
and \Cref{alg:freq-tournament} returns $\widehat\mu$ with $\|\widehat\mu-\mu\|\le\eps$ with probability at least $2/3$.}
\end{proof}

\begin{corollary}[Estimating the mean of a Standard Laplace]
Let $\alpha\in(0,1/4)$, $\eps\in(0,1)$, and $d\in\Z_+$.  There exists an algorithm that, given 
$n =  \widetilde{O}\!\left(d\,\frac{\alpha^2}{\eps^4}\right)$
i.i.d.\ $\alpha$–mean–shift contaminated samples from the $d$–dimensional product Laplace distribution with unit covariance (i.e., each coordinate $\mathrm{Lap}(0,1/\sqrt{2})$), returns an estimate $\widehat\mu$ such that $\|\widehat\mu-\mu\|\le \eps$ with probability at least $2/3$.
\end{corollary}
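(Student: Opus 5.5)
The plan is to instantiate \Cref{thm:upper} for the product Laplace distribution, following the template of the Gaussian corollary. As there, we first reduce to the bounded-mean case. The distribution has identity covariance, so \Cref{fact:huber_estimator} with $O(d/\alpha^2)$ samples yields a robust estimate with $O(\sqrt{\alpha})=O(1)$ error, and after recentering we may take $R=O(1)$. It then remains to check the three hypotheses of \Cref{thm:upper} and to bound the resulting sample count.

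Each coordinate is $\mathrm{Lap}(0,b)$ with $b=1/\sqrt2$. Its characteristic function, in the paper's normalization, is $1/(1+4\pi^2b^2\omega_j^2)=1/(1+2\pi^2\omega_j^2)$. Hence
\[
\phi_D(\omega)=\prod_{j=1}^d \frac{1}{1+2\pi^2\omega_j^2}.
\]
The hypotheses are checked as follows.

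\textbf{Lipschitz constant.} Since $\E|x_j|=b=O(1)$, we have $|\phi_D(\omega)-\phi_D(\omega')|\le 2\pi\,\E|(\omega-\omega')\cdot x|\le 2\pi\|\omega-\omega'\|\sqrt{\E\|x\|^2}$, which gives $L=O(\sqrt d)$. Alternatively, one can bound the gradient of $\phi_D$ directly. Either way, $L$ enters only through the logarithm.

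\textbf{Derivative bound $M_1$.} The density is $p_D(x)=\prod_j (2b)^{-1}e^{-|x_j|/b}$. Its partial derivative in $x_j$ equals $-\mathrm{sgn}(x_j)p_D(x)/b$ almost everywhere, and $p_D$ is absolutely continuous in each variable, so $\|\partial_j p_D\|_{L^1}=1/b=\sqrt2$. Thus $M_1=O(1)$.

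\textbf{Frequency witness.} Given $v$ with $\|v\|\ge\eps$, the natural first choice is to copy the Gaussian case: set $A=4\alpha$ and $\omega=\frac{\arcsin A}{\pi}\frac{v}{\|v\|}$, so that $|\sin(\pi v\cdot\omega)|=A$ and $\|\omega\|\lesssim\alpha/\eps$. We then need a lower bound on $\prod_j(1+2\pi^2\omega_j^2)^{-1}$ that does not depend on $d$. Using $\prod_j(1+t_j)\le e^{\sum t_j}$ only gives $e^{-O(\|\omega\|^2)}$, which is useless here.

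The key idea is to concentrate $\omega$ on a single coordinate. Pick $j$ with $|v_j|=\|v\|_\infty$ and set $\omega=t e_j$, so that $v\cdot\omega=tv_j$ and we may choose $t=\frac{\arcsin A}{\pi|v_j|}$. Then $|\phi_D(\omega)|=(1+2\pi^2t^2)^{-1}\gtrsim (\eps_\infty/\alpha)^2$ with $\eps_\infty=\|v\|_\infty$. The difficulty is that $|v_j|$ can be as small as $\eps/\sqrt d$, which would give $\delta\approx\eps^2/(d\alpha^2)$ and an extra factor of $d$ in the final bound.

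To avoid this, I would instead choose $\omega$ spread along $v$ but restricted to its support, and expand the product exactly:
\[
\prod_j(1+2\pi^2\omega_j^2)=1+2\pi^2\|\omega\|^2+\sum_{j<k}4\pi^4\omega_j^2\omega_k^2+\cdots.
\]
The higher-order terms are controlled by powers of $\|\omega\|^2$, and they are harmful only when $\|\omega\|\gtrsim 1$, that is, when $\alpha\gtrsim\eps$. In that regime the single-coordinate choice above is the fallback.

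So I would split into two cases. If $\|\omega\|\le 1$, then $\prod_j(1+2\pi^2\omega_j^2)\le e^{2\pi^2\|\omega\|^2}=O(1)$, so $\delta=\Omega(1)$. If $\|\omega\|>1$, I would argue that an $\omega$ of the form $t\cdot\mathrm{sgn}(v)\odot\Ind_S$ exists, supported on a set $S$ of coordinates carrying a constant fraction of $\|v\|_1$, for which the product is $\Omega((\eps/\alpha)^2)$. This would give $\delta\gtrsim(\eps/\alpha)^2$. Plugging into \Cref{thm:upper} with $((1-\alpha)A-2\alpha)\delta\gtrsim\alpha\delta$ then yields $n=\widetilde O(d/(\alpha^2\delta^2))=\widetilde O(d\alpha^2/\eps^4)$.

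This last step is the main obstacle: obtaining $\delta\gtrsim(\eps/\alpha)^2$ uniformly over directions $v$ without losing factors of $d$. I would first try to optimize over $\omega$ proportional to $v$ restricted to its large coordinates, and fall back on the single-coordinate witness if that fails.
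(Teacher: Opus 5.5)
Your checks of $M_1$, the Lipschitz constant and the recentering are fine. The genuine gap is the one you flagged: everything rests on a witness bound $\delta\gtrsim(\eps/\alpha)^2$ holding uniformly in $v$ and $d$, and your last paragraph only says where you would look for it. No bound of that form, uniform in $d$, exists. To see this, take $v=(\eps/\sqrt d)\mathbf 1$. \Cref{thm:upper} needs $A>2\alpha/(1-\alpha)$, so every witness has $|v\cdot\omega|\ge\arcsin(A)/\pi$, i.e.\ $\sum_j|\omega_j|\ge s\eqdef\arcsin(A)\sqrt d/(\pi\eps)$. Split the coordinates into $B=\{j:|\omega_j|>1\}$ and its complement.

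On $B$, you have $1+2\pi^2\omega_j^2\ge(\sqrt2\pi|\omega_j|)^2$ with $\sqrt2\pi|\omega_j|\ge2$. Since $\prod_j a_j\ge\sum_j a_j$ whenever all $a_j\ge2$, this gives $\prod_{j\in B}(1+2\pi^2\omega_j^2)\ge2\pi^2\bigl(\sum_{j\in B}|\omega_j|\bigr)^2$. Off $B$, $\log(1+2\pi^2\omega_j^2)\ge3\omega_j^2$, and Cauchy--Schwarz gives $\sum_{j\notin B}\log(1+2\pi^2\omega_j^2)\ge3\bigl(\sum_{j\notin B}|\omega_j|\bigr)^2/d$. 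One of the two groups carries at least $s/2$, so every witness for this $v$ satisfies
\[
|\phi_D(\omega)|\ \lesssim\ \max\Bigl\{\frac{\eps^2}{A^2 d},\ e^{-cA^2/\eps^2}\Bigr\}
\]
for an absolute $c>0$. Hence, for every admissible $A$ and once $\alpha/\eps$ exceeds a constant,
\[
((1-\alpha)A-2\alpha)\delta\ \le\ A\delta\ \lesssim\ \max\Bigl\{\frac{\eps^2}{\alpha d},\ \alpha e^{-\Omega(\alpha^2/\eps^2)}\Bigr\}.
\]
So \Cref{thm:upper} cannot give better than $n\gtrsim\min\{d^3\alpha^2/\eps^4,\ d\,\alpha^{-2}e^{\Omega(\alpha^2/\eps^2)}\}$, which exceeds the claimed rate by a factor $\min\{d^2,e^{\Omega(\alpha^2/\eps^2)}\}$. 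Your specific candidate $t\,\mathrm{sgn}(v)\odot\Ind_S$, with $S$ carrying a constant fraction of $\|v\|_1$, needs $|S|=\Omega(d)$ for this $v$. For $d\ge(\alpha/\eps)^2$ it gives only $e^{-\Theta(\alpha^2/\eps^2)}$.

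The paper's proof does not get around this. It takes $\omega\parallel v$ with $\|\omega\|\asymp\alpha/\eps$ and asserts $\prod_j(1+2\pi^2\omega_j^2)^{-1}\ge(1+2\pi^2\|\omega\|^2)^{-1}$. That inequality is reversed, since $\prod_j(1+t_j)\ge1+\sum_jt_j$ for $t_j\ge0$, with equality only for $1$-sparse $\omega$. You were right to distrust the naive bound, and the estimate above shows that the two options you set aside are essentially the best this route can do:
\begin{itemize}
\item The $1$-sparse witness $\omega=te_j$ with $|v_j|=\|v\|_\infty\ge\eps/\sqrt d$ gives $\delta\gtrsim\min\{1,\eps^2/(\alpha^2 d)\}$ and $n=\widetilde O(d/\alpha^2+d^3\alpha^2/\eps^4)$.
\item The aligned witness together with $\prod_j(1+t_j)\le e^{\sum_j t_j}$ gives $\delta\ge e^{-O(\alpha^2/\eps^2)}$ and $n=\widetilde O(d\,\alpha^{-2}e^{O(\alpha^2/\eps^2)})$.
\end{itemize}
The stated rate follows only for bounded $d$.

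Two smaller points. First, the aligned witness should be $\omega=\frac{\arcsin A}{\pi\|v\|^2}\,v$. The formula you copied from the Gaussian case has norm $\arcsin(A)/\pi$ and does not give $|\sin(\pi v\cdot\omega)|=A$. Second, in your case $\|\omega\|\le1$, i.e.\ $\alpha\lesssim\eps$, \Cref{thm:upper} gives $\widetilde O(d/\alpha^2)$, not $\widetilde O(d\alpha^2/\eps^4)$. Indeed, for $\alpha\ll\eps$ the stated rate falls below the $\Omega(d/\eps^2)$ samples needed even without contamination, so the corollary implicitly requires $\alpha\gtrsim\eps$.
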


\begin{proof}
As in the Gaussian corollary, since $\alpha<1/4$ using $O(d)$ samples and polynomial time, we obtain an $O(1)$-accurate robust estimate of $\mu$ (see \Cref{fact:huber_estimator}) and recenter so that $\|\mu\|=O(1)$.  
For $D=\mathrm{Lap}_d(0,I_d)$ (i.i.d.\ coordinates with $b=1/\sqrt{2}$), the characteristic function is $\phi_D(\omega)=\prod_{j=1}^d (1+2\pi^2\omega_j^2)^{-1}$.

Fix any $v$ with $\|v\|\ge\eps$ and set $A\eqdef 4\alpha$. Choose $\omega=\frac{\arcsin(A)}{\pi}\frac{v}{\|v\|}$, so $|\sin(\pi v\cdot \omega)|=\sin(\arcsin A)=A$ and $\|\omega\|\le (\arcsin A)/(\pi\eps)=\Theta(\alpha/\eps)$.

Let $\delta\eqdef \frac{1}{1+2\pi^2\|\omega\|^2}$. Since $\|\omega\|=\Theta(\alpha/\eps)$ we have $\delta=\Theta\!\big(\frac{1}{1+(\alpha/\eps)^2}\big)$ and moreover $|\phi_D(\omega)|\ge \prod_{j=1}^d \frac{1}{1+2\pi^2\omega_j^2}\ge \frac{1}{1+2\pi^2\|\omega\|^2}=\delta$. Also $\phi_D$ is $L$-Lipschitz on $\|\xi\|\le \|\omega\|$ with $L\lesssim 2\pi^2\|\omega\|=\widetilde{O}(\alpha/\eps)$. Therefore $D$ satisfies the $(\eps,A,\delta)$-frequency-witness condition.

 The one-dimensional Laplace density with $b=1/\sqrt{2}$ is $p_1(x)=\frac{1}{\sqrt{2}}e^{-\sqrt{2}|x|}$ and the $d$-dimensional product density is $p_D(x)=\prod_{j=1}^d p_1(x_j)$. For $j\in[d]$ and all $x$ with $x_j\neq 0$,
$$
\frac{\partial}{\partial x_j}p_D(x)=\left(\frac{p_1'(x_j)}{p_1(x_j)}\right)p_D(x)=-\sqrt{2}\,\mathrm{sign}(x_j)\,p_D(x)\,,
$$
so $\frac{\partial}{\partial x_j}p_D\in L^1(\R^d)$ and
$$
\left\|\frac{\partial}{\partial x_j}p_D\right\|_{L^1(\R^d)}=\int_{\R^d}\sqrt{2}\,p_D(x)\,dx=\sqrt{2}\;.
$$
Hence $M_1\eqdef \max_{j\in[d]}\left\|\frac{\partial}{\partial x_j}p_D\right\|_{L^1(\R^d)}=\sqrt{2}=O(1)$. %

Finally, applying \Cref{thm:upper} with $A=4\alpha$ and $\delta=\Theta\!\big(\frac{1}{1+(\alpha/\eps)^2}\big)$ yields
\begin{align*}
n &= O\!\left(d \log\!\left(\frac{\sqrt d\,M_1 R L}{\delta^2 A}\right)\frac{1}{(((1-\alpha)A-2\alpha)\delta)^2}\right)    \\
&= \widetilde{O}\!\left(d\,\frac{\alpha^2}{\eps^4}\right),
\end{align*}
and running \Cref{alg:freq-tournament} returns $\widehat\mu$ with $\|\widehat\mu-\mu\|\le\eps$ with probability at least $2/3$.
\end{proof}

\begin{corollary}[Estimating the mean of a Uniform distribution]\label{cor:uniform_mean}
Let $\alpha\in(0,1/4)$ and $\eps\in(0,1)$. Let $D$ be the uniform distribution on $[-1,1]$
(with density $p_D(x)=\tfrac12\Ind\{|x|\le 1\}$), and let $D_\mu$ denote its shift by an unknown mean $\mu\in\R$.
There exists an algorithm that, given
$
n=\widetilde{O}\!\left(\frac{1}{\eps^2}\right)
$
i.i.d.\ $\alpha$--mean--shift contaminated samples from $D_\mu$, returns an estimate $\widehat{\mu}$ such that
$|\widehat{\mu}-\mu|\le \eps$ with probability at least $2/3$.
\end{corollary}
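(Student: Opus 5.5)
The plan is to instantiate \Cref{thm:upper} with $d=1$. First we reduce to $|\mu|=O(1)$: since $D$ has bounded variance, the adversarially robust estimator of \Cref{fact:huber_estimator}, run on $O(1/\alpha)$ samples, locates $\mu$ to within $O(1)$, and we recenter, so $R=O(1)$. Then we check the three hypotheses of \Cref{thm:upper} with $A$ a constant and $\delta\asymp\eps$; this gives $n=\widetilde O(1/\delta^2)=\widetilde O(1/\eps^2)$.

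\emph{Regularity.} The characteristic function is $\phi_D(\omega)=\sin(2\pi\omega)/(2\pi\omega)=\sinc(2\omega)$. Since $\E|x|\le 1$, $\phi_D$ is $L$-Lipschitz with $L=O(1)$. The density is not differentiable, so hypothesis (2) of \Cref{thm:upper} fails literally. However, it is only used through \Cref{cl:witness_norm_bound}, i.e., through the decay bound $|\phi_D(\omega)|\le M_1/(2\pi|\omega|)$. For the uniform distribution this bound holds directly with $M_1=1$, either from the explicit formula or by treating $p_D'=\tfrac12(\delta_{-1}-\delta_1)$ as a measure of total mass $1$. So the frequency cover radius is $B_\delta=O(1/\delta)$ and the proof of \Cref{thm:upper} goes through unchanged.

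\emph{Witness condition (the main step).} Take $A=\sin(\pi/4)=1/\sqrt2$. Then $(1-\alpha)A-2\alpha\ge \tfrac34\cdot 0.707-\tfrac12>0$ for $\alpha<1/4$. Fix $v$ with $|v|\ge\eps$; by symmetry assume $v>0$. Every $\omega\in I_v\eqdef[1/(4v),3/(4v)]$ satisfies $|\sin(\pi v\omega)|\ge A$, so it suffices to find $\omega\in I_v$ with $|\sinc(2\omega)|\gtrsim\min(1,v)$.
\begin{itemize}
\item If $v\ge 1$, take $\omega=1/(2v)\le 1/2$. Then $2\omega\le 1$, and in fact $|\sinc(2\omega)|$ is bounded below by a constant provided we take $\omega=1/(4v)\le1/4$, which is still in $I_v$.
\item If $v<1$, the interval $I_v$ has length $1/(2v)>1/2$. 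The zeros of $\sin(2\pi\omega)$ are spaced $1/2$ apart, so $I_v$ contains a point $\omega$ at distance at least $1/8$ from all of them, whence $|\sin(2\pi\omega)|\ge\sin(\pi/4)$. Since $|\omega|\le 3/(4v)$, this gives $|\phi_D(\omega)|\ge \sin(\pi/4)\cdot 4v/(6\pi)\gtrsim v\ge\eps$.
\end{itemize}
In both cases we may take $\delta=c\,\eps$ for a universal constant $c>0$.

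\emph{Conclusion.} Plugging $A=\Theta(1)$, $\delta=\Theta(\eps)$, $L,M_1,R=O(1)$ into the bound of \Cref{thm:upper} yields $n=O\bigl(\log(1/\eps)/\eps^2\bigr)=\widetilde O(1/\eps^2)$. \Cref{alg:freq-tournament} then returns $\widehat\mu$ with $|\widehat\mu-\mu|\le\eps$ with probability at least $2/3$.

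The main obstacle is the witness step for small $v$: we must avoid the zeros of $\sinc$ while keeping $v\omega$ away from the integers. The pigeonhole on the interval $I_v$ handles this, at the price of $\delta\asymp\eps$ rather than a constant. A secondary point is the non-smooth density, which we handle through the direct $1/|\omega|$ decay as described above.
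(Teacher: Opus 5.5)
Your proposal is correct and follows the paper's proof essentially step for step. It uses the same interval $I_v=[1/(4|v|),3/(4|v|)]$ with $A=1/\sqrt{2}$, and the same split at $|v|=1$: $\omega=1/(4|v|)$ for large $|v|$, and a pigeonhole on the length-$\ge 1/2$ interval $I_v$ giving $\delta\asymp\eps$ for small $|v|$. It also obtains $M_1=1$ from the distributional derivative $\tfrac12(\delta_{-1}-\delta_{1})$; your remark that $M_1$ enters only through the decay bound of \Cref{cl:witness_norm_bound} is a slightly more careful justification of that step than the paper gives.

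The only blemish is the tentative choice $\omega=1/(2v)$ in the $v\ge 1$ bullet, which should be deleted because $\sinc(2\omega)$ vanishes there at $v=1$. The choice $\omega=1/(4v)$ that you settle on is the right one.
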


\begin{proof}
As in the Gaussian/Laplace corollaries, using $O(1=)$ samples and polynomial time,
we can obtain an $O(1)$-accurate robust estimate of $\mu$ (\Cref{fact:huber_estimator}) and recenter so that $|\mu|=O(1)$.

For $D=\mathrm{Unif}([-1,1])$ we have for every $\omega\in\R$, the characteristic function is 
\begin{align*}
\phi_D(\omega)
=\E_{x\sim D}\!\left[e^{2\pi i\omega  x}\right]
&=\frac{1}{2}\int_{-1}^{1} e^{2\pi i\omega x}\,dx\\
&=\frac{\sin(2\pi\omega)}{2\pi\omega}
=\mathrm{sinc}(2\omega).    
\end{align*}
Differentiation gives
\[
\phi_D'(\omega)=\frac{2\pi\omega\cos(2\pi\omega)-\sin(2\pi\omega)}{2\pi\omega^2},
\]
hence $\sup_{\omega\in\R}|\phi_D'(\omega)|=O(1)$, thus $\phi_D$ is $L$--Lipschitz with $L=O(1)$.

Fix any $v\in\R$ with $|v|\ge \eps$. Consider the interval
$I_v \eqdef \left[\frac{1}{4|v|},\,\frac{3}{4|v|}\right].$
For any $\omega\in I_v$ we have $|v\omega|\in[1/4,3/4]$, and therefore
$
|\sin(\pi v\omega)|\ge \sin(\pi/4)=\frac{1}{\sqrt{2}}.
$
Set $A\eqdef 1/\sqrt{2}$ (note that $(1-\alpha)A-2\alpha>0$ for all $\alpha<1/4$).

We now lower bound $|\phi_D(\omega)|$ for a suitable choice of $\omega\in I_v$.
If $|v|\ge 1$, then $\omega_0\eqdef 1/(4|v|)\in[0,1/4]$, and using $\sin t\ge (2/\pi)t$ for $t\in[0,\pi/2]$,
\[
|\phi_D(\omega_0)|
=\frac{\sin(2\pi\omega_0)}{2\pi\omega_0}
\ge \frac{(2/\pi)\cdot 2\pi\omega_0}{2\pi\omega_0}
=\frac{2}{\pi}.
\]
If instead $|v|\in[\eps,1)$, then $|I_v|=1/(2|v|)\ge 1/2$; since $\omega\mapsto \sin(2\pi\omega)$
has period $1$, there exists $\omega_1\in I_v$ with $|\sin(2\pi\omega_1)|\ge 1/2$.
For this $\omega_1$ we have $|\omega_1|\le 3/(4|v|)$ and hence
\[
|\phi_D(\omega_1)|
=\frac{|\sin(2\pi\omega_1)|}{2\pi|\omega_1|}
\ge \frac{1/2}{2\pi\cdot (3/(4|v|))}
=\frac{|v|}{3\pi}
\ge \frac{\eps}{3\pi}.
\]
Combining the two cases, for every $v$ with $|v|\ge\eps$ there exists $\omega\in I_v$ such that
\[
|\sin(\pi v\omega)|\ge A
\qquad\text{and}\qquad
|\phi_D(\omega)|\ge \delta
\]
with $\delta\eqdef \eps/(3\pi)$.
In particular, $D$ satisfies the $(\eps,A,\delta)$-frequency-witness condition (see \Cref{def:freq-witness}) with $A=1/\sqrt{2}$ and $\delta=\eps/(3\pi)$.

Also $p_D$ has a distributional derivative 
$
\frac{d}{dx}p_D = \tfrac12\delta(x+1)-\tfrac12\delta(x-1).$
Its total variation is
\[
M_1 \;\eqdef\;\Big\|\frac{d}{dx}p_D\Big\|_{L^1}
\;=\;\tfrac12+\tfrac12\;=\;1\,.
\]
Applying Theorem~\ref{thm:upper} we have that with
\[
n=\widetilde{O}\!\left(\frac{1}{((1-\alpha)A-2\alpha)^2\,\delta^2}\right)
=\widetilde{O}\!\left(\frac{1}{\eps^2}\right),
\]
samples Algorithm~\ref{alg:freq-tournament} returns $\widehat{\mu}$ with
$|\widehat{\mu}-\mu|\le \eps$ with probability at least $2/3$.

\end{proof}
\begin{corollary}[Estimating the mean of a sum of $m$ Uniforms]\label{cor:sum_uniform_mean_alphaeps}
Let $\alpha\in(0,1/8)$, $\eps\in (0,\alpha)$, and $m\in\Z_+$.
Let $U_1,\dots,U_m$ be i.i.d.\ $\mathrm{Unif}([-1,1])$, let
$D^{(m)}$ be the distribution of $\sum_{i=1}^m U_i$, and let
$D^{(m)}_\mu$ denote its shift by an unknown mean $\mu\in\R$.
There exists an algorithm that, given
$
n=\widetilde{O}\left(\alpha^{-2}(O(\alpha/\eps))^{2m}\right),
$
i.i.d.\ $\alpha$--mean--shift contaminated samples from $D^{(m)}_\mu$,
returns an estimate $\widehat{\mu}$ such that
$|\widehat{\mu}-\mu|\le \eps$ with probability at least $2/3$.
\end{corollary}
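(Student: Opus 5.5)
We will instantiate \Cref{thm:upper} in dimension $d=1$, exactly as in the proofs of \Cref{cor:uniform_mean} and the Gaussian corollary. First, we reduce to $|\mu|=O(\sqrt m)$. The variance of $D^{(m)}$ is $m/3$, so \Cref{fact:huber_estimator} gives an $O(\sqrt{m\alpha})$-accurate estimate, and we recenter the data with it. This makes $R=O(\sqrt m)$, which contributes only a logarithmic factor.

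The characteristic function is $\phi_{D^{(m)}}(\omega)=\sinc(2\omega)^m$. It is $L$-Lipschitz with $L=O(m)$, since $\sinc$ and its derivative are bounded.

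For the derivative constant, write $p=p_1^{*m}$ with $p_1=\tfrac12\Ind\{|x|\le1\}$. Then $p'=p_1'*p_1^{*(m-1)}$, and $p_1'$ is a signed measure of total variation $1$. Hence $M_1\le 1$.

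The main step is the frequency-witness condition with $A=4\alpha$, which makes $(1-\alpha)A-2\alpha\ge \alpha$. Fix $v$ with $|v|\ge\eps$ and take $\omega=\arcsin(A)/(\pi |v|)$ (with the sign of $v$), so that $|\sin(\pi v\omega)|=A$. Then $|\omega|\le \arcsin(4\alpha)/(\pi\eps)=\Theta(\alpha/\eps)$, which may be large because $\eps<\alpha$. At such $\omega$, $\sinc(2\omega)$ could vanish, so this choice alone does not give a lower bound on $|\phi_D|$. I would therefore use the freedom to move $\omega$ within the admissible set, as in \Cref{cor:uniform_mean}.

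Any $\omega$ with $\dist(v\omega,\Z)\in[\arcsin(A)/\pi,\,1/2]$ satisfies $|\sin(\pi v\omega)|\ge A$. Consider the interval $I_v=[\arcsin(A)/(\pi|v|),\,1/(2|v|)]$, and split into two cases.

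\emph{Case $|v|\ge 1$.} Then $|\omega|\le 1/2\cdot$ (a small constant) for the left endpoint $\omega_0=\arcsin(A)/(\pi|v|)\le O(\alpha)\le 1/8$. There $\sinc(2\omega_0)\ge 2/\pi$, so $|\phi_D(\omega_0)|\ge(2/\pi)^m$.

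\emph{Case $\eps\le|v|<1$.} The interval $I_v$ has length $\ge 1/(4|v|)\ge 1/4$. Since $\sin(2\pi\omega)$ has period $1$, a window of length $1/4$ contains a point $\omega_1$ with $|\sin(2\pi\omega_1)|\ge \sin(\pi/8)$. Indeed, the set where $|\sin(2\pi\omega)|<\sin(\pi/8)$ consists of intervals of length $1/8$. For this $\omega_1\le 1/(2|v|)$ we get
\[
|\phi_D(\omega_1)|\ge\Bigl(\frac{\sin(\pi/8)}{2\pi\omega_1}\Bigr)^m\ge (c|v|)^m\ge (c\eps)^m .
\]

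To reach the stated $(\eps/\alpha)^m$ scaling, I would refine this by keeping $\omega$ near the left end of $I_v$. We need $\dist(v\omega,\Z)\ge\arcsin(A)/\pi\approx 4\alpha/\pi$, so we can search in $[\arcsin(A)/(\pi|v|),\,2\arcsin(A)/(\pi|v|)]$. This window has length $\Theta(\alpha/|v|)$.

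If this length is at least $1/4$, i.e.\ $|v|\lesssim\alpha$, the periodicity argument above yields $\omega_1=O(\alpha/|v|)$ with $|\sinc(2\omega_1)|\gtrsim |v|/\alpha$. This gives $\delta\gtrsim (c\eps/\alpha)^m$.

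If instead $|v|\gtrsim\alpha$, the window lies in $|\omega|=O(1)$. There I would pick $\omega$ avoiding the zeros of $\sinc(2\cdot)$ at half-integers. The window is either inside $[0,1/4]$, where $\sinc\ge 2/\pi$, or it is long enough to avoid the zeros by a constant fraction. In either case $|\phi_D|\ge c^m\ge (c\eps/\alpha)^m$.

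Getting this case analysis clean, uniformly in $|v|$, is the main obstacle. The fallback is the cruder $\delta=(c\eps)^m$ from the two cases above.

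Finally, plug into \Cref{thm:upper} with $\delta=(c\eps/\alpha)^m$, $A=4\alpha$, $L=O(m)$, $M_1\le1$, $R=O(\sqrt m)$. The sample bound is
\[
\widetilde O\bigl(1/(\alpha\delta)^2\bigr)=\widetilde O\bigl(\alpha^{-2}(O(\alpha/\eps))^{2m}\bigr).
\]
The logarithmic factor, which is $O(m\log(\alpha/\eps))$, is absorbed into the $\widetilde O$.
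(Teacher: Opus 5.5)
Your proposal is correct and takes essentially the paper's route. Both use $A=4\alpha$ and the base frequency $\omega_v^\star=\arcsin(A)/(\pi v)$ directly when $|v|\gtrsim\alpha$, since it then lies in $[0,1/4]$ where $\sinc(2\omega)\ge 2/\pi$. For smaller $|v|$, both search a window just to the right of $\omega_v^\star$ on which $|\sin(\pi v\omega)|\ge A$ persists, use periodicity of $\sin(2\pi\omega)$, and bound $|\omega|=O(\alpha/|v|)$ to get $|\phi_{D^{(m)}}(\omega)|\ge (c|v|/\alpha)^m$; the only difference is that the paper uses $[\omega_v^\star,\omega_v^\star+\ell_v]$ with $\ell_v\ge 2/3$ where you use $[a,2a]$.

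The case analysis you flag as the main obstacle already closes in your own sketch. If $a<1/4$, the left endpoint gives $\sinc(2a)\ge 2/\pi$; otherwise $[a,2a]$ has length at least $1/4$. So you should drop the $(c\eps)^m$ fallback, which would not yield the stated rate.
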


\begin{proof}
As in the previous corollaries, since $\alpha<1/8$ using $O(1)$ samples and polynomial time
we obtain a robust estimate of $\mu$ (\Cref{fact:huber_estimator}) to absolute error $O(\sqrt{m})$
(since $\Var(\sum_{i=1}^m U_i)=m/3$), and recenter so that $|\mu|=O(\sqrt{m})$.

For $D^{(m)}$ we have, for every $\omega\in\R$,
\[
\phi_{D^{(m)}}(\omega)
=\prod_{i=1}^m \E\!\left[e^{2\pi i\omega U_i}\right]
=\left(\frac{\sin(2\pi\omega)}{2\pi\omega}\right)^m
=\mathrm{sinc}(2\omega)^m.
\]
Write $s(\omega)\eqdef \mathrm{sinc}(2\omega)$. Since $\sup_{\omega\in\R}|s'(\omega)|=O(1)$ and $|s(\omega)|\le 1$,
\[
\sup_{\omega\in\R}\big|\phi'_{D^{(m)}}(\omega)\big|
=\sup_{\omega\in\R}\big|m\,s(\omega)^{m-1}s'(\omega)\big|
=O(m),
\]
so $\phi_{D^{(m)}}$ is $L$--Lipschitz with $L=O(m)$.

Fix any $v\in\R$ with $|v|\ge \eps$ and set $A\eqdef 4\alpha$ (so $(1-\alpha)A-2\alpha=2\alpha(1-2\alpha)>0$).
Let
\[
\omega_v^\star \;\eqdef\; \frac{\arcsin(A)}{\pi v},
\qquad\text{so that}\qquad
\big|\sin(\pi v\omega_v^\star)\big|=A.
\]
We now show that for every $|v|\ge\eps$ there exists $\omega$ with
\[
|\sin(\pi v\omega)|\ge A
\qquad\text{and}\qquad
|\phi_{D^{(m)}}(\omega)|\ge \left(O\left(\frac{|v|}{A}\right)\right)^m.
\]
This implies the $(\eps,A,\delta)$-frequency-witness condition with
\[
\delta \geq \left(O\left(\frac{\eps}{\alpha}\right)\right)^m.
\]

If $|v|\ge 4A$, then
\[
|\omega_v^\star|=\frac{\arcsin(A)}{\pi|v|}\le \frac{A}{|v|}\le \frac14,
\]
and using $|\sin t|\ge (2/\pi)|t|$ for $|t|\le \pi/2$ we get
\[
\Big|\frac{\sin(2\pi\omega_v^\star)}{2\pi\omega_v^\star}\Big|
\ge \frac{2}{\pi},
\qquad\text{hence}\qquad
|\phi_{D^{(m)}}(\omega_v^\star)|\ge (2/\pi)^m.
\]

If $|v|<4A$. 
Let $\theta\eqdef \arcsin(A)\in(0,\pi/2)$ and define
\[
\ell_v \;\eqdef\; \min\Big\{1,\ \frac{\frac\pi2-\theta}{\pi|v|}\Big\}.
\]
Consider the interval $I_v\eqdef[\omega_v^\star,\ \omega_v^\star+\ell_v]$.
For any $\omega=\omega_v^\star+t$ with $t\in[0,\ell_v]$ we have
\[
\pi|v|t \le \frac\pi2-\theta
\quad\Longrightarrow\quad
|\sin(\pi v\omega)|
= \sin(\theta+\pi|v|t)
\ge \sin(\theta)=A,
\]
so the \emph{entire} interval $I_v$ satisfies $|\sin(\pi v\omega)|\ge A$.

Now choose $\omega\in I_v$ maximizing $|\sin(2\pi\omega)|$ over $I_v$.
A simple periodicity/compactness argument gives the quantitative bound
\[
\max_{\omega\in I_v}|\sin(2\pi\omega)| \;\ge\;
\begin{cases}
1, & \ell_v\ge \tfrac12,\\
\sin(\pi\ell_v)\ \ge\ 2\ell_v, & \ell_v<\tfrac12,
\end{cases}
\]
and in either case
$|\sin(2\pi\omega)| \;\ge\; \min\{1,\,2\ell_v\}$.
Furthermore, $A=4\alpha<1/2$ implies $\theta=\arcsin(A)\le \arcsin(1/2)=\pi/6$, thus $\pi/2-\theta \ge \pi/3$.
Hence, if $|v|\le 4A<1/2$ then $(\pi/2-\theta)/(\pi|v|)\ge (\pi/3)/(\pi\cdot 1/2)=2/3$, so $\ell_v=\min\{1,(\pi/2-\theta)/(\pi|v|)\}\ge 2/3$.
Also $|\omega|\le |\omega_v^\star|+\ell_v\le |\omega_v^\star|+1=O(\omega_v^*)$, using $|v|<4A$ so $|\omega_v^\star|\gtrsim 1$ and hence
$|\omega_v^\star|+1=O(|\omega_v^\star|)$.

Therefore
\[
|s(\omega)|
=\Big|\frac{\sin(2\pi\omega)}{2\pi\omega}\Big|
\;\ge\;
\frac{\min\{1,2\ell_v\}}{2\pi(|\omega_v^\star|+1)}\gtrsim \frac{|v|}{A}.
\]
So
$$|\phi_{D^{(m)}}(\omega)|=|s(\omega)|^m \geq \Big(\frac{c|v|}{A}\Big)^m.$$
For a sufficiently small constant $c>0$.

Combining the two cases with the fact that $\eps/\alpha\leq 1$ we have that for every $|v|\ge\eps$ there is  some $\omega$ with
$|\sin(\pi v\omega)|\ge A$ and
\[
|\phi_{D^{(m)}}(\omega)| \geq\;\Big(\frac{c\eps}{\alpha}\Big)^m.
\]
Thus $D^{(m)}$ satisfies the $(\eps,A,\delta)$-frequency-witness condition with
$\delta\geq(\Omega(\eps/\alpha))^m$.

Now let $p_U$ be the density of the uniform distribution on $[-1,1]$.
Since $(f*g)'=f'*g=f*g'$, we have $(p_U^{*m})'=(p_U^{*(m-1)}*p_U)'=p_U^{*(m-1)}*p_U'$.
Then $\|(p_U^{*m})'\|_1=\|p_U^{*(m-1)}*p_U'\|_1\le \|p_U^{*(m-1)}\|_1\,\|p_U'\|_1=1\cdot\|p_U'\|_1\le 1$.
This gives
$M_1\le 1$ for the density of $D^{(m)}$.
Applying \Cref{thm:upper}  yields that with
\[
n=\widetilde{O}\!\left(\frac{1}{(((1-\alpha)A-2\alpha)^2)\,\delta^2}\right)
=\widetilde{O}\!\left(\frac{C^m}{\alpha^2}\cdot\frac{\alpha^{2m}}{\eps^{2m}}\right)
=\widetilde{O}\!\left(\frac{C^m\alpha^{2m-2}}{\eps^{2m}}\right),
\]
for a sufficiently large constant $C>0$ universal constant, \Cref{alg:freq-tournament} returns $\widehat{\mu}$ with $|\widehat{\mu}-\mu|\le\eps$
with probability at least $2/3$.
\end{proof}

Moreover, using \Cref{thm:lowerbound1d} yields the following up to polynomial factors tight lower bounds.

\begin{corollary}[Standard Gaussian lower bound]\label{cor:lb-std-gaussian}
Fix $\alpha,\eps \in (0,1/2)$, $\eps<\alpha$ with $\eps/\alpha<c$ for a sufficiently small universal constant $c$. 
Then any algorithm that estimates the mean of a standard Gaussian to absolute error $\eps$ with probability at least $2/3$ from $\alpha$-mean-shift contaminated samples (as in \Cref{def:cont}), must use at least
$\Omega(e^{\Omega((\alpha/\eps)^2)})\;,
$
samples.
\end{corollary}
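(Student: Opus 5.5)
The plan is to apply \Cref{thm:lowerbound1d} to $D=\cN(0,1)$. This requires checking its two hypotheses with $\delta=e^{-\Omega((\alpha/\eps)^2)}$ and $\sigma=O(1)$, and then simplifying the resulting bound.

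The tail condition is immediate. Since $\E_{x\sim\cN(0,1)}[|x|]=\sqrt{2/\pi}$, Markov's inequality gives $\Pr[|x|\ge R]\le \sigma/R$ with $\sigma=\sqrt{2/\pi}$.

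For the Fourier condition, recall that $\phi_D(\omega)=e^{-2\pi^2\omega^2}$. The key observation is geometric: if $\dist(\eps\omega,\Z)>c\alpha$, then $\eps\omega\notin[-c\alpha,c\alpha]$, so $|\omega|>c\alpha/\eps$. Hence
\[
\|\phi_D\Ind\{\dist(\eps\omega,\Z)>c\alpha\}\|_{L^2}^2\le \int_{|\omega|>c\alpha/\eps} e^{-4\pi^2\omega^2}\,d\omega\lesssim e^{-4\pi^2 (c\alpha/\eps)^2}
\]
by the standard Gaussian tail bound, where we use that $c\alpha/\eps$ is large because $\eps/\alpha$ is small. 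So we may take $\delta=e^{-c'(\alpha/\eps)^2}$ for some constant $c'>0$. The hypothesis $\sigma>C\delta$ then holds once $\eps/\alpha$ is below a small enough constant.

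Plugging into \Cref{thm:lowerbound1d} gives a lower bound of $\Omega\bigl(1/((\delta\sigma)^{1/3}+(\eps/\alpha)^3(\delta/\sigma)^2)\bigr)$. The first term is $e^{-\Omega((\alpha/\eps)^2)}$. The second term is at most $(\eps/\alpha)^3\delta^2/\sigma^2\le \delta^2/\sigma^2$, and since $\delta<1$ and $\sigma=\Theta(1)$ this is at most $O(\delta^{1/3})$. The denominator is therefore $e^{-\Omega((\alpha/\eps)^2)}$, and the sample lower bound is $e^{\Omega((\alpha/\eps)^2)}$.

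The main point needing care is the hidden constant $c$ in condition (1) of \Cref{thm:lowerbound1d}. The exponent constant $c'$ depends on it, so the lower bound holds only with some unspecified $\Omega(\cdot)$ in the exponent. That is exactly the form claimed, so nothing more is needed.
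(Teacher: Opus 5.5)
Your proposal is correct and follows the paper's proof essentially step for step: restrict to $\{|\omega|>c\alpha/\eps\}$, bound the Gaussian tail of $e^{-4\pi^2\omega^2}$, take $\sigma=\sqrt{2/\pi}$ via Markov, check $\sigma>C\delta$, and plug into \Cref{thm:lowerbound1d}. Dropping the $\sqrt{\eps/\alpha}$ prefactor is harmless. Your explicit check that the second term in the denominator is $O(\delta^{1/3})$ (because $\delta^2\le\delta^{1/3}$) fills in a step the paper leaves implicit.
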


\begin{proof}
The characteristic function of $D=\mathcal N(0,1)$ is 
$$
\phi_D(\omega)=\exp(-2\pi^2\omega^2).
$$

Let $S\coloneqq\{\omega:\dist(\eps\omega,\Z)>c\alpha\}$ (for the constant $c>0$ from
\Cref{thm:lowerbound1d}). Note that $S\subseteq\{|\omega|>c\alpha/\eps\}$, since
$|\omega|\le c\alpha/\eps$ implies $\dist(\eps\omega,\Z)\le |\eps\omega|\le c\alpha$.
Hence
\begin{align*}
\bigl\|\phi_D(\omega)\Ind\{\omega\in S\}\bigr\|_{L^2}^2
&\le \int_{|\omega|>c\alpha/\eps} e^{-4\pi^2\omega^2}\,d\omega\\
&= 2\int_{c\alpha/\eps}^{\infty} e^{-4\pi^2\omega^2}\,d\omega
\lesssim \frac{\eps}{\alpha} \exp\!\Big(-4\pi^2c^2\Big(\frac{\alpha}{\eps}\Big)^2\Big)\;.
\end{align*}
Therefore,
$$
\delta \;\coloneqq\;\bigl\|\phi_D(\omega)\Ind\{\omega\in S\}\bigr\|_{L^2}
\;\lesssim\; \sqrt{\frac{\eps}{\alpha}}\,
\exp\!\Big(-2\pi^2c^2\Big(\frac{\alpha}{\eps}\Big)^2\Big).
$$

For Condition (2) of \Cref{thm:lowerbound1d}, let $\sigma\eqdef \E_{x\sim D}[|x|]=\sqrt{2/\pi}$.
By Markov's inequality, for all $R>0$,
$$
\Pr_{x\sim D}[|x|\ge R]\le \sigma/R.
$$
Also, since $\delta$ is exponentially small in $(\alpha/\eps)^2$ and $\sigma=\Theta(1)$, the requirement $\sigma>C\delta$
in \Cref{thm:lowerbound1d} holds (for the universal constant $C$).

Applying \Cref{thm:lowerbound1d} gives
$$
n=\Omega\!\left(\frac{1}{(\delta\sigma)^{1/3}\;+\;(\eps/\alpha)^3(\delta/\sigma)^2}\right)
=\Omega\!\left(\frac{1}{\delta^{1/3}}\right)
=\Omega\!\left(e^{\Omega((\alpha/\eps)^2)}\right),
$$
where the last step uses the above bound on $\delta$ (the prefactor $\sqrt{\eps/\alpha}$ is absorbed in the exponent).

\end{proof}

\begin{corollary}[Laplace lower bound]\label{cor:lb-laplace}
Fix $\alpha,\eps \in (0,1/2)$, $\eps<\alpha$ with $\eps/\alpha<c$ for a sufficiently small universal constant $c$. 
Let $D=\mathrm{Lap}(0,1)$ be the standard Laplace distribution with density
$p(x)=\tfrac12 e^{-|x|}$.
Then any algorithm that estimates the mean of a standard Laplace to absolute error $\eps$
with probability at least $2/3$ from $\alpha$-mean-shift contaminated samples (as in \Cref{def:cont}),
must use at least
$\Omega\left(\sqrt{\alpha/\eps}\right)$
samples.
\end{corollary}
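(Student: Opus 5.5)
The plan is to apply \Cref{thm:lowerbound1d} directly, following the template of \Cref{cor:lb-std-gaussian}: compute the $L^2$ Fourier mass of $\phi_D$ off the lattice bands, check the tail condition, and plug in. For $D=\mathrm{Lap}(0,1)$ with density $p(x)=\tfrac12 e^{-|x|}$, the characteristic function in our convention is
\[
\phi_D(\omega)=\frac{1}{1+4\pi^2\omega^2}.
\]
Let $S\eqdef\{\omega:\dist(\eps\omega,\Z)>c\alpha\}$. As in the Gaussian case, $S\subseteq\{|\omega|>c\alpha/\eps\}$. Hence
\[
\delta^2\eqdef\|\phi_D\Ind_S\|_{L^2}^2\le 2\int_{c\alpha/\eps}^\infty \frac{d\omega}{(1+4\pi^2\omega^2)^2}\lesssim \int_{c\alpha/\eps}^\infty \omega^{-4}\,d\omega\lesssim (\eps/\alpha)^3 .
\]
This gives $\delta\lesssim(\eps/\alpha)^{3/2}$.

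For the tail condition, I take $\sigma\eqdef\E_D[|x|]=1$, so Markov gives $\Pr[|x|\ge R]\le\sigma/R$. Since $\eps/\alpha<c$ is sufficiently small, $\delta$ is small, so $\sigma>C\delta$ holds. Next I evaluate the two denominators in \Cref{thm:lowerbound1d}. The first is $(\delta\sigma)^{1/3}\lesssim(\eps/\alpha)^{1/2}$. The second is $(\eps/\alpha)^3(\delta/\sigma)^2\lesssim(\eps/\alpha)^6$, which is dominated by $(\eps/\alpha)^{1/2}$. The theorem therefore yields $n=\Omega\big((\alpha/\eps)^{1/2}\big)$.

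The main obstacle is that the theorem's bound must be applied with $\delta$ taken as an upper bound on the $L^2$ mass, not its exact value. This is fine: condition (1) is monotone in $\delta$, so I may set $\delta$ equal to the bound $C'(\eps/\alpha)^{3/2}$, and this only weakens the conclusion in the right direction. One could sharpen the estimate by summing the band-complement pieces more carefully, but the crude tail integral already suffices.
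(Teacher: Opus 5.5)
Your proposal is correct and follows the paper's proof essentially step for step. It uses the same characteristic function $\phi_D(\omega)=1/(1+4\pi^2\omega^2)$, the same inclusion $S\subseteq\{|\omega|>c\alpha/\eps\}$ giving $\delta\lesssim(\eps/\alpha)^{3/2}$, the same choice $\sigma=\E_D[|x|]=1$ with Markov, and the same observation that the $(\delta\sigma)^{1/3}$ term dominates. Your remark that you may take $\delta$ equal to the upper bound itself is a harmless tidying of the paper's step, which defines $\delta$ as the exact $L^2$ norm and then substitutes the bound; this works because condition (1) is monotone and the theorem's bound is decreasing in $\delta$.
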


\begin{proof}
The characteristic function of $D=\mathrm{Lap}(0,1)$ is
$$
\phi_D(\omega)=\int_{\R}\tfrac12 e^{-|x|}e^{2\pi i\omega x}\,dx
=\frac{1}{1+(2\pi\omega)^2}\;.
$$

Let $S\coloneqq\{\omega:\dist(\eps\omega,\Z)>c\alpha\}$ (for the constant $c>0$ from
\Cref{thm:lowerbound1d}). As in the Gaussian proof, $S\subseteq\{|\omega|>c\alpha/\eps\}$ and therefore
\begin{align*}
\bigl\|\phi_D(\omega)\Ind\{\omega\in S\}\bigr\|_{L^2}^2
&\le \int_{|\omega|>c\alpha/\eps} \frac{1}{\bigl(1+(2\pi\omega)^2\bigr)^2}\,d\omega \\
&=2\int_{c\alpha/\eps}^{\infty} \frac{1}{\bigl(1+(2\pi\omega)^2\bigr)^2}\,d\omega
\;\lesssim\;\int_{c\alpha/\eps}^{\infty}\frac{d\omega}{\omega^4}
\;\lesssim\;\Big(\frac{\eps}{\alpha}\Big)^3.
\end{align*}
Hence,
$$
\delta \;\coloneqq\;\bigl\|\phi_D(\omega)\Ind\{\omega\in S\}\bigr\|_{L^2}
\;\lesssim\;\Big(\frac{\eps}{\alpha}\Big)^{3/2}.
$$

For Condition (2) of \Cref{thm:lowerbound1d}, let $\sigma\eqdef \E_{x\sim D}[|x|]=1$.
By Markov's inequality, for all $R>0$,
$$
\Pr_{x\sim D}[|x|\ge R]\le \sigma/R.
$$
Also, since $\eps<\alpha$ we have $\delta\lesssim (\eps/\alpha)^{3/2}<1/c$, so the requirement $\sigma>C\delta$
in \Cref{thm:lowerbound1d} holds (for the universal constant $C$).

Applying \Cref{thm:lowerbound1d} gives
$$
n=\Omega\!\left(\frac{1}{(\delta\sigma)^{1/3}\;+\;(\eps/\alpha)^3(\delta/\sigma)^2}\right).
$$
With $\sigma=1$ and $\delta\lesssim (\eps/\alpha)^{3/2}$, we have $(\delta\sigma)^{1/3}\lesssim (\eps/\alpha)^{1/2}$
and $(\eps/\alpha)^3(\delta/\sigma)^2\lesssim (\eps/\alpha)^6$, hence the first term dominates, yielding
$$
n=\Omega\!\left(\delta^{-1/3}\right)=\Omega\!\left(\Big(\frac{\alpha}{\eps}\Big)^{1/2}\right),
$$
as claimed.

\end{proof}
\begin{corollary}[Uniform lower bound]\label{cor:lb-uniform}
Fix $\alpha,\eps \in (0,1/2)$, $\eps<\alpha$ with $\eps/\alpha<c$ for a sufficiently small universal constant $c$. Let $D=\mathrm{Unif}([-1,1])$ and let $D_\mu$ denote its shift by an unknown mean $\mu\in\R$.
Then any algorithm that estimates the mean to absolute error $\eps$ with probability at least $2/3$
from $\alpha$-mean-shift contaminated samples (as in \Cref{def:cont}) must use at least
$\Omega\!\left((\alpha/\eps)^{1/6}\right)$ samples.
\end{corollary}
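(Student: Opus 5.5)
We apply \Cref{thm:lowerbound1d} with $D=\mathrm{Unif}([-1,1])$, following the Gaussian and Laplace corollaries. The work is to bound the two inputs $\delta$ and $\sigma$, then read off the rate.

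\emph{Tail condition.} Since $|x|\le 1$ almost surely, $\sigma\eqdef\E_{x\sim D}[|x|]=1/2$ satisfies $\Pr_{x\sim D}[|x|\ge R]\le \sigma/R$ by Markov. This also gives $\sigma>C\delta$ once $\delta$ is shown to be small, which holds because $\eps/\alpha<c$.

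\emph{Fourier condition.} Here $\phi_D(\omega)=\sinc(2\omega)$ and $|\phi_D(\omega)|\le \min\{1,1/(2\pi|\omega|)\}$. Let $S\eqdef\{\omega:\dist(\eps\omega,\Z)>c\alpha\}$. As in the Gaussian proof, $S\subseteq\{|\omega|>c\alpha/\eps\}$, and therefore
\[
\delta^2\eqdef\|\phi_D\Ind_S\|_{L^2}^2\le \int_{|\omega|>c\alpha/\eps}\frac{d\omega}{4\pi^2\omega^2}\lesssim \frac{\eps}{\alpha},
\]
so $\delta\lesssim(\eps/\alpha)^{1/2}$. This bound is crude: it discards the band structure of $S$ and keeps only the $1/\omega^2$ decay of $\sinc^2$. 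It should nonetheless give the stated exponent.

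\emph{Plugging in.} With $\sigma=\Theta(1)$ and $\delta\lesssim(\eps/\alpha)^{1/2}$, the first term of the bound is $(\delta\sigma)^{1/3}\lesssim(\eps/\alpha)^{1/6}$. The second term is $(\eps/\alpha)^3(\delta/\sigma)^2\lesssim(\eps/\alpha)^4$, which is smaller. So the first term dominates, and \Cref{thm:lowerbound1d} gives $n=\Omega((\alpha/\eps)^{1/6})$.

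\emph{Main obstacle: the density hypothesis.} \Cref{lem:Fourier2TV}, used inside the proof of \Cref{thm:lowerbound1d}, needs $P_0,P_1$ to have densities. Convolving the bounded density of $D$ with the discrete measures $E_0,E_1$ gives $L^1\cap L^\infty$ densities, so Plancherel applies and the argument goes through unchanged. I would check this point carefully but expect no real difficulty.
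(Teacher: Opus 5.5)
Your proposal is correct and follows the paper's proof essentially line by line. Like the paper, you use the inclusion $S\subseteq\{|\omega|>c\alpha/\eps\}$ and the $1/\omega^2$ decay of $\phi_D$ to get $\delta\lesssim(\eps/\alpha)^{1/2}$, observe that $(\eps/\alpha)^3(\delta/\sigma)^2\lesssim(\eps/\alpha)^4$ is dominated by $(\delta\sigma)^{1/3}$, and conclude $\Omega((\alpha/\eps)^{1/6})$. The only differences are cosmetic: you take $\sigma=\E|x|=1/2$ via Markov where the paper takes $\sigma=1$ from $|x|\le 1$, and your check that $P_0,P_1$ have $L^1\cap L^\infty$ densities (so \Cref{lem:Fourier2TV} applies) is a point the paper leaves implicit.
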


\begin{proof}
For $D=\mathrm{Unif}([-1,1])$ we have
$$
\phi_D(\omega)=\E_{x\sim D}\!\left[e^{2\pi i\omega x}\right]
=\frac{\sin(2\pi\omega)}{2\pi\omega}.
$$
Let $S\coloneqq\{\omega:\dist(\eps\omega,\Z)>c\alpha\}$ (for the constant $c>0$ from \Cref{thm:lowerbound1d}).
As before, $S\subseteq\{|\omega|>c\alpha/\eps\}$. Hence,
\begin{align*}
\bigl\|\phi_D(\omega)\Ind\{\omega\in S\}\bigr\|_{L^2}^2
&\le 2\int_{c\alpha/\eps}^{\infty}\left(\frac{\sin(2\pi\omega)}{2\pi\omega}\right)^2 d\omega\\
&\le 2\int_{c\alpha/\eps}^{\infty}\frac{d\omega}{(2\pi\omega)^2}
\lesssim \frac{\eps}{\alpha}.
\end{align*}
Therefore, taking
$$
\delta \;\coloneqq\;\bigl\|\phi_D(\omega)\Ind\{\omega\in S\}\bigr\|_{L^2}
\;\lesssim\;\Big(\frac{\eps}{\alpha}\Big)^{1/2}
$$
verifies Condition (1) of \Cref{thm:lowerbound1d}.

For Condition (2), since $|x|\le 1$ almost surely under $x\sim D$, we have for all $R>0$,
$$
\Pr_{x\sim D}[|x|\ge R]\le \frac{1}{R},
$$
so the tail condition holds with $\sigma\eqdef 1$ (and in particular $\sigma>C\delta$ in the regime where $\eps/\alpha<c$ for a sufficiently small constant $c>0$).

Applying \Cref{thm:lowerbound1d} (with $\sigma=1$) yields
$$
n=\Omega\!\left(\frac{1}{\delta^{1/3}+(\eps/\alpha)^3\delta^2}\right)
=\Omega\!\left(\delta^{-1/3}\right)
=\Omega\!\left(\Big(\frac{\alpha}{\eps}\Big)^{1/6}\right),
$$
where we used $\eps<\alpha$ to note $(\eps/\alpha)^3\delta^2\lesssim (\eps/\alpha)^4\le \delta^{1/3}$.
\end{proof}

\begin{corollary}[Sum of $m$ Uniforms lower bound]\label{cor:lb-sum-uniform}
Fix $\alpha,\eps \in (0,1/2)$, $\eps<\alpha$ with $\eps/\alpha<c$ for a sufficiently small universal constant $c$.
Let $U_1,\dots,U_m$ be i.i.d.\ $\mathrm{Unif}([-1,1])$, let $D^{(m)}$ be the distribution of $\sum_{i=1}^m U_i$,
and let $D^{(m)}_\mu$ denote its shift by an unknown mean $\mu\in\R$.
Then any algorithm that estimates the mean to absolute error $\eps$ with probability at least $2/3$
from $\alpha$-mean-shift contaminated samples (as in \Cref{def:cont}) must use at least
$\Omega\!\left((\alpha/\eps)^{(2m-1)/6}\right)$
samples.
\end{corollary}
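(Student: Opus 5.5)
Following the pattern of \Cref{cor:lb-uniform}, the plan is to verify the two conditions of \Cref{thm:lowerbound1d} for $D=D^{(m)}$ with $\sigma=O(m)$ (or $\sqrt{m}$) and $\delta\lesssim(\eps/\alpha)^{(2m-1)/2}$, and then read off $n=\Omega(\delta^{-1/3})$. The characteristic function is $\phi_{D^{(m)}}(\omega)=\sinc(2\omega)^m$, so $|\phi_{D^{(m)}}(\omega)|\le \min\{1,(2\pi|\omega|)^{-m}\}$.

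For Condition (1), set $S\coloneqq\{\omega:\dist(\eps\omega,\Z)>c\alpha\}$. As in the previous corollaries, $S\subseteq\{|\omega|>c\alpha/\eps\}$. Hence
\[
\bigl\|\phi_{D^{(m)}}\Ind_S\bigr\|_{L^2}^2\le 2\int_{c\alpha/\eps}^\infty (2\pi\omega)^{-2m}\,d\omega\lesssim \frac{1}{(2\pi)^{2m}(2m-1)}\Bigl(\frac{\eps}{c\alpha}\Bigr)^{2m-1}.
\]
This gives $\delta\lesssim (C'\eps/\alpha)^{(2m-1)/2}$, with a constant raised to the power $m$ that we absorb; since $2\pi c$ may be smaller than $1$, the constant is $(2\pi c)^{-(2m-1)/2}$, which is only an $O(1)^m$ factor. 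I would state the bound with such an $m$-dependent constant, consistent with how the upper-bound corollary hides $C^m$.

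For Condition (2), $\sigma\coloneqq\E|\sum U_i|\le\sqrt{\Var}=\sqrt{m/3}$, and Markov's inequality gives the tail bound. Since $\sigma\ge\Omega(1)$ for $m\ge1$ and $\delta$ is small when $\eps/\alpha<c$, the requirement $\sigma>C\delta$ holds.

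Applying \Cref{thm:lowerbound1d}, the second denominator term satisfies $(\eps/\alpha)^3(\delta/\sigma)^2\lesssim(\eps/\alpha)^{3+2m-1}\le\delta^{1/3}$, so $n=\Omega(1/(\delta\sigma)^{1/3})=\Omega\bigl((\alpha/\eps)^{(2m-1)/6}\bigr)$, treating $m$ as a constant (the factor $\sigma^{-1/3}=m^{-1/6}$ and the $C^m$ constants are hidden in $\Omega$). The only delicate point is bookkeeping these $m$-dependent constants together with the choice of $c$; the rest is routine.
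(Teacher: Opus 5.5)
Your proposal is correct and follows the paper's proof essentially step for step: bound $|\phi_{D^{(m)}}(\omega)|\le(2\pi|\omega|)^{-m}$ on $S\subseteq\{|\omega|>c\alpha/\eps\}$ to get $\delta\lesssim(\eps/\alpha)^{(2m-1)/2}$, take $\sigma\le\sqrt{m/3}$ via Markov's inequality, and read off $n=\Omega(\delta^{-1/3})$ from \Cref{thm:lowerbound1d}. Your explicit tracking of the $(2\pi c)^{-(2m-1)/2}$ and $\sigma^{-1/3}$ factors is slightly more careful than the paper, which silently absorbs these $m$-dependent constants into $\lesssim$ and $\Omega$.
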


\begin{proof}
For $D^{(m)}$ we have
\[
\phi_{D^{(m)}}(\omega)=\left(\frac{\sin(2\pi\omega)}{2\pi\omega}\right)^m.
\]
Let $S\coloneqq\{\omega:\dist(\eps\omega,\Z)>c\alpha\}$ (for the constant $c>0$ from \Cref{thm:lowerbound1d}).
We have $S\subseteq\{|\omega|>c\alpha/\eps\}$.
Using $|\sin(2\pi\omega)|\le 1$, for $|\omega|>0$ we have
\[
|\phi_{D^{(m)}}(\omega)|\le \left(\frac{1}{2\pi|\omega|}\right)^m,
\]
and therefore letting $T\eqdef c\alpha/\eps$,
\begin{align*}
\bigl\|\phi_{D^{(m)}}(\omega)\Ind\{\omega\in S\}\bigr\|_{L^2}^2
&\le 2\int_{T}^{\infty}\left(\frac{1}{2\pi\omega}\right)^{2m}d\omega
= 2\left(\frac{1}{2\pi}\right)^{2m}\frac{1}{(2m-1)T^{2m-1}}
\;\lesssim\;\left(\frac{\eps}{\alpha}\right)^{2m-1}\;
\end{align*}
Hence,
\[
\delta \;\coloneqq\;\bigl\|\phi_{D^{(m)}}(\omega)\Ind\{\omega\in S\}\bigr\|_{L^2}
\;\lesssim\;\left(\frac{\eps}{\alpha}\right)^{(2m-1)/2}.
\]

For Condition (2) of \Cref{thm:lowerbound1d},  we have $\Var_{D^{(m)}}(x)=m/3$,
so $\sigma\eqdef \E_{D^{(m)}}[|x|]\le \sqrt{\E_{D^{(m)}}[x^2]}=\sqrt{m/3}$, and Markov gives
$\Pr_{D^{(m)}}[|x|\ge R]\le \sigma/R$ for all $R>0$. Since $\delta$ is exponentially small in $m$ ($\eps/\alpha<1$) and $\sigma=\Theta(\sqrt m)$,
the requirement $\sigma>C\delta$ holds when $\eps/\alpha$ is less than a sufficiently small constant.

Applying \Cref{thm:lowerbound1d} yields
\[
n=\Omega\!\left(\frac{1}{(\delta\sigma)^{1/3}+(\eps/\alpha)^3(\delta/\sigma)^2}\right)
=\Omega\!\left(\delta^{-1/3}\right)
=\Omega\!\left((\alpha/\eps)^{(2m-1)/6}\right),
\]
where we used $\eps<\alpha$ to note $(\eps/\alpha)^3(\delta/\sigma)^2\ll \delta^{1/3}$ for all $m\ge 1$.
\end{proof}

\end{document}